\documentclass[10pt,twocolumn,letterpaper]{article}
\usepackage[T1]{fontenc}
\usepackage[utf8]{inputenc}
\usepackage[english]{babel}
\usepackage{lmodern}
\usepackage[margin=0.8in]{geometry}
\usepackage{microtype}
\usepackage{amsmath,amssymb,mathrsfs,mathtools,amsthm,bbm}
\usepackage{centernot}
\usepackage{graphicx}
\usepackage{algorithm,algorithmic}
\usepackage{xcolor}
\usepackage[authoryear,round]{natbib}

\usepackage{enumitem}
\usepackage{booktabs,array}
\usepackage{caption}
\usepackage{xurl}
\usepackage[hidelinks]{hyperref}
\usepackage[nameinlink,noabbrev]{cleveref}
\usepackage{authblk}

\hypersetup{
pdftitle={Safe Meta-Policy Design with Risk Control}, pdfauthor={Wenbin Zhou; Michael Lingzhi Li; Shixiang Zhu}, pdfsubject={Safe meta-policy design and deployment scheduling}, pdfkeywords={policy learning, deployment scheduling, risk control}
}

\title{Safe Meta-Policy Design with Risk Control}
\author[1]{Wenbin Zhou}
\author[2]{Michael Lingzhi Li}
\author[1]{Shixiang Zhu}
\affil[1]{Heinz College of Information Systems and Public Policy, Carnegie Mellon University}
\affil[2]{Technology and Operations Management, Harvard Business School}
\date{}

\renewcommand{\d}{\mathrm{d}}

\newtheorem{theorem}{Theorem}
\newtheorem{lemma}{Lemma}
\newtheorem{proposition}{Proposition}
\newtheorem{corollary}{Corollary}
\theoremstyle{definition}

\newtheorem{assumption}{Assumption}

\theoremstyle{remark}

\crefname{theorem}{theorem}{theorems}
\Crefname{theorem}{Theorem}{Theorems}
\crefname{lemma}{lemma}{lemmas}
\Crefname{lemma}{Lemma}{Lemmas}
\crefname{proposition}{proposition}{propositions}
\Crefname{proposition}{Proposition}{Propositions}
\crefname{corollary}{corollary}{corollaries}
\Crefname{corollary}{Corollary}{Corollaries}

\newcommand*{\ie}{\emph{i.e.}{}}

\begin{document}

\twocolumn[
\begin{@twocolumnfalse}
\maketitle
\vspace{-4ex}
\begin{abstract}
Models can be retrained as new data arrive, but deploying every new version risks replacing a good policy with a worse one. We study how to plan policy updates (i.e., meta-policy) before future candidates are trained, balancing the benefits of improvement against the risk of performance regression. Our offline meta-policy maximizes expected cumulative value subject to a budget on the expected number of updates that perform worse than the policies they replace. We estimate the value and risk of possible switches from historical learning trajectories, represent an update schedule as a path in a directed acyclic graph, and select a schedule using dynamic programming. A leading-order analysis identifies the signal-to-noise ratio of policy improvement as a key driver of update frequency, waiting times, and risk allocation: clearer improvements support earlier, more frequent updates, while noisier improvements call for longer waits or greater risk expenditure. Their asymptotic rates also reveal a diminishing marginal cost of achieving greater safety over time. Experiments on synthetic and clinical trial data illustrate the performance--risk tradeoff and compare our method with alternative baselines.
\end{abstract}
\end{@twocolumnfalse}
\vspace{1em}
]

\section{Introduction}

\begin{figure*}[t]
    \centering
    \includegraphics[width=1.0\linewidth]{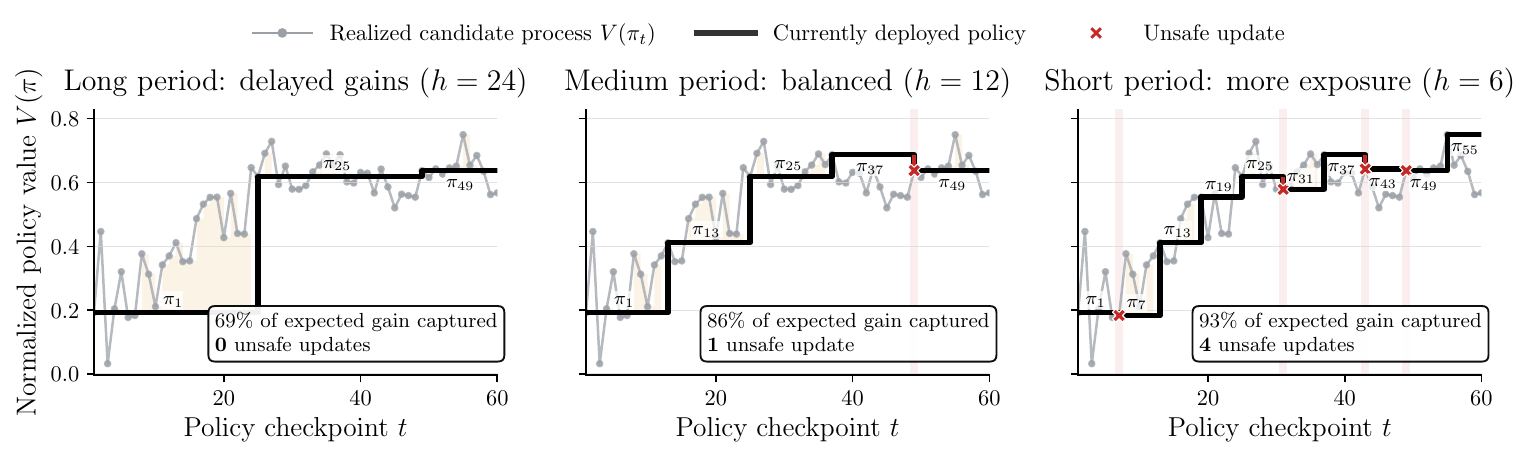}
\caption{The core tradeoff in deciding how often to update a policy. Updating slowly (left) leaves an outdated policy in place and loses value, while updating quickly (right) tracks improvements closely but risks deploying a policy worse than the one it replaces (a safety violation).}
    \label{fig:intro}
\end{figure*}

Machine learning systems are often retrained as new data become available, but a newly trained model need not be better than the one currently deployed. Organizations must therefore decide both how to learn from additional data and when to put the resulting models into use. These decisions arise in applications such as treatment recommendations, clinical decision support, lending, and pricing, where model updates change the decisions made for users \citep{qian2011performance,zhao2012estimating,athey2021policy}.

Choosing when to update involves a clear tradeoff. Although additional data may improve performance in expectation, sampling variability can produce a candidate whose performance is lower than that of the deployed model \citep{pirotta2013safe}. As illustrated in \Cref{fig:intro}, frequent updates deliver improvements sooner but create more opportunities for performance regression. Waiting reduces this exposure at the cost of delaying useful improvements. The challenge is to choose a deployment schedule that balances these gains and risks \citep{zhou2026conformalized,zhou2026calibrating}.

In many applications, this deployment process must be specified before the performance of future candidate models is known. For example, the U.S. Food and Drug Administration's guidance on Predetermined Change Control Plans allows manufacturers of AI-enabled medical devices to specify how anticipated modifications will be developed and evaluated before implementation \citep{fda2025pccp_ai_dsf}. Motivated by such settings, we study an offline planning problem in which models are trained sequentially and their deployment times are chosen in advance. We call this schedule a \emph{meta-policy} and define an update as \emph{unsafe} if it lowers average policy value in the target population. Our goal is to maximize expected cumulative deployed value subject to a budget on the expected number of unsafe updates.

Each update changes the deployed policy and therefore the risk of later updates. Deploying a candidate makes it the benchmark for the next replacement, whereas skipping it leaves the current policy in place. We therefore optimize the deployment schedule as a whole. When candidate values improve in expectation, we show that an optimal schedule deploys the newest available candidate whenever it updates. This result allows us to represent a schedule as a path through candidate training times. Each edge records the expected value earned while a policy remains deployed and the risk of replacing it at the next update. We estimate these quantities from historical learning trajectories and use dynamic programming to select the path with the highest expected value subject to the safety budget \citep{beasley1989resource}.

We also study how the learning dynamics determine the optimal schedule. Under a time-inhomogeneous It\^o diffusion model for policy value \citep{ito1951stochastic}, we derive a leading-order continuous approximation in which update intervals are inversely proportional to the squared local signal-to-noise ratio (SNR) of policy improvement: More reliable improvements support more frequent updates, while noisier improvements require longer waits. For a horizon of $T$ periods and safety budget $\epsilon$, the analysis yields regret of order $\log(T/\epsilon)/T$ and an update count of order $T/\log(T/\epsilon)$, with risk allocated adaptively according to the local learning dynamics. These rates reveal a favorable scaling law: as the horizon grows, the meta-policy can deploy an increasing number of policy updates while the average value lost from delayed deployment still vanishes. At the same time, stricter safety requirements enter only through $\log(1/\epsilon)$, so multiplicative reductions in the allowable risk lead to only additive increases in waiting time and a correspondingly mild loss in deployed value. In sum, these results show that substantial gains in safety can be achieved without sacrificing responsiveness at the same rate.

Experiments in synthetic learning environments and on International Stroke Trial data \citep{ist1997international,sandercock2011international} illustrate these tradeoffs. The proposed schedules achieve higher value under the estimated safety budget than periodic updating and alternatives that restrict update timing or risk allocation. The gains are most pronounced when the rate and variability of policy improvement change over time, making a fixed update interval less effective.

\paragraph{Related Work}

Safe policy improvement seeks to replace a baseline policy only when the available evidence supports non-inferiority or improvement. Early approaches certify a learned candidate using held-out data or construct conservative updates from policy-improvement bounds \citep{thomas2015high,laroche2019safe}, while conservative bandit methods maintain performance relative to a default policy during learning \citep{wu2016conservative}. Recent work has strengthened this perspective by optimizing lower-tail performance guarantees \citep{zhao2024absolute}, calibrating multiple tests over candidate threshold policies \citep{cho2025cspimt}, restricting deviations to well-supported decision points \citep{sharma2025decision}, and conformally regulating deviations from a safe reference policy \citep{prinster2026conformal}. While these methods primarily construct or certify an individual policy relative to a baseline, our framework takes a sequence of data-dependent candidate policies as given and jointly selects their deployment times, thereby extending safe policy improvement from a single-update decision to a sequential deployment problem.

A complementary literature studies sequential learning when changing the deployed policy is costly or when only a limited number of data-collection batches is available. Low-switching reinforcement-learning algorithms control regret while limiting policy changes \citep{bai2019switching}, and deployment-efficient methods minimize the number of distinct policies required to learn a high-performing policy \citep{matsushima2021deployment,huang2022deployment}. Recent work extends these guarantees to broad sequential-decision classes and batch-learning constraints \citep{xiong2024adaptivity}, generalized linear contextual bandits with update budgets \citep{sawarni2024generalized}, reinforcement learning with general function approximation \citep{zhao2024lowswitching}, and reward-free exploration with near-optimal deployment complexity \citep{zhang2025deployment}. Whereas this literature generally minimizes regret or sample complexity under an exogenous switching or deployment constraint, our meta-policy makes the number and timing of updates endogenous and explicitly weighs cumulative value against the probability that each deployed candidate performs worse than its incumbent, giving rise to a distinct set of structural properties.

Our computational formulation is related to the resource-constrained shortest-path problem, for which classical work developed exact dynamic programs and approximation schemes \citep{handler1980dual,beasley1989resource,hassin1992approximation,lorenz2001simple}. Recent advances include multiphase dynamic programming for acyclic networks \citep{himmich2024multiphase}, enhanced bidirectional heuristic search \citep{ahmadi2025bidirectional}, and faster label-setting methods that accommodate multiple or signed resource quantities \citep{ahmadi2025fast}. Our theoretical analysis is also connected to continuous-time approximations of stochastic-gradient algorithms, including stochastic modified equations and local diffusion models \citep{li2017stochastic,mandt2017sgd}; recent refinements study stochastic modified flows and the weak accuracy of alternative stochastic differential equation approximations \citep{gess2024flows,ankirchner2024comparison}. Unlike generic path solvers or diffusion analyses of parameter iterates, our contribution derives a problem-specific graph whose edges encode estimated deployment value and unsafe-update risk, and then transports the learning dynamics to the policy-value process to characterize the optimal number, timing, and risk allocation of policy updates.

\section{Problem Setup}
\label{subsec:DMenvironment}

We consider sequential policy learning over a finite horizon of $T \ge 2$ periods, indexed by $\mathcal{T}=\{1,\dots,T\}$. In period $t$, the decision-maker observes $(X_t,A_t,Y_t)$, where $X_t\in\mathcal{X}$ denotes pre-treatment covariates, $A_t\in\{0,1\}$ is the assigned treatment, and $Y_t\in\mathbb{R}$ is the observed outcome. Under the potential outcomes framework \citep{rubin1974estimating}, let $Y_t(0)$ and $Y_t(1)$ denote the outcomes under the two treatments, with consistency requiring $Y_t=Y_t(A_t)$. We impose the following assumption on the underlying population \citep{jia2024cramming}.
\begin{samepage}
\begin{assumption}
\label{asm:iid}
The covariates and potential outcomes are drawn i.i.d. from $\mathcal{P}$, \ie,
\[
\left\{(X_t,Y_t(0),Y_t(1))\right\}_{t=1}^{T}
\stackrel{\mathrm{i.i.d.}}{\sim}\mathcal{P}.
\]
\end{assumption}
\end{samepage}
This assumption concerns the covariates and potential outcomes; treatment assignments need not be independent across periods.

A policy is a mapping $\pi:\mathcal{X}\to\{0,1\}$, with value
\[
V(\pi)=\mathbb{E}_{\mathcal{P}}\!\left[Y(1)\mathbbm{1}\{\pi(X)=1\}+Y(0)\mathbbm{1}\{\pi(X)=0\}\right].
\]
At each period $t$, the decision-maker trains a candidate policy $\pi_t$ using the observations accumulated through that period, $ \mathcal{D}_t=\{(X_\tau,A_\tau,Y_\tau)\}_{\tau=1}^{t}. $ For a learned policy, $V(\pi_t)$ evaluates its population performance conditional on the realized training data.

Candidate training and deployment are governed separately. A \emph{meta-policy} $m$ specifies which candidate is deployed in each period: the deployed policy at time $t$ is $\pi_{m(t)}$. We consider schedules fixed in advance, with feasible class
\[
\mathcal{M}=\left\{m:\mathcal{T}\to\mathcal{T}:m(t)\le t\ \text{for all }t\in\mathcal{T}\right\}.
\]
The restriction $m(t)\le t$ ensures that a candidate is trained before it is deployed. We assume an \emph{offline} setting in which deployment decisions do not affect the data-generating process used to train future candidates. \Cref{fig:reform} illustrates an example of meta-policy.

An update is \emph{unsafe} if the newly deployed policy has lower population value than its incumbent. Given a safety budget $\epsilon>0$, we choose a meta-policy to maximize expected average deployed value while bounding the expected number of unsafe updates:
\begin{equation}
\label{eq:obj}
\begin{aligned}
\max_{m\in\mathcal{M}}\quad
&\mathbb{E}\!\left[\frac{1}{T}\sum_{t=1}^{T}V(\pi_{m(t)})\right]\\
\text{s.t.}\quad
&\mathbb{E}\!\left[\sum_{t=2}^{T}\mathbbm{1}\!\left\{V(\pi_{m(t)})<V(\pi_{m(t-1)})\right\}\right]\le\epsilon.
\end{aligned}
\end{equation}
Here, the outer expectation is over the training data $\mathcal{D}_T$, which generates the candidate-policy sequence. Appendix~\ref{app:generalize} discusses extensions beyond the binary-treatment causal setting.

\section{Proposed Algorithm}
\label{sec:reform}

We reformulate problem~\eqref{eq:obj} as a resource-constrained path problem and develop a dynamic programming algorithm using data-driven estimates of policy values and update risks.

\begin{figure}[t]
    \centering
    \includegraphics[width=\linewidth]{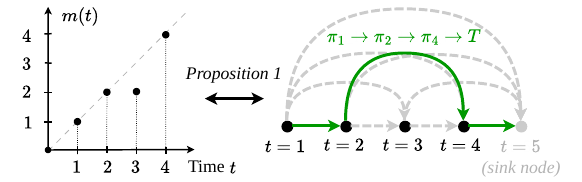}
\caption{A deployment schedule and its path representation for $T=4$. \textit{Left}: The meta-policy updates at times $2$ and $4$, retaining $\pi_2$ at time $3$. \textit{Right}: The corresponding path $1\to2\to4\to5$ deploys $\pi_1,\pi_2,\pi_4$, with node $5=T+1$ serving as the terminal sink.
    }
    \label{fig:reform}
\end{figure}

\subsection{Path Selection Reformulation}

The reformulation relies on the following assumption that candidate-policy values improve in expectation, motivated by policy-improvement methods \citep{pirotta2013safe,schulman2015trust}.
\begin{samepage}
\begin{assumption}[Policy improvement]
\label{asm:improve}
For all $t',t\in\mathcal{T}$ with $t'<t$, $ \mathbb{E}[V(\pi_{t'})]\le\mathbb{E}[V(\pi_t)]. $
\end{assumption}
\end{samepage}
\Cref{asm:improve} allows individual candidates to perform worse than their predecessors; it requires only that their expected values be nondecreasing. It therefore permits unsafe updates, unlike pathwise monotonicity.

The following result shows that an optimal schedule can be chosen to deploy the newest candidate whenever an update occurs.
\begin{samepage}
\begin{lemma}[Newest-policy reduction]
\label{lem:ss}
Under \Cref{asm:improve}, problem~\eqref{eq:obj} admits an optimizer $m^\star$ satisfying
\[
m^\star(t)\in\{m^\star(t-1),t\},
\qquad t\in\mathcal{T}\setminus\{1\}.
\]
\end{lemma}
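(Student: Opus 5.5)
The plan is an exchange argument. Take any feasible $m\in\mathcal{M}$ (an optimizer exists because $\mathcal{M}$ is finite) and build from it a schedule $m'$ of the required form. We will show that $m'$ has expected average value at least that of $m$, and that its number of unsafe updates is no larger than that of $m$ \emph{pathwise}, i.e.\ for every realization of $\mathcal{D}_T$. Applying this to an optimizer of \eqref{eq:obj} then yields the optimizer $m^\star$ claimed in \Cref{lem:ss}. The construction has two stages: first prune the sequence of deployed candidates to its running record, then move each surviving deployment as early as possible.

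\emph{Stage 1 (pruning).} Define $\tilde m(t)=\max_{s\le t} m(s)$. This is nondecreasing, satisfies $\tilde m(t)\le t$, and $\tilde m(t)\ge m(t)$. By \Cref{asm:improve}, $\mathbb{E}[V(\pi_{\tilde m(t)})]\ge \mathbb{E}[V(\pi_{m(t)})]$ for every $t$, so the objective does not decrease.

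For the risk, write the distinct consecutive candidates deployed by $m$ as $c_1,c_2,\dots,c_K$; repeated consecutive candidates create no unsafe event. The unsafe count is $\sum_k \mathbbm{1}\{V(\pi_{c_{k+1}})<V(\pi_{c_k})\}$. The candidate sequence of $\tilde m$ is obtained from $(c_k)$ by deleting every non-record entry. The key pathwise fact is
\[
\mathbbm{1}\{V_c<V_a\}\le \mathbbm{1}\{V_b<V_a\}+\mathbbm{1}\{V_c<V_b\},
\]
which holds because $V_c<V_a$ forces $V_b<V_a$ or $V_c<V_b$. Deleting one interior element $b$ from a chain $a,b,c$ therefore never increases the unsafe count. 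Deleting a final element simply removes a nonnegative term. Iterating these deletions, the unsafe count of $\tilde m$ is at most that of $m$ on every sample path, and in particular in expectation. One subtlety: deleting $b$ may create adjacent duplicates $a,a$. These contribute zero, so they should be merged before continuing.

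\emph{Stage 2 (shifting earlier).} Let $\tilde m$ deploy the strictly increasing candidates $c_1=1<c_2<\dots<c_K$. The first is $1$ because $m(1)=1$ is forced. Candidate $c_k$ is deployed on the interval $[s_k,s_{k+1})$, where $s_k\ge c_k$ and $s_{K+1}=T+1$. Define $m'(t)=c_k$ for $t\in[c_k,c_{k+1})$. Then $m'$ deploys exactly the same candidate sequence, so its unsafe count equals that of $\tilde m$ pathwise. It also satisfies $m'(t)\in\{m'(t-1),t\}$, since every switch occurs at time $c_k$ and installs $\pi_{c_k}$.

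For the value, use $c_{k+1}\le s_{k+1}$. Every $t$ lies in the same or a later candidate block under $m'$ than under $\tilde m$, so $m'(t)\ge \tilde m(t)$. Applying \Cref{asm:improve} termwise then gives the value comparison.

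I expect the main obstacle to be the risk side of Stage 1. Naively replacing $\pi_{m(s)}$ by $\pi_s$ at each switch time changes which pairs are compared, and \Cref{asm:improve} gives no control over the probabilities of those events. The remedy is to work pathwise with the "triangle" inequality for the unsafe indicators, keeping the deployed candidate sequence a subsequence of the original, and to move only timing (never candidate identity) in Stage 2. The care needed lies in the bookkeeping: merging duplicates, and checking that the deletions remove exactly the non-record entries.
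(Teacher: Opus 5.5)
Your proposal is correct and follows essentially the same route as the paper. You first prune the deployed candidate sequence to a strictly increasing one, using the pathwise inequality $\mathbbm{1}\{V_c<V_a\}\le\mathbbm{1}\{V_b<V_a\}+\mathbbm{1}\{V_c<V_b\}$. You then advance each surviving deployment to its training time, which leaves the compared pairs unchanged and weakly raises expected value by \Cref{asm:improve}.

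The differences are minor. You obtain the pruned schedule in one step as the running maximum $\max_{s\le t}m(s)$, which makes the value comparison immediate, whereas the paper deletes backward switches one at a time. For existence, you should also note that the feasible set is nonempty: the never-update schedule has zero unsafe updates.
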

\end{samepage}
By \Cref{lem:ss}, an optimal meta-policy can be specified entirely by its update times: once these times are chosen, the deployed candidate at each update is determined. Write $1=t_0<t_1<\cdots<t_K\le T$ for the initial deployment and subsequent updates, and set $t_{K+1}=T+1$ to mark the end of the horizon. The schedule deploys $\pi_{t_k}$ during periods $t_k,\dots,t_{k+1}-1$, for $k=0,\dots,K$. Thus, each consecutive pair $(t_k,t_{k+1})$ specifies a holding interval and, except for the final pair, the update that follows it.

This interval representation provides a natural connection to path optimization. Represent each candidate's training time by a vertex and each possible holding interval by a directed edge. A deployment schedule then corresponds to the path $t_0\to t_1\to\cdots\to t_K\to T+1$, as illustrated by \Cref{fig:reform}. Assigning each edge its contribution to expected deployed value and its unsafe-update probability makes both the objective and the safety constraint additive along the path. Formally, construct a directed acyclic graph $\mathcal{G}=(\mathcal{V},\mathcal{E})$ with
\[
\mathcal{V}=\{1,\dots,T+1\},
\quad
\mathcal{E}=\{(i,j):1\le i<j\le T+1\}.
\]
For $j\le T$, an edge $(i,j)$ represents deploying $\pi_i$ during periods $i,\dots,j-1$ and switching to $\pi_j$ at time $j$. The terminal edge $(i,T+1)$ represents retaining $\pi_i$ through period $T$, with no further update. Let $\mathcal{P}_{ij}$ denote the directed paths from $i$ to $j$, including the empty path when $i=j$:
\begin{equation}
\label{eq:path}
\begin{aligned}
    \mathcal P_{ij}
    & :=
    \big\{
    (v_0,\ldots,v_K):
    K\ge1,\; \\
     & \hspace{5ex}
    v_0=i,\;
    v_K=j,\;
    (v_{\ell-1},v_\ell)\in\mathcal E, \forall \ell
    \big\}.
\end{aligned}
\end{equation}

Each edge receives a reward equal to its contribution to expected average deployed value and a safety cost equal to its unsafe-update probability.

\begin{samepage}
\begin{proposition}[Path selection problem]
    \label{thm:knapsack}
Define $\mathcal{P}_{ij}$ as in \eqref{eq:path}. Under \Cref{asm:improve}, problem~\eqref{eq:obj} is equivalent to
    \begin{equation}
        \label{eq:rcspp}
            \max_{p \in \mathcal{P}_{1, T+1}}\quad
        \sum_{(i, j) \in p} \boldsymbol R_{ij} \quad \mathrm{s.t.}~
        \sum_{(i, j) \in p} \boldsymbol{S}_{ij} \leq \epsilon.
    \end{equation}
where  $\boldsymbol R \in \mathbb{R}^{(T + 1) \times (T + 1)}$ and $\boldsymbol S \in \mathbb{R}^{(T + 1) \times (T+1)}$ denote the reward and safety cost matrices,
    \[
    \begin{aligned}
    \boldsymbol R_{ij}&\coloneqq
            \frac{j - i}{T} \cdot \mathbb{E}[V(\pi_i)],\\
    \boldsymbol S_{ij}&\coloneqq
    \begin{cases}
    \mathbb{P}\left( V(\pi_j) < V(\pi_i) \right), & \text{if } j \leq T
     \\
    0, & \text{if } j = T + 1.
    \end{cases}
    \end{aligned}
    \]
\end{proposition}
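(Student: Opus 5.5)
We establish the equivalence through a value- and feasibility-preserving correspondence between schedules and paths, using \Cref{lem:ss} to restrict attention to newest-policy schedules.

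\textbf{Step 1 (restriction).} Let $\mathcal{M}_{\mathrm{new}}\subseteq\mathcal{M}$ be the schedules with $m(1)=1$ and $m(t)\in\{m(t-1),t\}$ for $t\ge2$. The constraint $m(1)\le1$ forces $m(1)=1$ for every feasible schedule. By \Cref{lem:ss}, under \Cref{asm:improve} problem~\eqref{eq:obj} has an optimizer in $\mathcal{M}_{\mathrm{new}}$. Hence the optimal value of \eqref{eq:obj} equals its optimal value over $\mathcal{M}_{\mathrm{new}}$. Any optimizer of the restricted problem is also optimal for \eqref{eq:obj}.

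\textbf{Step 2 (bijection).} For $m\in\mathcal{M}_{\mathrm{new}}$, let $t_1<\dots<t_K$ be the times $t\ge2$ with $m(t)=t\ne m(t-1)$. Map $m$ to the path $p(m)=(1,t_1,\dots,t_K,T+1)\in\mathcal{P}_{1,T+1}$.

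Conversely, a path $(1=v_0,\dots,v_K=T+1)$ defines $m(t)=v_k$ for $v_k\le t<v_{k+1}$. This $m$ lies in $\mathcal{M}_{\mathrm{new}}$, and the two maps are mutually inverse. There is one subtlety. Because $m(t)=t$ is allowed to coincide with $m(t-1)$ only trivially ($m(t-1)\le t-1<t$), every change of $m$ is a genuine switch to the newest candidate. The decomposition is therefore unique.

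\textbf{Step 3 (objective).} By linearity of expectation, grouping periods by holding interval gives
\[
\mathbb{E}\Big[\tfrac1T\textstyle\sum_t V(\pi_{m(t)})\Big]=\sum_{k=0}^{K}\tfrac{t_{k+1}-t_k}{T}\,\mathbb{E}[V(\pi_{t_k})]=\sum_{(i,j)\in p(m)}\boldsymbol R_{ij}.
\]

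\textbf{Step 4 (constraint).} Consider the indicator $\mathbbm{1}\{V(\pi_{m(t)})<V(\pi_{m(t-1)})\}$.
\begin{itemize}[leftmargin=*]
\item If $m(t)=m(t-1)$, the indicator is $0$.
\item At a switch time $t=t_k$ with $k\ge1$, the indicator is $\mathbbm{1}\{V(\pi_{t_k})<V(\pi_{t_{k-1}})\}$.
\end{itemize}
Taking expectations, the expected number of unsafe updates is $\sum_{k=1}^{K}\mathbb{P}(V(\pi_{t_k})<V(\pi_{t_{k-1}}))$. Each term is $\boldsymbol S_{t_{k-1}t_k}$ for an edge with endpoint $t_k\le T$. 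The terminal edge contributes $\boldsymbol S_{t_K,T+1}=0$. So the sum equals $\sum_{(i,j)\in p(m)}\boldsymbol S_{ij}$.

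\textbf{Conclusion.} Problem \eqref{eq:obj} restricted to $\mathcal{M}_{\mathrm{new}}$ is therefore exactly \eqref{eq:rcspp} under the bijection, with matching objective values and feasibility. Combined with Step 1, the two problems have the same optimal value, and optimizers correspond.

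The main obstacle is Step 1, but \Cref{lem:ss} handles it. The one point to check is that it yields an optimizer of the full problem lying in the restricted class, rather than merely an improvement under the objective. Feasibility of that optimizer is built into the lemma's statement, so no further argument about the risk budget is needed. The remaining steps are bookkeeping.
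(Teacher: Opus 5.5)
Your proposal is correct and follows essentially the same route as the paper. Both arguments map each newest-policy schedule to the path $1\to t_1\to\cdots\to t_K\to T+1$ and back, derive the reward and risk identities by linearity of expectation (the terminal edge carries zero risk), and invoke \Cref{lem:ss} to pass from newest-policy schedules to problem~\eqref{eq:obj}. The only blemish is cosmetic: in the converse direction of Step 2 you reuse $K$ for the number of edges, whereas earlier it counted the actual updates.
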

\end{samepage}

Thus, an optimal schedule corresponds to a maximum-reward source--sink path subject to an additive safety budget. This is the maximization form of the resource-constrained shortest-path problem \citep{handler1980dual,beasley1989resource}. The safety costs add by linearity of expectation, without requiring independence between unsafe-update events.

\subsection{Dynamic Programming Solution}

We solve the path selection problem using plug-in edge estimates and a discretized safety budget. The resulting dynamic program is exact for the estimated, gridded problem. \Cref{alg:main} provides the pseudocode.

\paragraph{Step 1: Estimating Reward and Cost}

We estimate the edge rewards and safety costs using $N$ offline trajectories,
\[
\mathcal{D}^{(n)}=\{(X_t^{(n)},A_t^{(n)},Y_t^{(n)})\}_{t=1}^{T},
\qquad n=1,\dots,N.
\]
For each trajectory $n$, we train $\pi_i^{(n)}$ on the first $i$ observations using the same learning procedure that generates the candidate policies, and denote its estimated population value by $\widehat V_i^{(n)}$. The choice of policy-learning and value-estimation methods depends on how the data are collected \citep{dudik2014doubly,zhan2024policy,jia2024cramming}. Averaging across trajectories gives
\begin{equation}
\label{eq:estimation}
\begin{aligned}
\widehat{\boldsymbol{R}}_{ij}
&=
\frac{j-i}{NT}\sum_{n=1}^{N}\widehat V_i^{(n)},\\
\widehat{\boldsymbol{S}}_{ij}
&=
\begin{cases}
\displaystyle\frac{1}{N}\sum_{n=1}^{N}
\mathbbm{1}\!\left\{\widehat V_j^{(n)}<\widehat V_i^{(n)}\right\}, & j\le T,\\
0, & j=T+1.
\end{cases}
\end{aligned}
\end{equation}
Estimation error can cause a schedule that satisfies the empirical safety budget to violate the population constraint. Population risk control therefore requires accounting for uncertainty in the estimated safety costs.

\paragraph{Step 2: Dynamic Programming} To solve the estimated path problem, we first discretize the safety budget into units of size $\Delta>0$. We round the available budget down and each edge cost up:
\begin{equation}
\label{eq:discretized-constant}
B_\Delta=\left\lfloor\frac{\epsilon}{\Delta}\right\rfloor,
\qquad
\widetilde{\boldsymbol{S}}_{ij}
=
\left\lceil\frac{\widehat{\boldsymbol{S}}_{ij}}{\Delta}\right\rceil.
\end{equation}
This guarantees that any path using at most $B_\Delta$ budget units satisfies the estimated safety constraint:
\[
\sum_{(i,j)\in p}\widehat{\boldsymbol{S}}_{ij}
\le
\Delta\sum_{(i,j)\in p}\widetilde{\boldsymbol{S}}_{ij}
\le
\Delta B_\Delta
\le\epsilon.
\]
A smaller $\Delta$ reduces the error introduced by rounding but increases the computational cost.

For each node $j$ and budget $b\in\{0,\dots,B_\Delta\}$, let $F(j,b)$ denote the largest estimated reward achievable by a path from node $1$ to node $j$ using at most $b$ budget units. Initialize $F(1,b)=0$ for all $b$, and set the value of any infeasible state to $-\infty$. A path reaching $j$ through edge $(i,j)$ earns reward $\widehat{\boldsymbol{R}}_{ij}$ and leaves at most $b-\widetilde{\boldsymbol{S}}_{ij}$ budget units for the preceding path. For $j=2,\dots,T+1$, maximizing over predecessor nodes gives the Bellman recursion
\begin{equation}
\label{eq:bellman-recursion}
F(j,b)
=
\max_{\substack{(i,j)\in\mathcal{E}\\
\widetilde{\boldsymbol{S}}_{ij}\le b}}
\left\{
F\!\left(i,b-\widetilde{\boldsymbol{S}}_{ij}\right)
+\widehat{\boldsymbol{R}}_{ij}
\right\}.
\end{equation}
Because every edge connects an earlier node to a later one, we evaluate these states in increasing order of $j$. The optimal value of the discretized problem is $F(T+1,B_\Delta)$. Recording the selected predecessor at each state allows us to recover the optimal path and its deployment schedule by backtracking from $(T+1,B_\Delta)$.

Given the edge estimates, the algorithm requires $\mathcal{O}(T^2(B_\Delta+1))$ time and $\mathcal{O}(T^2+T(B_\Delta+1))$ memory, including storage for edge estimates, state values, and predecessors. Appendix~\ref{app:dp} provides further discussion of grid-width selection, approximation error, and an extension that accounts for uncertainty in the estimated safety costs.

\begin{algorithm}[t!]
\caption{Safe Meta-Policy Optimization}
\label{alg:main}
\begin{algorithmic}[1]
\REQUIRE Offline trajectories $\mathcal D$; time horizon $T$;
safety budget $\epsilon$; grid width $\Delta>0$.
\STATE Construct the update DAG $\mathcal G=(\mathcal V,\mathcal E)$
with source $1$ and destination $T+1$ as in \eqref{eq:rcspp}.
\STATE Estimate $\widehat{\boldsymbol R}$ and
$\widehat{\boldsymbol S}$ according to \eqref{eq:estimation}.
For each terminal edge $(i,T+1)$, set
$\widehat{\boldsymbol S}_{i,T+1}\leftarrow 0$.
\STATE Compute $B_\Delta$ and  $\widetilde{\boldsymbol S}_{ij}$
according to \eqref{eq:discretized-constant}.
\STATE Initialize
$F(1,b)\leftarrow 0$ for all $b\in\{0,\ldots,B_\Delta\}$ and
$F(j,b)\leftarrow-\infty$ for all $j\neq 1$ and
$b\in\{0,\ldots,B_\Delta\}$.
\FOR{each vertex $j\neq 1$ in topological order}
    \FOR{$b=0,\ldots,B_\Delta$}
        \STATE
        $F(j,b)\leftarrow
        \displaystyle
        \max_{\substack{(i,j)\in\mathcal E, ~
        \widetilde{\boldsymbol S}_{ij}\leq b}}
        \left\{
        F\!\left(i,b-\widetilde{\boldsymbol S}_{ij}\right)
        +\widehat{\boldsymbol R}_{ij}
        \right\}$.
        \STATE Store max predecessor when $F(j,b)>-\infty$.
    \ENDFOR
\ENDFOR
\STATE Recover optimal path $\hat p^\star$ by backtracking
from $(T+1, B_\Delta)$ through the stored predecessors.
\RETURN The meta-policy corresponding to  $\hat p^\star$.
\end{algorithmic}
\end{algorithm}

\section{Theoretical Analysis}
\label{sec:method}

This section characterizes and interprets the optimal update frequency, safety-risk allocation, and resulting deployment regret. All technical proofs are deferred to Appendix~\ref{app:kkt}.

Throughout our analysis, we assume that the policy value $V(\pi_t)$ as a time-inhomogeneous It\^o diffusion \citep{ito1951stochastic}, with deterministic drift $\mu(\cdot)$ and volatility $\sigma(\cdot)$.

\begin{samepage}
\begin{assumption}[It\^o diffusion]
    \label{ass:ito}
Let $\mu(\cdot): [0, 1] \to \mathbb R_+$ and $\sigma(\cdot): [0, 1] \to \mathbb R_+$ be two deterministic smooth functions, and $V_0$ be a constant. We assume that
    \begin{equation*}
        \begin{aligned}
        V(\pi_t)&=V_0+\frac{1}{T}\int_0^t \mu\left(\frac \tau T\right)\d\tau + \frac{1}{T}\int_0^t \sigma\left(\frac{\tau}{T}\right)\d W_\tau,
        \end{aligned}
    \end{equation*}
where $\{W_\tau\}_{\tau\geq0}$ is a standard Brownian motion.
\end{assumption}
\end{samepage}

This assumption is inspired by diffusion-type approximation, which has been used in the literature to study stochastic-gradient learning dynamics \citep{li2019stochasticmodified} and, more specifically, policy-gradient learning in stochastic bandits \citep{lattimore2026diffusion}. It can be derived from more primitive assumptions with $\pi_t$ itself, which we further discuss in Appendix~\ref{app:ito}.

\Cref{ass:ito} allows us to derive a continuous-time leading-order approximation to the original problem asymptotically. We introduce a horizon-dependent waiting-time function $h_T:[0,1]\to\mathbb{R}_+$ as the continuum analogue of the discrete holding intervals $t_{k+1}-t_k$. Starting from $t_0=0$, the associated continuous schedule is generated recursively by
\begin{equation}
    \label{eq:recursion}
    t_{k+1}=t_k+h_T(u_k),
    \qquad
    u_k\coloneqq t_k/T,
\end{equation}
for as long as the proposed next update satisfies $t_{k+1}<T$. Once the next proposed update would reach or exceed $T$, no further update is made, and the currently deployed policy is retained until the terminal time $T$. The update interval $h_T(u)$ is used in the approximated optimization problem as its decision variable.

\begin{samepage}
\begin{lemma}[Leading-order approximation]
    \label{lem:continuous}
Under \Cref{ass:ito}, as $T \to \infty$, Problem~\eqref{eq:obj} admits the following continuous leading-order approximation:
    \begin{equation}
        \label{eq:continuous}
        \begin{aligned}
        \min_{h_T(\cdot)}\quad&\int_0^1 \mu(u) h_T(u) \d u\\
        \mathrm{s.t.}
        \quad &
        \int_0^1
        \frac{
        \Phi\!\left(
        -\mu(u)
        \sqrt{h_T(u)} / \sigma(u)
        \right)
        }{
        h_T(u)
        }
        \d u
        \le
        \frac{\epsilon}{T}.
        \end{aligned}
    \end{equation}
\end{lemma}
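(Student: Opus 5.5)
By \Cref{lem:ss} and \Cref{thm:knapsack}, it suffices to work with schedules given by update times $1=t_0<t_1<\cdots<t_K$ and the additive path objective~\eqref{eq:rcspp}. The plan is to evaluate the reward and the safety cost of such a schedule under \Cref{ass:ito}, with the schedule generated by the recursion~\eqref{eq:recursion}. We then pass to the limit $T\to\infty$, assuming $h_T(u)=o(T)$ so that Riemann sums over $u_k=t_k/T$ converge to integrals. Keeping only leading-order terms should give~\eqref{eq:continuous}.

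\textbf{Objective.} Since $m(t)\le t$ is the newest deployed candidate, we rewrite the maximization as minimizing the expected lag loss
\[
\frac1T\sum_{t=1}^T\mathbb{E}[V(\pi_t)-V(\pi_{m(t)})].
\]
Subtracting the schedule-independent quantity $\frac1T\sum_t\mathbb{E}V(\pi_t)$ leaves the maximizer unchanged. By \Cref{ass:ito},
\[
\mathbb{E}V(\pi_t)=V_0+\frac1T\int_0^t\mu(\tau/T)\,\d\tau,
\]
so for $t\in[t_k,t_{k+1})$ the gap is $\frac1T\int_{t_k}^t\mu(\tau/T)\,\d\tau\approx\mu(u_k)(t-t_k)/T$, using smoothness of $\mu$ and the fact that $h_T/T\to0$. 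Summing over the holding interval contributes $\mu(u_k)h_T(u_k)^2/(2T^2)$ per interval. The number of intervals with $u_k\in[u,u+\d u]$ is approximately $T\,\d u/h_T(u)$. The total loss is therefore
\[
\frac{1}{2T}\int_0^1\mu(u)h_T(u)\,\d u\,(1+o(1)).
\]
Up to the positive constant $1/(2T)$, this is the objective in~\eqref{eq:continuous}. The final partial interval and the discretization error are of lower order.

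\textbf{Constraint.} The update at $t_{k+1}$ is unsafe exactly when $V(\pi_{t_{k+1}})-V(\pi_{t_k})<0$. By \Cref{ass:ito} this increment is Gaussian, with mean $\frac1T\int_{t_k}^{t_{k+1}}\mu(\tau/T)\,\d\tau\approx\mu(u_k)h/T$ and variance $\frac1{T^2}\int_{t_k}^{t_{k+1}}\sigma^2(\tau/T)\,\d\tau\approx\sigma^2(u_k)h/T^2$, where $h=h_T(u_k)$. The probability of an unsafe update is therefore
\[
\Phi\!\left(-\mu(u_k)\sqrt{h}/\sigma(u_k)\right)(1+o(1)).
\]
The factors of $T$ cancel, leaving exactly the SNR argument in~\eqref{eq:continuous}. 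Summing over $k$ with density $T/h_T(u)$ turns $\sum_k S_{t_kt_{k+1}}\le\epsilon$ into
\[
T\int_0^1\frac{\Phi\!\left(-\mu\sqrt{h_T}/\sigma\right)}{h_T}\,\d u\le\epsilon,
\]
which is the stated constraint after dividing by $T$.

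\textbf{Main obstacle.} The hardest step is making the Riemann-sum replacement rigorous and checking that the error terms are uniformly lower order. This matters because $\Phi(\cdot)$ is exponentially small when $h_T$ is large, so relative errors in its argument must be controlled, not only absolute ones. I would first work in the regime $1\ll h_T(u)\ll T$, where $h_T\sim\log(T/\epsilon)$ is expected, with $\mu,\sigma$ bounded away from $0$ and smooth on $[0,1]$. In this regime the freezing errors in the mean and variance are $O(h/T)$ relative to the leading terms, so $\Phi$'s argument changes by $O(h^{3/2}/T)=o(1)$. Such a perturbation changes $\Phi$ only by a factor $1+o(1)$ when the argument is $O(\sqrt{\log T})$. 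Integer rounding of the $t_k$ and the boundary interval near $u=1$ contribute $O(1/h)$ relative error, which is negligible.
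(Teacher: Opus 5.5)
Your computations of the objective and the constraint are correct and coincide with steps (I) and (II) of the paper's proof. You have the same lag loss $\mu(u_k)h_k(h_k-1)/(2T^2)$ per hold and the same exact Gaussian increment. You also make the same observation that freezing the coefficients moves the normal-tail argument by $O(h^{3/2}/T)$, which changes the unsafe probability by a factor $1+O(L_T^2/T)$ when $h\asymp L_T$.

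The gap is that this only shows the discrete objective and risk are close to the continuum functionals \emph{along schedules generated by \eqref{eq:recursion} from a slowly varying $h_T$ with $1\ll h_T\ll T$}. A leading-order approximation of the \emph{problem} requires the optimal values to agree. The paper proves that $2T\mathcal{R}_T^\star(\epsilon)$ equals the optimal value of \eqref{eq:continuous} up to $O(1)$. On the discrete side you must therefore compete against every feasible integer schedule. Nothing in your argument rules out an optimal schedule that mixes very short, risk-expensive holds with long ones, or that has irregular spacing. On such holds your $\Phi$-expansion (valid only for $h\asymp L_T$) and the density heuristic $T\,\d u/h_T(u)$ say nothing. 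Also, $h_T=o(T)$ is imposed rather than derived. So the real difficulty is not the Riemann-sum replacement you flag. A smaller point: you should track errors additively, as $O(1)$ in $2T\mathcal{R}_T$, rather than as a bare $1+o(1)$ factor on a quantity of order $L_T$, or the later $-\tfrac32\log L_T$ correction is lost.

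The paper closes this with three ingredients you are missing.

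First, a crude feasible schedule with constant holds $\lceil L_T/\min_u I(u)\rceil$ has regret $O(L_T/T)$. Any schedule has regret at least $\mu_-\sum_k h_k(h_k-1)/(2T^2)$, so every competitive schedule satisfies $\sum_k h_k^2=O(TL_T)$. Hence $\max_k h_k=O(\sqrt{TL_T})=o(T)$.

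Second, there is a lower bound for \emph{arbitrary} feasible schedules that never uses the local expansion. Cauchy--Schwarz gives the exact exponent bound $\mathcal{I}_{s,h}\le h\max_{[s,s+h]}I$. So each unsafe probability is at least $Q(x_k)$, where $Q(x)=\Phi(-\sqrt{2x})$ and $x_k$ is $h_k$ times the maximum of $I$ over the hold. Write the total risk as a weighted sum of $f(x_k)=Q(x_k)/x_k$ and apply Jensen's inequality to the convex $f$. This forces the weighted mean of the $x_k$ to be at least $L_T-\tfrac32\log L_T-O(1)$, which lower-bounds the regret.

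Third, there is an exactly feasible upper construction. Round up the continuum optimizer computed at budget $\epsilon/2$. Then the $1+O(L_T^2/T)$ approximation error and the ceiling error cannot push the true risk above $\epsilon$, at a cost of only $\log 2$ in $L_T$.

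Without the second and third steps, your argument is a correct formal derivation of \eqref{eq:continuous}, but not a proof that its optimum approximates that of \eqref{eq:obj}.
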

\end{samepage}
The objective of \eqref{eq:continuous} is derived by replacing the original objective of maximizing cumulative reward with minimizing regret, where regret is measured relative to the always-update baseline,
\begin{equation}
    \label{eq:regret}
    \mathcal{R}_T^\star(\epsilon) \coloneqq \mathbb{E}\left[ \frac{1}{T}\sum_{t=1}^T \left\{ V(\pi_t) - V(\pi_{m^\star(t)}) \right\} \right].
\end{equation}
The integrand in the constraint can be interpreted as a local risk-expenditure rate, where the numerator is the probability that an update is harmful, and $1/h_T(u)$ captures the local update frequency. The constraint encourages frequent updates when learning is fast and stable, and longer waits when learning is slow or noisy.

\begin{samepage}
\begin{theorem}[Update interval]
    \label{thm:update-interval}
Define the following constants and functions,
    $$
    L_T = \log\left( \frac{T}{\epsilon} \right),
    \quad
    I(u) = \frac{\mu(u)^2}{2 \sigma(u)^2}.
    $$
Under the same assumptions as \Cref{lem:continuous}, there is
    $$
    h_T^\star(u) = \frac{1}{I(u)}\Big[ L_T-\frac32\log L_T+O(1) \Big].
    $$
\end{theorem}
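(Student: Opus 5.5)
We solve the continuous problem \eqref{eq:continuous} of \Cref{lem:continuous} pointwise by a Lagrangian argument and then run an asymptotic expansion of the resulting first-order condition. Write $x(u)=\mu(u)\sqrt{h_T(u)}/\sigma(u)$, so that $h_T=x^2/(2I)$. The objective is linear in $h_T$. The constraint integrand $g(h)=\Phi(-c\sqrt h)/h$, with $c=\mu/\sigma$, is decreasing and convex in $h$ on the relevant range. So the problem is a convex program: minimize a linear functional subject to a convex integral constraint. Slater's condition holds because a large constant $h$ is strictly feasible. Hence there is a multiplier $\lambda>0$, the constraint is active at the optimum, and the optimizer is characterized pointwise by
\[
\mu(u)=-\lambda\,\partial_h g\big(h_T^\star(u)\big).
\]
Differentiating gives
\[
-\partial_h g(h)=\frac{\Phi(-c\sqrt h)}{h^2}+\frac{c\,\phi(c\sqrt h)}{2h^{3/2}}.
\]
As $x\to\infty$, the Mills ratio $\Phi(-x)\sim\phi(x)/x$ shows the second term dominates. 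The condition then reads, at leading order,
\[
\mu=\lambda\,\frac{c\,\phi(x)}{2h^{3/2}}\,(1+O(x^{-2})).
\]

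\textbf{Step 1: rewrite the stationarity condition as an equation in $x$.} Substituting $h=x^2/(2I)$ and taking logarithms, the equation becomes
\[
\frac{x^2}{2}+3\log x=\log\lambda+a(u)+O(x^{-2}),
\]
where $a(u)$ is a smooth $O(1)$ function built from $\mu,\sigma$. Note that $h^{3/2}\propto x^3$, which is the source of the factor $3\log x$ and ultimately of the $-\tfrac32\log L_T$ correction.

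\textbf{Step 2: determine the size of $\lambda$ from the active constraint.} Plugging the stationarity relation into the constraint, the integrand is
\[
\frac{\Phi(-x)}{h}\asymp \frac{\phi(x)}{x\cdot x^2}\asymp \frac{e^{-x^2/2}}{x^3}.
\]
From Step 1, $e^{-x^2/2}/x^3\asymp 1/\lambda$ uniformly in $u$, so the constraint integral is $\Theta(1/\lambda)$. Setting it equal to $\epsilon/T$ gives
\[
\log\lambda=L_T+O(1).
\]

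\textbf{Step 3: invert the equation for $x$.} Write $x^2/2=L_T+r$. Then $3\log x=\tfrac32\log(2L_T)+o(1)$, which yields
\[
x^2/2=L_T-\tfrac32\log L_T+O(1).
\]
Dividing by $I(u)$ gives $h_T^\star(u)=x^2/(2I(u))$, the stated formula.

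\textbf{Main obstacle: uniformity of the $O(1)$ term in $u$.} The $O(1)$ must be uniform in $u$, which requires $\mu$ and $\sigma$ to be bounded away from $0$ and $\infty$. Otherwise $a(u)$ and the Mills-ratio error are not controlled. I would state this boundedness as implicit in the smoothness of $\mu,\sigma$ on the compact interval $[0,1]$, and then absorb $a(u)$ and the constant from Step 2 into $O(1)$.

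I would also check two side conditions:
\begin{itemize}[nosep]
\item Convexity of $g$ only needs to hold on the region where $x$ is large, so the multiplier solution is the global optimum.
\item The resulting $h_T^\star=\Theta(\log T)$ satisfies $h_T^\star\ll T$, which keeps the continuum approximation of \Cref{lem:continuous} consistent.
\end{itemize}
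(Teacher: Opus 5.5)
Your proposal is correct and takes essentially the paper's route: pointwise Lagrangian stationarity, then $\log\lambda=L_T+O(1)$ from the binding constraint, then logarithmic inversion of $Ih+\tfrac32\log(Ih)=\log\lambda+O(1)$ (your $x^2/2$ is exactly the paper's variable $x=Ih$). The differences are only technical. The paper certifies optimality directly, via a unique binding multiplier and integration of the pointwise Lagrangian inequality, rather than through Slater/KKT. It also uses that $g$ is strictly convex on all of $(0,\infty)$, as the product of the positive, decreasing, convex functions $\Phi(-c\sqrt h)$ and $1/h$. So your restriction to a ``relevant range'' is unnecessary, and local convexity alone would not by itself justify your global-optimality side remark.
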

\end{samepage}

\begin{figure*}[t]
    \centering
    \includegraphics[width=1.0\linewidth]{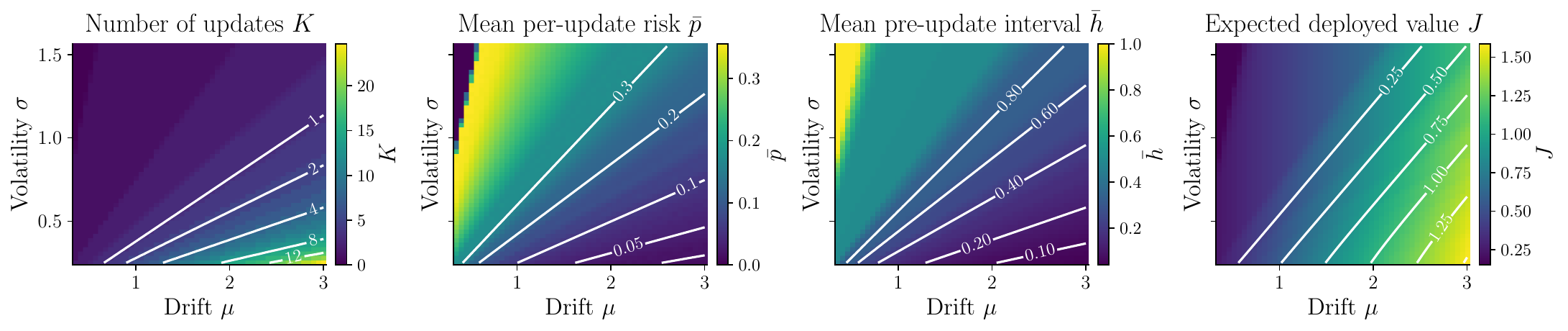}
\caption{ Phase diagrams of the schedules returned by \Cref{alg:main} on homogeneous value processes with constant drift $\mu$ and volatility $\sigma$ ($T=512$, $\epsilon=0.35$, $\Delta=1 \times 10^{-3}$). From left to right, the panels report the number of updates $K$, the mean per-update risk $\bar p$, the mean normalized pre-update interval preceding an update $\bar h$, and the expected deployed value $J$. Each cell is evaluated under the population rewards and safety costs and averaged over three simulated datasets. White-labeled contours show the leading-order theoretical predictions established in \Cref{sec:method}.
    }
    \label{fig:phase-diagram}
\end{figure*}

\begin{figure*}[t]
    \centering
    \includegraphics[width=1.0\linewidth]{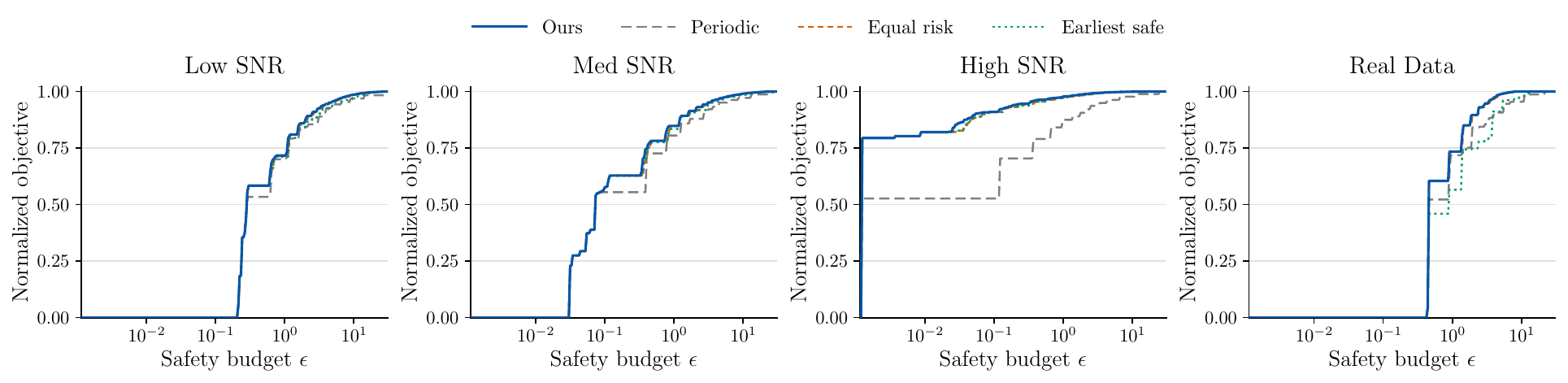}
\caption{ Safety--value Pareto frontiers of the proposed meta-policy (Ours) and three baseline update rules, in three synthetic settings with low, medium, and high value-process SNR and on the International Stroke Trial (IST) data. The horizontal axis is the safety budget $\epsilon$ (log scale). The vertical axis is the objective of \eqref{eq:obj}, normalized so that $0$ and $1$ correspond to never updating and to updating every period. All methods use the same estimated rewards and safety costs.
    }
    \label{fig:frontier}
\end{figure*}

\Cref{thm:update-interval} characterizes how the optimal update interval adapts to the local learning dynamics. Since $I(u)$ represents the squared local signal-to-noise ratio (SNR) and the leading term scales as $L_T/I(u)$, updates should occur more frequently when expected improvement is strong relative to its variability, and less frequently when learning is noisy. The logarithmic correction $-\tfrac{3}{2}\log L_T$ reflects the sharp-tail behavior governing unsafe-update risk.

Importantly, \Cref{thm:update-interval} reveals a favorable diminishing marginal cost of additional safety. Imposing stricter safety budgets increases the waiting time between updates only logarithmically through $L_T=\log(T/\epsilon)$. This implies that when the safety budget is relatively large, tightening the budget can initially require noticeable increases in the update interval; but as the safety requirement becomes increasingly stringent, further multiplicative reductions in $\epsilon$ translate into only additive increases in $L_T$. Consequently, a substantial improvement in deployment safety can be achieved without a proportionally large reduction in update frequency, encouraging practitioners in high-risk settings to adopt conservative update schedules without sacrificing responsiveness at the same rate.

\begin{samepage}
\begin{corollary}[Risk per-update]
    \label{cor:risk}
Under the same assumptions and notations as in \Cref{thm:update-interval}, let
    $$
    A(u) = \frac{\mu(u)}{I(u)},
    \quad
    B = \int_{0}^1A(u) \d u.
    $$
Then, there is
    $$
    \mathbb{P}\left\{ V(\pi_{t_{k+1}}) < V(\pi_{t_k}) \right\} = \Theta \left( \frac{\epsilon L_T}{TB}
    \frac{\mu(u_k)}{I(u_k)^2}
    \right),
    $$
where $u_k \coloneqq t_{k} / T$ denotes the normalized update times.
\end{corollary}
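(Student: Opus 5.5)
The plan is to recover the $u$-dependent $O(1)$ term in \Cref{thm:update-interval} from the first-order (KKT) conditions of the continuous problem \eqref{eq:continuous}, rather than from the expansion of $h_T^\star$ alone. The reason is that substituting $h_T^\star=(L_T-\tfrac32\log L_T+O(1))/I(u)$ directly into the harmful-update probability only gives $\Theta(\epsilon L_T/T)$ up to $u$-dependent constants. The factor $\mu(u)/(I(u)^2 B)$ comes precisely from how the Lagrange multiplier fixes that $O(1)$ term.

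\emph{Per-update probability.} Under \Cref{ass:ito}, the increment $V(\pi_{t_{k+1}})-V(\pi_{t_k})$ is Gaussian. To leading order its mean is $\mu(u_k)h/T$ and its variance is $\sigma(u_k)^2h/T^2$, where $h=h_T(u_k)$ and the coefficients are locally frozen because $h/T\to0$. Hence
\[
p_k=\Phi\!\left(-x\right)\big(1+o(1)\big),\qquad x:=\mu(u_k)\sqrt{h}/\sigma(u_k)=\sqrt{2I(u_k)h},
\]
which is the same quantity that appears in \Cref{lem:continuous}.

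\emph{Stationarity condition.} Form the Lagrangian with multiplier $\lambda>0$ and differentiate pointwise in $h$. Writing $g(h)=\Phi(-x)/h$, we get
\[
g'(h)=-\frac{\Phi(-x)}{h^2}-\frac{\varphi(x)\,x}{2h^2}.
\]
Since $x\to\infty$, the Mills-ratio bound $\Phi(-x)=\varphi(x)/x\,(1+O(x^{-2}))$ shows that the second term dominates. Stationarity $\mu(u)=-\lambda g'(h)$ therefore gives
\[
\varphi(x)\approx \frac{2\mu h^2}{\lambda x}.
\]
Combining this with the Mills ratio,
\[
p(u)\approx\frac{\varphi(x)}{x}\approx\frac{2\mu h^2}{\lambda x^2}=\frac{\mu(u)\,h_T(u)}{\lambda I(u)}.
\]

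\emph{Fixing the multiplier.} The constraint is active at the optimum, since the objective is increasing in $h$. It reads
\[
\int_0^1 \frac{p(u)}{h_T(u)}\,du=\frac{\epsilon}{T}.
\]
Substituting the expression for $p$ gives $\lambda^{-1}\int_0^1 \mu/I\,du=\epsilon/T$, so $\lambda^{-1}=\epsilon/(TB)$ with $B=\int_0^1 A(u)\,du$. Plugging in $h_T^\star(u)=L_T(1+o(1))/I(u)$ from \Cref{thm:update-interval} then yields
\[
p_k=\Theta\!\left(\frac{\epsilon L_T}{TB}\,\frac{\mu(u_k)}{I(u_k)^2}\right).
\]
The statement is only claimed up to $\Theta$, so the $(1+o(1))$ factors are absorbed.

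\emph{Expected obstacle.} The delicate step is justifying that neglected terms are lower order uniformly in $u$. These are the $\Phi(-x)/h^2$ term in $g'$, the Mills-ratio correction, and the discretization error of freezing $\mu,\sigma$ over an interval of length $h_T/T=O(L_T/T)$. For this I would assume, or use from the setting of \Cref{thm:update-interval}, that $\mu$ and $\sigma$ are bounded away from $0$ and $\infty$ on $[0,1]$. Then $x^2\asymp L_T\to\infty$ uniformly, so every relative correction is $O(1/L_T)$. I would also check self-consistency: the implied $O(1)$ term, $\log\!\big(\lambda I/(\mu\sqrt{\cdot})\big)$-type, stays bounded, which is consistent with \Cref{thm:update-interval}.
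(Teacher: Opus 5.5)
Your proposal is correct and is essentially the paper's proof. The paper writes the same stationarity condition in the variable $x=I(u)h$, namely $-f'(x_b)=\mu/(\lambda_b I^2)$ with $f(x)=\Phi(-\sqrt{2x})/x$, obtains $\lambda_b\sim TB/\epsilon$ from the binding constraint, and uses $x_b\asymp L_T$ together with the frozen-coefficient error $O(L_T^2/T)$. Its only addition is a check that rounding a hold up to an integer length $h+d$, $0\le d<1$, changes the probability by a factor of order $e^{-I(u)d}$, which is bounded above and away from zero.

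A side remark on your motivation: since $\mu$ and $\sigma$ are bounded above and away from zero, $\mu(u)/(BI(u)^2)$ is itself bounded above and below. So the bare $\Theta$ claim already follows from substituting the uniform expansion of \Cref{thm:update-interval}. What your multiplier computation buys is the sharper equivalence $p_k\sim\frac{\epsilon L_T}{TB}\frac{\mu(u_k)}{I(u_k)^2}$ on the continuous schedule.
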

\end{samepage}
\Cref{cor:risk} shows that the optimal schedule allocates the safety budget unevenly across updates according to the local learning dynamics, with the per-update risk scales with $\mu(u)/I(u)^2$. Moreover, because the overall scale is $\epsilon L_T/T$, the risk associated with each individual update vanishes as the learning horizon grows, even while the number of updates increases. Hence, the optimal policy achieves global safety not by imposing a uniform risk level on every update, but by adaptively distributing risk across the trajectory.

\begin{samepage}
\begin{corollary}[Regret and number of updates]
\label{cor:regret}
\label{cor:K}
Let $R_T^\star(\epsilon)$ be defined in \eqref{eq:regret} and $K_T^\star$ be the total number of updates generated according to \eqref{eq:recursion}. Under the same assumptions and notations of \Cref{thm:update-interval}, there is
$$
\begin{aligned}
\mathcal R_T^\star(\epsilon)
& = \frac{B}{2T}\left[L_T-\frac32\log L_T\right]+O(T^{-1}), \\
K_T^\star & \sim\frac{T}{L_T} \int_0^1 I(u) \d u.
\end{aligned}
$$
\end{corollary}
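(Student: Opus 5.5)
The plan is to obtain both statements of \Cref{cor:regret} directly from the continuous problem \eqref{eq:continuous} of \Cref{lem:continuous}, with the optimal waiting-time function $h_T^\star$ supplied by \Cref{thm:update-interval}.

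\textbf{Regret.} We first tie the regret \eqref{eq:regret} to the objective of \eqref{eq:continuous}. Consider a holding interval $[t_k,t_{k+1})$ with length $h_k=t_{k+1}-t_k$. At each time $t$ in this interval, \Cref{ass:ito} gives an expected regret contribution
\[
\mathbb{E}[V(\pi_t)-V(\pi_{t_k})]=\frac1T\int_{t_k}^{t}\mu(\tau/T)\,\d\tau\approx \mu(u_k)\,\frac{t-t_k}{T}.
\]
Sum this over $t$ in the interval and divide by $T$. The interval then contributes $\mu(u_k)h_k^2/(2T^2)$, up to $O(h_k/T^2)$ corrections coming from discreteness and from the variation of $\mu$ across the interval. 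There are roughly $T/h_T(u)$ updates per unit of $u$, so a Riemann-sum step converts the total into
\[
\mathcal R_T^\star(\epsilon)=\frac{1}{2T}\int_0^1\mu(u)h_T^\star(u)\,\d u+O(T^{-1}).
\]
This recovers the objective of \eqref{eq:continuous}; the factor $\tfrac12$ is a constant rescaling that the lemma absorbs.

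Next we substitute $h_T^\star(u)=I(u)^{-1}[L_T-\tfrac32\log L_T+O(1)]$ from \Cref{thm:update-interval}. The integral becomes
\[
\int_0^1 \frac{\mu}{I}\,\d u\,\Big[L_T-\tfrac32\log L_T+O(1)\Big]=B\Big[L_T-\tfrac32\log L_T\Big]+O(1).
\]
Dividing by $2T$ gives the claimed expansion.

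\textbf{Number of updates.} From the recursion \eqref{eq:recursion}, $u_{k+1}-u_k=h_T^\star(u_k)/T$, so the schedule is an Euler discretization of $\d u/\d k=h_T^\star(u)/T$. This step has two parts:
\begin{itemize}[leftmargin=*]
\item \emph{Step size vanishes.} Since $h_T^\star=O(L_T)=o(T)$, the steps shrink to zero.
\item \emph{Counting updates.} Because the steps shrink, the count satisfies
\[
K_T^\star=\sum_k 1\sim \int_0^1 \frac{T}{h_T^\star(u)}\,\d u=\frac{T}{L_T(1+o(1))}\int_0^1 I(u)\,\d u.
\]
\end{itemize}
Here the $-\tfrac32\log L_T$ and $O(1)$ terms are $o(L_T)$, so they drop out of the leading order.

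\textbf{Main obstacle: uniformity of the Riemann-sum and Euler approximations.} We need
\begin{itemize}[leftmargin=*]
\item $I$ to be bounded above and below on $[0,1]$, so that $h_T^\star$ is uniformly of order $L_T$;
\item the $O(1)$ term in \Cref{thm:update-interval} to be uniform in $u$.
\end{itemize}
The natural route is to assume $\mu,\sigma$ are smooth and bounded away from zero, which is implicit in \Cref{thm:update-interval}. Under this assumption:
\begin{itemize}[leftmargin=*]
\item The per-interval discretization errors are $O(L_T/T^2)$.
\item There are $O(T/L_T)$ intervals, so these errors sum to $O(T^{-1})$.
\item The final truncated interval contributes $O(L_T^2/T^2)=o(T^{-1}L_T)$. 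For this we must check that it is in fact $O(T^{-1})$, which holds since $L_T^2/T\to0$.
\end{itemize}
The stochastic fluctuation of $V$ does not enter, because the regret involves only expectations. Hence only the drift matters, and the argument stays deterministic.
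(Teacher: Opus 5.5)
Your regret computation evaluates one particular schedule, the one generated by \eqref{eq:recursion} from $h_T^\star$. You then identify its regret with $\mathcal R_T^\star(\epsilon)$ without justification. But $\mathcal R_T^\star(\epsilon)$ is the regret of an optimal \emph{integer} meta-policy for \eqref{eq:obj}, whereas $h_T^\star$ in \Cref{thm:update-interval} optimizes the functional problem \eqref{eq:continuous}. That theorem says nothing about whether some irregular partition does better in the discrete problem. \Cref{lem:continuous} cannot fill this in either. It is only a leading-order statement, while the claimed expansion needs the discrete and continuum optima to agree within $O(1)$ in $2T\mathcal R$; otherwise the $-\tfrac32\log L_T$ term is not determined. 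Moreover, in the paper that precise comparison, \eqref{eq:app-leading-value}, is itself established by the proof of this corollary, so invoking it would be circular. The missing ingredient is a lower bound over \emph{all} feasible integer schedules, and the paper obtains it in three steps. First, it restricts to schedules with regret $O(L_T/T)$, which forces $\sum_k h_k^2=O(TL_T)$ and makes the quadratic regret expansion valid for arbitrary partitions. Second, Cauchy--Schwarz bounds each hold's exact Gaussian exponent by $x_k=h_kI_k^+$, so its risk is at least $Q(x_k)$; here $I_k^+$ and $\mu_k^-$ are the maximum of $I$ and the minimum of $\mu$ over the hold. Third, it writes $\sum_k Q(x_k)$ as $T$ times a weighted sum of $f(x_k)=Q(x_k)/x_k$, with weights $w_k=(h_k/T)\mu_k^-/I_k^+$ summing to $B+O(\sqrt{L_T/T})$, and applies Jensen to the convex $f$. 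The budget then forces $\sum_k w_kx_k\geq B[L_T-\tfrac32\log L_T]-O(1)$, and this sum is at most $2T\mathcal R_T(P)+O(1)$. The Riemann-sum uniformity you flag as the main obstacle is the easy part; this convexity argument carries the content.

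The upper-bound half also has a gap: you never check that the schedule built from $h_T^\star$ satisfies the discrete safety constraint. The function $h_T^\star$ makes the continuum constraint bind exactly at $\epsilon/T$. The true risk of the induced schedule matches $T\int_0^{u_K}r(u,h_T^\star(u))\,\d u$ only up to a factor $1+O(L_T^2/T)$ whose sign you do not control. That factor comes from the local Gaussian approximation and from freezing the waiting time at the left endpoint of each hold; you must also round the times in \eqref{eq:recursion} to integers. The paper instead builds the schedule from the continuum optimizer at budget $\epsilon/2$, which shifts $L_T$ by $\log 2$ and so costs only $O(1)$ in $2T\mathcal R$, and it verifies feasibility directly. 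Some such slack is needed before your computation bounds $\mathcal R_T^\star(\epsilon)$ from above. Your Euler/Riemann count is fine and coincides with the paper's count for the constructed schedule, which matches the literal reading of $K_T^\star$ as the number of updates generated by \eqref{eq:recursion}. If $K_T^\star$ is meant as the update count of an optimal meta-policy, the missing lower-bound issue reappears. The paper handles that case separately. Holds with $I(u_k)h_k\leq(1-\delta)L_T$ each carry risk at least of order $L_T^{-1/2}e^{-(1-\delta)L_T}$, so only $o(T/L_T)$ of them fit in the budget. Near-optimality of the regret then bounds the calendar fraction of holds with $I(u_k)h_k>(1+\eta)L_T$.
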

\end{samepage}

\Cref{cor:regret} characterizes the aggregate cost and adaptivity of the optimal deployment schedule. In particular, the optimal regret vanishes at the rate $L_T/T$, while the number of updates grows on the order of $T/L_T$, showing that the system can accommodate an increasing number of policy updates over a longer learning horizon while incurring vanishing average deployment regret. The second-order terms further reveal that this tradeoff depends on the entire learning trajectory through $I(u)$: periods in which learning is more informative contribute differently to both cumulative regret and the total number of deployments. Together, these results quantify how the locally adaptive update rule translates into a global balance between responsiveness and the cost of holding outdated policies.

\section{Numerical Experiments}

\begin{figure*}[t]
    \centering
    \includegraphics[width=1.0\linewidth]{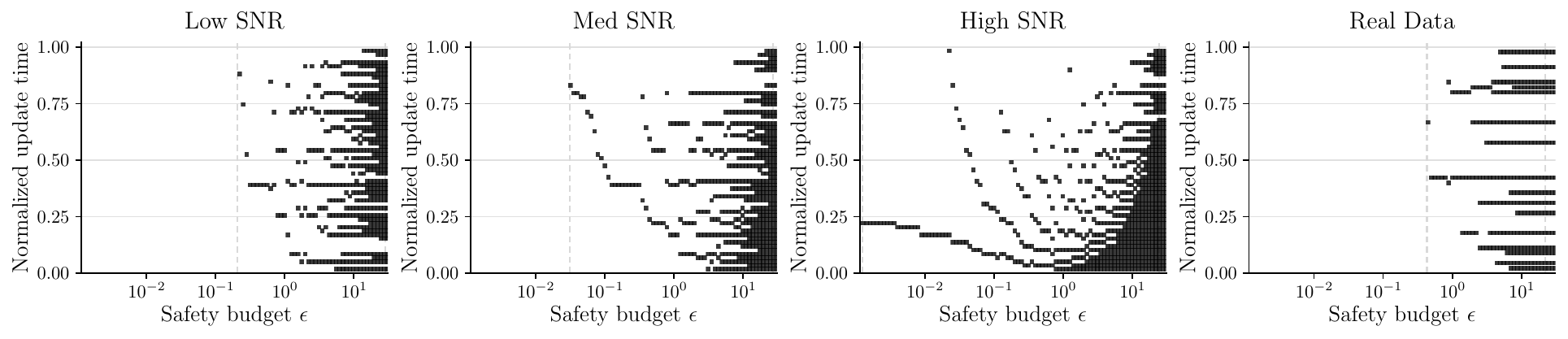}
\caption{ Update times in the optimal schedule selected by the proposed meta-policy as the safety budget $\epsilon$ varies, in the same four settings as \Cref{fig:frontier}. Each black dot indicates a scheduled normalized update time. The left dashed lines mark the smallest budget that admits one update.
    }
    \label{fig:bifurcation}
\end{figure*}

This section presents three sets of numerical experiments that compare the proposed meta-policy with update rules commonly adopted in practice and validate our theoretical analysis in \Cref{sec:method}.

\paragraph{Experiment setting.} We evaluate our framework on both synthetic and real-data policy-learning environments. For the synthetic study, we consider three learning regimes with low, medium, and high value-process signal-to-noise ratios by varying the learning rate and outcome-noise level. We generate 400 independent trajectories of 300 sequential observations and retain a candidate policy every five observations, yielding $T=60$ deployment checkpoints, five-dimensional covariates, randomized binary treatment, and heterogeneous treatment effects, and train an online plug-in CATE policy learner. For the real-data study, we use the randomized International Stroke Trial data \citep{sandercock2011international}, constructing $500$ bootstrap sequential learning trajectories with nested training sets and evaluating learned policies on a fixed $20\%$ held-out sample using an AIPW value estimator \citep{robins1994estimation}. Across these settings, we compare our proposed algorithm against ($i$) \emph{periodic} updating at the best feasible constant interval, ($ii$) \emph{equal risk}, which splits $\epsilon$ evenly across updates, and ($iii$) \emph{earliest safe}, which deploys the first policy whose estimated safety cost falls below a tuned threshold. Further experimental details are provided in Appendix~\ref{app:exp}.

\paragraph{Exp 1: Validating the optimality system.} \Cref{fig:phase-diagram}  uses simulated homogeneous value processes with constant drift $\mu$ and volatility $\sigma$ over a $50\times50$ grid of $(\mu,\sigma)$, runs \Cref{alg:main} on the estimated rewards and safety costs, and overlays their theoretical asymptotic rates as white contours. It can be observed that a higher SNR yields more updates, each spending less risk, and admits shorter waits. On the other hand, in the low SNR regime, there might not even exist an update that fits within the budget. The empirical and theoretical surfaces are closely aligned (Pearson correlations of $0.99$, $0.99$, and $0.98$ for $K$, $\bar p$, and $\bar h$). The experiment suggests that the theoretical structure characterized in \Cref{sec:method} aligns well with the exact schedules numerically computed by \Cref{alg:main}.

\paragraph{Exp 2: Safety--value frontier.} \Cref{fig:frontier} plots the obtained objective value of cumulative reward across different specifications of the safety budget $\epsilon$. It can be seen that our method weakly dominates every baseline at every budget, and its frontier exhibits the diminishing returns predicted by \Cref{cor:regret}. The gap is largest against periodic updating, reaching $0.39$ in the high-SNR setting, where our method captures $82\%$ of the always-update gain at $\epsilon=0.01$ versus $53\%$ for periodic updating; the adaptive baselines trail by up to $0.06$ (equal risk) and $0.23$ (earliest safe). These gains arise because our method jointly chooses when to update and how much risk to spend on each update, rather than fixing either in advance. This result shows that the proposed meta-policy attains a higher safety--value frontier than commonly used update rules in all four settings, rendering it superior in decision-making settings with safety constraints.

\paragraph{Exp 3: Structure of optimal schedules.} \Cref{fig:bifurcation} illustrates how the optimal schedule reorganizes as the safety budget grows, where each column shows the normalized update times selected under a given $\epsilon$. Three observations can be made: ($i$) \textit{The schedule is not nested in $\epsilon$}: as the budget grows, existing updates shift earlier, and new ones are inserted, producing the curved branches in the medium- and high-SNR panels. This agrees with \Cref{cor:risk}, where allocating more risk to an update shortens its waiting time. ($ii$) \textit{Update placement depends on the SNR}: under high SNR, the first update occurs early ($t/T\approx0.2$), and most updates fall in the first half of the horizon, where value improves fastest, whereas under low SNR the first update is deferred to $t/T\in[0.39,0.75]$. ($iii$) \textit{The intervals are nonuniform}: on IST, updates concentrate at a few recurring times. Hence, the optimal meta-policy adapts both the timing and the risk of each update to local learning dynamics, which neither periodic nor equal-risk rules can replicate by construction.

\section{Conclusion}

We studied how to schedule the deployment of sequentially learned policies when each update may both improve or worsen performance. We formulated the problem as maximizing cumulative value under a budget on unsafe updates, reformulated it as a resource-constrained shortest-path problem, and developed an offline dynamic-programming solution using historical learning trajectories. Our asymptotic analysis characterizes the optimal update spacing, per-update risk, regret, and number of updates in terms of the local signal-to-noise ratio, revealing that stricter safety can be achieved with only a logarithmic increase in waiting time. Experiments on synthetic environments and real clinical data support these structural insights and demonstrate improved safety–value tradeoffs over benchmark update rules. Our results provide a principled way to coordinate when learned policies should be deployed to safely maximize performance gain.

Our framework also leaves several directions for future work. First, the structural analysis adopts a reduced-form diffusion model for the evolution of policy value. Extending the theory to broader learning dynamics, including non-Gaussian update risks derived directly from specific learning algorithms, would clarify the generality of the resulting scheduling laws. Second, the implemented scheduler relies on estimated rewards and unsafe-update probabilities, so translating empirical feasibility into finite-sample population guarantees requires additional uncertainty quantification. Finally, our safety criterion controls the frequency of harmful updates but not their magnitude. Incorporating severity-sensitive or tail-risk notions of deterioration would provide a richer notion of deployment safety.

\bibliographystyle{plainnat}
\bibliography{ref}

\clearpage
\appendix
\onecolumn
\section*{Appendices}

\numberwithin{equation}{section}
\numberwithin{lemma}{section}
\numberwithin{proposition}{section}
\numberwithin{theorem}{section}
\numberwithin{corollary}{section}
\numberwithin{assumption}{section}
\numberwithin{remark}{section}
\allowdisplaybreaks[2]

\section{Discussion: Problem setting extension}
\label{app:generalize}

We use binary treatment assignment to introduce the deployment problem, but the scheduling formulation does not depend on a causal model. More generally, let a learning procedure produce candidates $f_1,\ldots,f_T$ from an exogenous training stream, and let $V(f)$ be an integrable scalar measure of population performance, with larger values preferred. For example, for prediction, we can take $V(f)=-\mathbb E[\ell(f;Z)]$; for a decision rule, we can use its expected utility. We define an unsafe update by $V(f_j)<V(f_i)$ and retain the objective and expected-count constraint in \eqref{eq:obj}. We also assume that the schedule changes which candidate is deployed, not the distribution of the training sequence. As in the main paper, we choose the schedule before observing the future training trajectory (\ie, an offline setting).

Under the same mean-improvement condition as \Cref{asm:improve}, the newest-candidate reduction, path representation, and dynamic program apply after replacing $\pi_i$ by $f_i$. The asymptotic analysis likewise uses only the induced scalar value process and its stated regularity and tail conditions. We therefore obtain the same conclusions for another learning task when that task satisfies those conditions; the existence of a scalar performance criterion alone does not imply the diffusion assumption. For the statistical results below, we additionally require the independent historical trajectories and bounded values specified there. A positive affine change of the performance scale preserves every unsafe-update event and simply rescales rewards.

\section{Additional details for Section~\ref{sec:reform}}
\label{app:algorithm-proofs}

We write $\boldsymbol{R}_{ij}$ and $\boldsymbol{S}_{ij}$ for the entries of the reward and safety matrices, $V_i=V(\pi_i)$, $\nu_i=\mathbb E[V_i]$, and $p_{ij}=\mathbb P(V_j<V_i)$ for $1\leq i<j\leq T$. We set $p_{i,T+1}=0$. A newest-policy schedule has times
\[
1=t_0<t_1<\cdots<t_K\leq T<t_{K+1}=T+1,
\qquad h_k=t_{k+1}-t_k\quad(0\leq k\leq K).
\]
There are $K$ actual updates and $K+1$ holding intervals. In particular, the last edge contributes reward but no update risk. We denote the population objective and risk of a path $P$ by $J_T(P)$ and $S_T(P)$, respectively.

\subsection{\texorpdfstring{Proof of \Cref{lem:ss}}{Proof of the newest-policy reduction}}
\label{app:newest-policy}

\begin{proof}
The deterministic meta-policy class is finite, and its no-update member is feasible, so a maximizer exists. We show how to transform any feasible meta-policy into a newest-policy schedule without decreasing its objective or increasing its risk.

First, remove backward switches. Suppose a schedule switches from candidate $i$ to candidate $j<i$, and its next switch, if any, is from $j$ to $\ell$. We delete the switch to $j$ and retain $i$ until that next switch. On the intervening periods, the expected value weakly increases because $\nu_i\geq\nu_j$. For every realized candidate-value vector,
\begin{equation}
\label{eq:app-triangle}
\mathbbm 1\{V_\ell<V_i\}
\leq \mathbbm 1\{V_j<V_i\}+\mathbbm 1\{V_\ell<V_j\}.
\end{equation}
Indeed, if both events on the right fail, then $V_\ell\geq V_j\geq V_i$. Thus the new direct transition has risk no larger than the sum of the two removed risks. If there is no later switch, deleting the last switch removes a nonnegative risk. A resulting transition from $i$ to itself is a hold and has zero risk. Each operation removes at least one switch, so repetition terminates with a schedule whose deployed candidate indices are strictly increasing.

Write these indices as $1=j_0<j_1<\cdots<j_K$, and let $s_k\geq j_k$ be their original deployment times. We now advance the deployment of candidate $j_k$ to time $j_k$, processing the switches in increasing order. The preceding switch has already been advanced to $j_{k-1}<j_k$, and the next switch remains later than $s_k\geq j_k$, so this change preserves the order and availability constraints. It replaces the preceding, lower-index candidate by $j_k$ on additional periods, weakly increasing expected value. The ordered pairs of candidates compared at actual switches do not change, so neither does the total risk. The final schedule either retains its incumbent or deploys the candidate trained at the current time. Applying this transformation to a maximizer proves the claim.
\end{proof}

\subsection{\texorpdfstring{Proof of \Cref{thm:knapsack}}{Proof of the path representation}}
\label{app:path-proof}

\begin{proof}
Given a newest-policy schedule, form the path $P=((t_0,t_1),\ldots,(t_K,t_{K+1}))$ from $1$ to $T+1$. On the edge $(t_k,t_{k+1})$, we deploy $\pi_{t_k}$ for exactly $h_k$ periods. Therefore
\begin{equation}
\label{eq:app-path-identities}
J_T(P)=\frac1T\sum_{k=0}^K h_k\nu_{t_k}
      =\sum_{(i,j)\in P}\boldsymbol{R}_{ij},
\qquad
S_T(P)=\sum_{k=0}^{K-1}p_{t_k,t_{k+1}}
      =\sum_{(i,j)\in P}\boldsymbol{S}_{ij}.
\end{equation}
The second equality follows from linearity of expectation applied to the unsafe-update indicators.

Conversely, every source--sink path in the time-ordered graph has strictly increasing vertices and defines a valid newest-policy schedule by retaining its current vertex's candidate until the next vertex. The terminal edge retains the last candidate through period $T$. The two identities in \eqref{eq:app-path-identities} again hold. Thus paths and newest-policy schedules have the same rewards and risks, and \Cref{lem:ss} establishes equivalence with \eqref{eq:obj}. The direct path $(1,T+1)$ represents no updates. For a zero-length subpath, we use $\mathcal P_{ii}=\{\varnothing\}$, with zero reward and zero resource consumption.
\end{proof}

\subsection{Correctness and complexity of the dynamic program}
\label{app:dp}

\begin{proposition}[Exact solution of the rounded empirical problem]
\label{prop:app-dp}
With $B_\Delta=\lfloor\epsilon/\Delta\rfloor$ and $c_{ij}=\lceil\widehat{\boldsymbol{S}}_{ij}/\Delta\rceil$, where $c_{i,T+1}=0$, \Cref{alg:main} maximizes the estimated objective over paths satisfying $\sum_{(i,j)\in P}c_{ij}\leq B_\Delta$. Given the edge estimates, its time and memory requirements are
\[
O\!\left(T^2(B_\Delta+1)\right),
\qquad
O\!\left(T^2+T(B_\Delta+1)\right),
\]
respectively, when the edge matrices and backtracking information are stored.
\end{proposition}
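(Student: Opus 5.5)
The plan is to prove correctness by induction over the topological (increasing) order of vertices, with an invariant stated for every budget level $b\in\{0,\dots,B_\Delta\}$. The invariant is
\[
F(j,b)=\max\Big\{\textstyle\sum_{(i,k)\in P}\widehat{\boldsymbol R}_{ik}:\ P\in\mathcal P_{1j},\ \sum_{(i,k)\in P}c_{ik}\le b\Big\},
\]
with the convention $\max\varnothing=-\infty$ and $\mathcal P_{11}=\{\varnothing\}$.

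\emph{Base case.} For $j=1$, only the empty path qualifies, with reward $0$ and cost $0\le b$, so $F(1,b)=0$ is correct.

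\emph{Inductive step.} Fix $j\ge2$ and assume the invariant for every $i<j$ and every budget.
\begin{itemize}
\item[] \emph{Upper bound.} Take any feasible path $P\in\mathcal P_{1j}$ and let $(i,j)$ be its last edge. Since all costs are nonnegative integers, $c_{ij}\le b$. The prefix $P'\in\mathcal P_{1i}$ has cost at most $b-c_{ij}\in\{0,\dots,B_\Delta\}$. By the hypothesis its reward is at most $F(i,b-c_{ij})$, so the reward of $P$ is at most the right side of \eqref{eq:bellman-recursion}.
\item[] \emph{Lower bound.} Every finite term in the recursion is attained. Concatenate an optimal prefix achieving $F(i,b-c_{ij})$ with the edge $(i,j)$. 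This gives a valid path in $\mathcal P_{1j}$: edges always go forward, so no cycle issues arise. Its cost is at most $b$ and its reward equals that term.
\item[] \emph{Infeasible states.} If every term is $-\infty$, then no feasible path exists and $F(j,b)=-\infty$ is correct.
\end{itemize}
Evaluating at $(T+1,B_\Delta)$ then gives the optimum of the rounded empirical problem. The terminal costs $c_{i,T+1}=0$ match the convention in \Cref{thm:knapsack}.

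\emph{Backtracking.} Storing the maximizing predecessor at each finite state and following the pointers from $(T+1,B_\Delta)$ reconstructs an optimal path. Each pointer leads to a state $(i,b-c_{ij})$ with $i<j$ whose value, by the lower-bound argument, equals that of an optimal prefix. The process therefore terminates at $(1,\cdot)$ after at most $T$ steps. I would also note that $F(j,b)$ is nondecreasing in $b$, though this is not essential.

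\emph{Complexity.} The count is direct. There are $T(B_\Delta+1)$ non-source states. Each state maximizes over at most $j-1\le T$ predecessors, and each predecessor requires an $O(1)$ lookup of $c_{ij}$, $\widehat{\boldsymbol R}_{ij}$ and $F(i,\cdot)$. This gives $O(T^2(B_\Delta+1))$ time. Computing $B_\Delta$ and all $c_{ij}$ costs $O(T^2)$, which is dominated. Memory consists of $O(T^2)$ for the edge matrices (reward, raw and rounded costs) and $O(T(B_\Delta+1))$ for the table $F$ and the predecessor pointers.

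The main obstacle is the boundary handling in the upper-bound direction. The argument needs the residual budget $b-c_{ij}$ to be a valid table index, which uses $c_{ij}\ge0$. It also needs the $-\infty$ convention to propagate correctly when a prefix state is infeasible. I would settle both by noting explicitly that $c_{ij}\in\mathbb Z_{\ge0}$ because $\widehat{\boldsymbol S}_{ij}\ge0$.
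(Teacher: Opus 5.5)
Your proposal is correct and follows essentially the same route as the paper. Both use induction over the node order with the invariant that $F(j,b)$ is the best estimated reward of a path from $1$ to $j$ with rounded cost at most $b$, a last-edge decomposition for the upper bound, concatenation for the converse, and a direct count of states (each with at most $T$ predecessor comparisons) plus $O(T^2)$ edge storage for the complexity bounds. The one point the paper states and you leave implicit is that the zero-cost direct edge $(1,T+1)$ makes $F(T+1,B_\Delta)$ finite even when $B_\Delta=0$. That finiteness is what guarantees backtracking from $(T+1,B_\Delta)$ always returns a path.
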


\begin{proof}
We prove by induction in $j$ that $F(j,b)$ is the largest estimated reward of a path from $1$ to $j$ with rounded cost at most $b$. The initialization $F(1,b)=0$ gives the assertion at the source. A path reaching $j>1$ has a unique last edge $(i,j)$, where $i<j$. Removing that edge leaves a path with available budget $b-c_{ij}$. By the induction hypothesis, its reward is at most $F(i,b-c_{ij})$. Conversely, a path attaining a finite predecessor state followed by $(i,j)$ is a valid path to $j$ with the prescribed budget. Taking the maximum over predecessors gives the Bellman recursion in the main paper. Induction proves the state interpretation, and backtracking from $(T+1,B_\Delta)$ recovers an optimizer.

There are $(T+1)(B_\Delta+1)$ states, with at most $T$ predecessor comparisons per state. The state values and predecessor pointers require $O(T(B_\Delta+1))$ memory, and the edge matrices require $O(T^2)$ memory. Forming all pairwise empirical risks from $N$ value trajectories separately costs $O(NT^2)$ time. The source--sink edge has zero cost, so the rounded problem remains feasible even when $B_\Delta=0$.
\end{proof}

\subsection{Choosing the budget resolution}
\label{app:perturbation-proof}

Let $\mathcal M_{T,\bar K}$ be the newest-policy schedules with at most $\bar K$ actual updates, and define
\begin{equation}
\label{eq:app-frontier-definition}
J^\star_{T,\bar K}(b)
=\max\{J_T(P):P\in\mathcal M_{T,\bar K},\ S_T(P)\leq b\},
\qquad b\geq0.
\end{equation}
Without a count restriction, we take $\bar K=T-1$. With a restriction, we append a count coordinate to the dynamic-programming state and increase it only when $j\leq T$. Write $\widehat\nu_i=N^{-1}\sum_n\widehat V_i^{(n)}$, so that $\widehat{\boldsymbol{R}}_{ij}=(j-i)\widehat\nu_i/T$.

\begin{proposition}[Estimation and gridding error]
\label{prop:app-grid-error}
Suppose, on an event $\mathcal A$, that
\[
\max_i|\widehat\nu_i-\nu_i|\leq r,
\qquad
\max_{i<j\leq T}|\widehat{\boldsymbol{S}}_{ij}-p_{ij}|\leq s.
\]
Let $\widehat P_\Delta$ solve the rounded empirical problem within $\mathcal M_{T,\bar K}$, and put
\[
g_\Delta=\epsilon-\Delta\lfloor\epsilon/\Delta\rfloor,
\qquad
b_\Delta=\bar K(s+\Delta)+g_\Delta.
\]
On $\mathcal A$, we have $S_T(\widehat P_\Delta)\leq\epsilon+\bar K s$. If $b_\Delta\leq\epsilon$, then
\begin{align}
\label{eq:app-grid-sandwich}
J^\star_{T,\bar K}(\epsilon-b_\Delta)-r
&\leq\widehat J_T(\widehat P_\Delta)
\leq J^\star_{T,\bar K}(\epsilon+\bar K s)+r,\\
\label{eq:app-grid-oracle}
J_T(\widehat P_\Delta)
&\geq J^\star_{T,\bar K}(\epsilon-b_\Delta)-2r.
\end{align}
In particular, for $\omega_\epsilon(b)=J^\star_{T,\bar K}(\epsilon)-J^\star_{T,\bar K}(\epsilon-b)$,
\begin{equation}
\label{eq:app-grid-modulus}
J^\star_{T,\bar K}(\epsilon)-J_T(\widehat P_\Delta)
\leq 2r+\omega_\epsilon(b_\Delta).
\end{equation}
\end{proposition}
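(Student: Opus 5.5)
The argument compares three problems over the same path class $\mathcal M_{T,\bar K}$: the population problem, the empirical problem with costs $\widehat{\boldsymbol S}$, and the rounded empirical problem solved by \Cref{alg:main}. All comparisons are pathwise and hold on $\mathcal A$. Each path has at most $\bar K$ actual updates, so at most $\bar K$ edges with nonzero risk; terminal edges have zero true, empirical and rounded cost. The reward side is immediate: for any path $P$,
\[
|\widehat J_T(P)-J_T(P)|=\Big|\tfrac1T\sum_k h_k(\widehat\nu_{t_k}-\nu_{t_k})\Big|\le r,
\]
since the holding lengths sum to $T$.

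\textbf{Feasibility of the output.} By \Cref{prop:app-dp}, $\widehat P_\Delta$ satisfies $\sum c_{ij}\le B_\Delta$. Hence $\sum\widehat{\boldsymbol S}_{ij}\le \Delta B_\Delta\le\epsilon$, as shown in the main text. Replacing each of the at most $\bar K$ nonterminal empirical costs by $p_{ij}$ changes the sum by at most $\bar K s$, so $S_T(\widehat P_\Delta)\le\epsilon+\bar K s$.

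\textbf{Upper sandwich.} Since $\widehat P_\Delta$ is population-feasible at level $\epsilon+\bar Ks$, we get $\widehat J_T(\widehat P_\Delta)\le J_T(\widehat P_\Delta)+r\le J^\star_{T,\bar K}(\epsilon+\bar Ks)+r$.

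\textbf{Lower sandwich.} This is the main step. Let $P^\circ$ attain $J^\star_{T,\bar K}(\epsilon-b_\Delta)$; the maximum exists because the class is finite and the no-update path is feasible. We must show $P^\circ$ is feasible for the rounded problem. Using $\lceil x/\Delta\rceil\le x/\Delta+1$ and at most $\bar K$ nonzero edges,
\[
\Delta\sum_{(i,j)\in P^\circ}c_{ij}\le\sum\widehat{\boldsymbol S}_{ij}+\bar K\Delta\le S_T(P^\circ)+\bar Ks+\bar K\Delta\le \epsilon-b_\Delta+\bar K(s+\Delta)=\epsilon-g_\Delta=\Delta B_\Delta.
\]
Here the rounding error $g_\Delta$ is exactly absorbed by the definition of $b_\Delta$. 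Dividing by $\Delta$ gives $\sum c_{ij}\le B_\Delta$, so $P^\circ$ is feasible for the rounded problem. Optimality of $\widehat P_\Delta$ then gives
\[
\widehat J_T(\widehat P_\Delta)\ge \widehat J_T(P^\circ)\ge J_T(P^\circ)-r=J^\star_{T,\bar K}(\epsilon-b_\Delta)-r.
\]
The only delicacy is the convention for zero-cost edges. Zero empirical cost gives $c_{ij}=0$, so the $+1$ rounding slack is needed only on nonterminal edges, which number at most $\bar K$. This holds in particular for terminal edges.

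\textbf{Oracle bound and modulus.} For \eqref{eq:app-grid-oracle}, apply the reward comparison once more: $J_T(\widehat P_\Delta)\ge\widehat J_T(\widehat P_\Delta)-r\ge J^\star_{T,\bar K}(\epsilon-b_\Delta)-2r$. Finally, \eqref{eq:app-grid-modulus} follows by subtracting this inequality from $J^\star_{T,\bar K}(\epsilon)$ and using the definition of $\omega_\epsilon$.
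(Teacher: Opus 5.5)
Your proposal is correct and follows the paper's proof essentially step for step. Both arguments use the same ingredients: the uniform $r$-bound on rewards (holding fractions sum to one), the transfer of the rounded budget $\Delta B_\Delta\le\epsilon$ to population risk with an additive $\bar K s$ for the upper sandwich, and rounded feasibility of any comparator at budget $\epsilon-b_\Delta$ via $\lceil x/\Delta\rceil\le x/\Delta+1$ on at most $\bar K$ actual transitions, using $\Delta B_\Delta=\epsilon-g_\Delta$. That feasibility, together with empirical optimality, yields the lower sandwich, the oracle bound, and the modulus.
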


\begin{proof}
For every schedule, its holding fractions are nonnegative and sum to one. Hence, simultaneously over all paths,
\begin{equation}
\label{eq:app-reward-uniform}
|\widehat J_T(P)-J_T(P)|
\leq \sum_{(i,j)\in P}\frac{j-i}{T}|\widehat\nu_i-\nu_i|
\leq r.
\end{equation}
There is no factor $\bar K$ in this reward bound. Rounded feasibility implies
\[
S_T(\widehat P_\Delta)
\leq\sum_{e\in\widehat P_\Delta}\widehat{\boldsymbol{S}}_e+\bar K s
\leq\Delta B_\Delta+\bar K s
\leq\epsilon+\bar K s.
\]
For a comparator $P$ with $S_T(P)\leq\epsilon-b_\Delta$, upward rounding on its actual transitions gives
\[
\Delta\sum_{e\in P}c_e
\leq S_T(P)+\bar K s+\bar K\Delta
\leq\epsilon-g_\Delta=\Delta B_\Delta.
\]
The terminal edge contributes no rounding error. Thus every such population comparator is feasible for the rounded empirical problem. Empirical optimality and \eqref{eq:app-reward-uniform} give the left inequality in \eqref{eq:app-grid-sandwich} and \eqref{eq:app-grid-oracle}. The risk bound for the selected path gives the right inequality. Subtracting \eqref{eq:app-grid-oracle} from the population frontier at $\epsilon$ proves \eqref{eq:app-grid-modulus}.
\end{proof}

To derive an explicit dependence on $N$ and $T$, suppose the historical value vectors are independent copies of the future value vector, their coordinates lie in $[0,1]$, and their population values are observed without evaluation error. With $M=T(T-1)/2$ and $\delta\in(0,1)$, the bounded-mean inequality proved below and a union bound give
\begin{equation}
\label{eq:app-grid-hoeffding}
r=s=\sqrt{\frac{\log(2(T+M)/\delta)}{2N}},
\qquad \mathbb P(\mathcal A)\geq1-\delta.
\end{equation}
Consequently the direct reward error is $O(\sqrt{\log(T/\delta)/N})$, whereas the safety-budget displacement is
\[
O\!\left(\bar K\sqrt{\frac{\log(T/\delta)}{N}}+\bar K\Delta+\Delta\right).
\]
With no count cap, $\bar K=T-1$. No assumption about independence of edges within one trajectory is needed. For estimated population values, we use the evaluation result in \Cref{prop:app-evaluation} rather than interpreting noisy comparisons as observed true signs.

A budget displacement cannot generally be replaced by a uniform Lipschitz error in the objective. For example, let $T=2$, $V_1=1/2$, and let $V_2$ equal $1$ with probability $1-p$ and $0$ with probability $p$, where $0<p<1/2$. The frontier jumps from $1/2$ to $3/4-p/2$ at budget $p$. At $\epsilon=p$, any grid for which $p/\Delta$ is not an integer excludes the update, even along a sequence $\Delta\downarrow0$. Thus \eqref{eq:app-grid-modulus}, rather than an unconditional $O(\Delta)$ value-loss claim, is the appropriate guarantee. If an optimizer has safety slack at least $b_\Delta$, the modulus is zero; a Lipschitz frontier bound, when separately available on the relevant interval, gives the corresponding linear bound.

\paragraph{A theoretical resolution criterion.} For an allocated gridding tolerance $\kappa>0$ in risk units, we can choose $\Delta\leq\kappa/(\bar K+1)$, because $\bar K\Delta+g_\Delta\leq(\bar K+1)\Delta$. We should not make the grid substantially finer than the statistical accuracy unless a near-binding comparator makes that worthwhile. In the plug-in bound \eqref{eq:app-grid-hoeffding}, balancing $\bar K\Delta$ against $\bar K s$ suggests $\Delta$ of order $s$ or smaller. For the uncertainty-aware bound below, the analogous balance is $\Delta$ of order $\sqrt{\epsilon a_N/\bar K}+a_N$ when $\bar K\geq1$. Smaller $\Delta$ improves the budget approximation but increases the dynamic-programming cost through $B_\Delta\asymp\epsilon/\Delta$.

\paragraph{An empirical refinement procedure.} We can begin with $\Delta=\epsilon/B$ for a modest positive integer $B$, then double $B$ until the selected reward and schedule stabilize at a prespecified tolerance or the computational limit is reached. These nested grids have no residual $g_\Delta$, and every previously feasible path remains feasible after refinement because $\lceil2x\rceil\leq2\lceil x\rceil$. We record both the rounded and unrounded resource consumption at each refinement. Numerical stability is an empirical stopping rule, not a population-safety certificate; independent evaluation or the simultaneous bounds below provide that certificate. When $\epsilon=0$, we instead retain only edges with exactly zero chosen risk cost and solve the resulting acyclic path problem.

\subsection{An exact labeling alternative}
\label{app:labeling}

We can solve the unrounded path problem by propagating nondominated labels. Let $a_{ij}\geq0$ be the chosen edge resource, either a population risk, an empirical risk, or a conservative upper confidence cost, with $a_{i,T+1}=0$. A label at node $j$ stores its cumulative resource $s$, cumulative reward $r$, and a predecessor pointer. At the same node, $(s,r)$ dominates $(s',r')$ when $s\leq s'$ and $r\geq r'$, with at least one strict inequality. We retain only one representative when both coordinates agree.

\begin{algorithm}[H]
\caption{Exact nondominated-label algorithm}
\label{alg:app-labeling}
\begin{algorithmic}[1]
\REQUIRE Edge rewards $\widehat{\boldsymbol{R}}_{ij}$, nonnegative resources $a_{ij}$, budget $\epsilon$.
\STATE Set $\mathcal L_1=\{(0,0,\mathrm{null})\}$ and $\mathcal L_j=\varnothing$ for $j>1$.
\FOR{$j=2,\ldots,T+1$}
  \STATE Form labels $(s+a_{ij},r+\widehat{\boldsymbol{R}}_{ij},\mathrm{pointer})$ from every $i<j$ and every $(s,r,\cdot)\in\mathcal L_i$.
  \STATE Discard labels with resource greater than $\epsilon$.
  \STATE Sort the remaining labels by increasing resource and discard dominated labels; store the survivors in $\mathcal L_j$.
\ENDFOR
\STATE Return the highest-reward label in $\mathcal L_{T+1}$ and recover its path by backtracking.
\end{algorithmic}
\end{algorithm}

\begin{proposition}[Exactness of labeling]
\label{prop:app-labeling}
\Cref{alg:app-labeling} returns an exact maximizer of the unrounded problem for its chosen edge quantities. If $A_j$ labels are generated at node $j$ before pruning and $L_j$ survive, an implementation using sorting requires
\[
O\!\left(T^2+\sum_{j=2}^{T+1}A_j\log(A_j+1)\right)
\]
time and $O(T^2+\sum_jL_j)$ persistent storage, in addition to temporary storage for the labels being processed. The numbers of labels can be exponential in the size of a general resource-constrained acyclic path instance.
\end{proposition}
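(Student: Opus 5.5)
We prove \Cref{prop:app-labeling} in three parts: correctness via a domination invariant, then the running-time and storage accounting, and finally an exponential family of instances showing the label count cannot be bounded polynomially in general.

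\emph{Correctness.} We argue by induction on $j$ in topological order. The invariant is: for every path $P$ from $1$ to $j$ with cumulative resource $s(P)\le\epsilon$ and reward $r(P)$, the set $\mathcal L_j$ contains a label $(s,r)$ with $s\le s(P)$ and $r\ge r(P)$. At $j=1$ the only path is the empty one, and the label $(0,0)$ covers it.

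For $j>1$, fix such a path $P$ and let $(i,j)$ be its last edge. The prefix $P'$ has resource $s(P)-a_{ij}\le\epsilon$, using $a_{ij}\ge0$. By the induction hypothesis, $\mathcal L_i$ contains a label $(s',r')$ that weakly dominates $P'$. Extending this label along $(i,j)$ gives a candidate $(s'+a_{ij},\,r'+\widehat{\boldsymbol R}_{ij})$, which weakly dominates $P$. Its resource is at most $s(P)\le\epsilon$, so the feasibility filter does not discard it.

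The pruning step deletes a label only when some retained label dominates it, or when both coordinates coincide. Dominance is transitive, so some survivor still weakly dominates $P$. Every label also corresponds to a genuine path, since pointers are recorded when labels are formed. Hence:
\begin{itemize}
\item by the invariant, the maximum reward in $\mathcal L_{T+1}$ is at least the optimal value;
\item every surviving label is feasible, so that maximum is also at most the optimal value.
\end{itemize}
Backtracking from the best label recovers an optimal path. Feasibility of the problem is guaranteed by the edge $(1,T+1)$, which has zero resource.

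\emph{Complexity.} Reading the edge data costs $O(T^2)$. Forming the $A_j$ candidate labels at node $j$ takes $O(A_j)$ time, and sorting them takes $O(A_j\log(A_j+1))$. Pruning is then a single sweep in increasing resource order that keeps a label only if its reward strictly exceeds the running maximum, which is $O(A_j)$. Summing over $j$ gives the stated time bound. Persistent storage consists of the edge matrices plus the surviving labels with their pointers, giving $O(T^2+\sum_j L_j)$.

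\emph{Exponential label count.} We take the standard knapsack-type construction. Consider a chain in which, at each of $n$ stages, the path either takes a direct edge or a detour through an extra node. We choose resources $a=2^k$ and rewards $2^k$ on the stage-$k$ detour, so that each subset of detours gives a distinct $(s,r)$ pair with $r=s$.

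The main obstacle is that with $r=s$ no label dominates another only if ties are absent, which holds here since all sums are distinct. With budget $\epsilon=2^n$, all $2^n$ labels survive at the sink. This embeds into the complete DAG $\mathcal G$ by setting the unused edges to have resource above $\epsilon$, so that they are discarded by the feasibility filter.
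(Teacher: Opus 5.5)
Your proposal is correct and follows essentially the same route as the paper: you use a forward induction showing that every feasible prefix is weakly dominated by a retained label (using nonnegative resources and transitivity of dominance), the same $O(A_j\log(A_j+1))$ sort-and-sweep accounting, and the same knapsack-style chain of binary choices whose increments are equal in resource and reward ($2^{k}$ in yours, $2^{-k}$ in the paper's), which produces pairwise nondominated labels $(s,s)$. Two small details to tidy: in the sweep, break equal-resource ties by decreasing reward (as the paper does) so that no dominated label with the same resource survives, and index the stages $k=0,\dots,n-1$ so that the budget $\epsilon=2^n$ admits every subset.
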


\begin{proof}
Any suffix from a node adds the same resource and reward to two labels at that node. Extending the dominating label therefore yields no larger resource and no smaller reward. Discarding a dominated prefix cannot remove the sole representative of a better feasible complete path. Nonnegative resources likewise justify discarding an over-budget prefix. Induction over the node order now shows that every feasible prefix is represented or dominated by a retained label. At the sink, selecting the largest reward is therefore exact.

Generating the incoming labels costs $O(A_j)$ operations. After sorting by resource, we keep a label only when its reward exceeds the best reward at a no-larger resource; equal-resource ties are resolved by retaining the largest reward. This costs $O(A_j\log(A_j+1))$ time. The stated storage follows from the retained labels and their pointers. To see the worst-case obstruction, consider a chain of binary-choice gadgets with resource and reward increments both equal to $2^{-k}$ for taking choice $k$, and zero for skipping it. Under a sufficiently large budget, every subset generates a different pair $(s,s)$. None dominates another, so there are exponentially many nondominated labels.
\end{proof}

Labeling avoids budget rounding and can be attractive when pruning leaves a small frontier. In contrast, the grid dynamic program bounds the resource dimension in advance and has the predictable pseudo-polynomial cost in \Cref{prop:app-dp}; this worst-case computational distinction motivates our main-paper choice. Exactness here refers to the specified real edge quantities, so floating-point tolerances must not be used to prune labels whose dominance is unresolved. With an update-count cap, we compare labels only at equal node and count, or include count in the dominance relation.

\subsection{An uncertainty-aware extension}
\label{app:uncertainty}

The plug-in costs in \Cref{alg:main} can underestimate population risk. We can instead assign an upper confidence cost to each actual edge and apply exactly the same path solver. The companions establish this construction for a fixed number of independent historical trajectories. We obtain an \emph{anytime-valid} version by allocating the failure probability across sample sizes. Here ``anytime'' refers to the number of historical trajectories used to plan the schedule, not to adapting deployment decisions to the realized future training stream.

\begin{assumption}[Historical trajectories for certification]
\label{ass:app-historical}
The vectors $(V_1^{(n)},\ldots,V_T^{(n)})$, $n\geq1$, are independent and identically distributed copies of the future candidate-value vector and are independent of that future vector. Every coordinate lies in $[0,1]$. Dependence between coordinates within a vector is unrestricted.
\end{assumption}

This bounded-value assumption is used only for the statistical guarantees, separately from the Gaussian value model used for the structural asymptotics. Note that independent resamples of a single observed dataset are not, merely by being generated separately, independent population trajectories satisfying \Cref{ass:app-historical}.

\begin{lemma}[Concentration and an empirical upper confidence cost]
\label{lem:app-concentration}
For $N$ independent $[0,1]$ variables with common mean $v$ and average $\bar X$, and $N$ independent Bernoulli variables with mean $p$ and average $\bar B$, we have, for $\ell>0$ and $a=\ell/N$,
\begin{equation}
\label{eq:app-concentration}
\mathbb P\!\left(|\bar X-v|>\sqrt{\frac{\ell}{2N}}\right)\leq2e^{-\ell},
\qquad
\mathbb P\!\left(|\bar B-p|>\sqrt{2pa}+\frac{2a}{3}\right)\leq2e^{-\ell}.
\end{equation}
The function $U_a(x)=\min\{1,x+\sqrt{2ax}+4a\}$ is nondecreasing on $[0,1]$. Whenever $|x-p|\leq\sqrt{2pa}+2a/3$, it satisfies
\begin{equation}
\label{eq:app-U-sandwich}
p\leq U_a(x)\leq p+2\sqrt{2pa}+7a.
\end{equation}
\end{lemma}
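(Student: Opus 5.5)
The statement has three parts: a Hoeffding bound for the $[0,1]$ average, a Bernstein bound for the Bernoulli average, and a deterministic sandwich for $U_a$. I will prove them in that order.

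\emph{Hoeffding part.} Independent $[0,1]$ variables satisfy
\[
\mathbb P(|\bar X-v|>t)\le 2e^{-2Nt^2}.
\]
Choosing $t=\sqrt{\ell/(2N)}$ gives $2Nt^2=\ell$, which is the first inequality in \eqref{eq:app-concentration}.

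\emph{Bernstein part.} The centered Bernoulli variables $B_n-p$ are bounded by $1$ in absolute value and have variance $p(1-p)\le p$. Bernstein's inequality in its inverted form says that, for each sign, with probability at least $1-e^{-\ell}$,
\[
\pm(\bar B-p)\le\sqrt{\frac{2p(1-p)\ell}{N}}+\frac{\ell}{3N}.
\]
Since $p(1-p)\le p$ and $\ell/(3N)=a/3\le 2a/3$, a union bound over the two signs gives the second inequality. The slack from $a/3$ up to $2a/3$ is harmless, so I will just cite the one-sided form of Bernstein's inequality.

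\emph{Monotonicity of $U_a$.} The map $x\mapsto x+\sqrt{2ax}+4a$ is a sum of nondecreasing functions, and taking a minimum with the constant $1$ preserves monotonicity.

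\emph{Upper half of \eqref{eq:app-U-sandwich}.} Assume $|x-p|\le\sqrt{2pa}+2a/3$. Then $x\le p+\sqrt{2pa}+2a/3$. I need a bound on $\sqrt{2ax}$. Using $\sqrt{u+w}\le\sqrt u+\sqrt w$,
\[
\sqrt{2ax}\le\sqrt{2ap}+\sqrt{2a\sqrt{2pa}}+\sqrt{4a^2/3}.
\]
By AM--GM,
\[
\sqrt{2a\sqrt{2pa}}=\sqrt{2a}\,(2pa)^{1/4}\le\tfrac12\bigl(\sqrt{2pa}+2a\bigr)\le\sqrt{2pa}+a.
\]
Also $\sqrt{4/3}\,a<1.2a$. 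Adding the pieces,
\[
x+\sqrt{2ax}+4a\le p+3\sqrt{2pa}+\Bigl(\tfrac23+1+1.2+4\Bigr)a.
\]
This is slightly worse than the target $p+2\sqrt{2pa}+7a$, so I will apply AM--GM with unequal weights:
\[
\sqrt{2a}\,(2pa)^{1/4}\le \tfrac{\theta}{2}\sqrt{2pa}+\tfrac{1}{2\theta}\,2a.
\]
The coefficient of $\sqrt{2pa}$ becomes $2+\theta/2$ and the coefficient of $a$ becomes $\tfrac23+\tfrac1\theta+1.16+4$. Pushing $\theta$ small enough to make the first coefficient exactly $2$ makes the $a$-term blow up. So with only this decomposition the target constants look tight, and this is the step I expect to be the main obstacle.

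The better route is to bound $\sqrt{x}$ directly. From $x\le p+\sqrt{2pa}+2a/3\le(\sqrt p+c\sqrt a)^2$ with $c^2$ chosen suitably, one gets $\sqrt{2ax}\le\sqrt{2pa}+c\sqrt2\,a$. Expanding $(\sqrt p+c\sqrt a)^2=p+2c\sqrt{pa}+c^2a$, I need $2c\ge\sqrt2$ and $c^2\ge 2/3$. Taking $c=1$ works. Then
\[
U_a(x)\le p+\sqrt{2pa}+\tfrac23a+\sqrt{2pa}+\sqrt2\,a+4a\le p+2\sqrt{2pa}+7a,
\]
since $\tfrac23+\sqrt2+4<7$. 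The cap at $1$ only lowers $U_a$, so the bound persists.

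\emph{Lower half, $U_a(x)\ge p$.} If the minimum equals $1$, this holds because $p\le1$. Otherwise I need $x+\sqrt{2ax}+4a\ge p$. If $x\ge p$ this is trivial. If $x<p$, the hypothesis gives $p-x\le\sqrt{2pa}+2a/3$. Using $\sqrt p\le\sqrt x+\sqrt{p-x}$,
\[
\sqrt{2pa}\le\sqrt{2ax}+\sqrt{2a(p-x)}\le\sqrt{2ax}+\tfrac{p-x}{2}+a.
\]
Substituting into the hypothesis,
\[
p-x\le\sqrt{2ax}+\tfrac{p-x}{2}+\tfrac53a,
\]
so
\[
p-x\le 2\sqrt{2ax}+\tfrac{10}{3}a.
\]
This falls short of the desired $p-x\le\sqrt{2ax}+4a$ by a factor of $2$ on the root term. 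To fix it, I will solve the quadratic in $\sqrt p$ exactly. The hypothesis reads
\[
p-\sqrt{2a}\,\sqrt p-\bigl(x+\tfrac23a\bigr)\le0,
\]
which gives
\[
\sqrt p\le\frac{\sqrt{2a}+\sqrt{2a+4x+\tfrac83a}}{2}.
\]
Squaring,
\[
p\le\frac{2a+2a+4x+\tfrac83a+2\sqrt{2a}\sqrt{4x+\tfrac{14}{3}a}}{4}.
\]
Using $\sqrt{u+w}\le\sqrt u+\sqrt w$ on the last square root, this is at most
\[
x+\tfrac53a+\sqrt{2ax}+\sqrt{\tfrac{7}{3}}\,a\le x+\sqrt{2ax}+4a,
\]
since $\tfrac53+\sqrt{7/3}<4$. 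This completes the lower half and hence \eqref{eq:app-U-sandwich}.
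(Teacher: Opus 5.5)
Your proof is correct and follows essentially the paper's route: Hoeffding and Bernstein for the two tail bounds, solving the quadratic in $\sqrt p$ to get $p\le x+\tfrac53a+\sqrt{2ax+7a^2/3}\le x+\sqrt{2ax}+4a$, and using $x\le(\sqrt p+\sqrt a)^2$ for the upper half, with the same constants $5/3+\sqrt{7/3}<4$ and $2/3+\sqrt2+4<7$. The differences are minor. You cite the inverted sub-gamma form of Bernstein with an $a/3$ term, whereas the paper derives the tail bound $2\exp\{-Nt^2/(2(p+t/3))\}$ and needs the full $2a/3$. You should also delete the abandoned AM--GM attempts from the final write-up.
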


\begin{proof}
For a variable $X\in[0,1]$, the second derivative of $k(\lambda)=\log\mathbb E[e^{\lambda X}]$ is its variance under exponential tilting, which is at most $1/4$. Integrating twice gives $\log\mathbb E[e^{\lambda(X-v)}]\leq\lambda^2/8$. Independence and exponential Markov bounds, optimized over positive and negative $\lambda$, give the first inequality in \eqref{eq:app-concentration}.

For a Bernoulli variable $B$, we have
\[
\log\mathbb E[e^{\lambda(B-p)}]
\leq p(e^\lambda-1-\lambda)
\leq\frac{p\lambda^2}{2(1-|\lambda|/3)},\qquad |\lambda|<3.
\]
The last bound follows by expanding the exponential and using $k!\geq2\cdot3^{k-2}$ for $k\geq2$; the negative-argument bound follows as well. For $p>0$, optimization with $\lambda=t/(p+t/3)$ yields
\[
\mathbb P(|\bar B-p|>t)\leq2\exp\!\left(-\frac{Nt^2}{2(p+t/3)}\right).
\]
At $t=\sqrt{2pa}+2a/3$, direct expansion gives $t^2\geq2a(p+t/3)$, proving the second inequality. If $p=0$, the assertion is immediate.

The map $U_a$ is nondecreasing. Solving $p-x\leq\sqrt{2pa}+2a/3$ as a quadratic inequality in $\sqrt p$ gives
\[
p\leq\left(\sqrt{a/2}+\sqrt{x+7a/6}\right)^2
=x+\frac{5a}{3}+\sqrt{2ax+\frac{7a^2}{3}}
\leq x+\sqrt{2ax}+4a.
\]
Clipping at one preserves the lower bound. Also, $x\leq p+\sqrt{2pa}+2a/3\leq(\sqrt p+\sqrt a)^2$, and hence
\[
U_a(x)-p\leq(x-p)+\sqrt{2ax}+4a
\leq2\sqrt{2pa}+(2/3+\sqrt2+4)a
\leq2\sqrt{2pa}+7a.
\]
This proves \eqref{eq:app-U-sandwich}.
\end{proof}

At historical sample size $N$, let $\widehat\nu_{i,N}$ and $\widehat p_{ij,N}$ be the empirical value means and strict-failure frequencies. For a total failure probability $\delta\in(0,1)$, define
\begin{equation}
\label{eq:app-anytime-costs}
\delta_N=\frac{\delta}{N(N+1)},\qquad
\ell_N=\log\frac{2(M+T)}{\delta_N},\qquad
a_N=\frac{\ell_N}{N},\qquad
r_N=\sqrt{\frac{\ell_N}{2N}},\qquad
U_{ij,N}=U_{a_N}(\widehat p_{ij,N}),
\end{equation}
with $U_{i,T+1,N}=0$. We round $U_{ij,N}$ upward instead of rounding $\widehat p_{ij,N}$, use the same estimated rewards, and solve the path problem at budget $\epsilon$. Denote the resulting schedule by $\widehat P_N$.

\begin{theorem}[Anytime population safety and oracle performance]
\label{thm:app-anytime}
Under \Cref{ass:app-historical}, define
\begin{equation}
\label{eq:app-anytime-budget}
b_N=2\sqrt{2\bar K\epsilon a_N}+7\bar K a_N+\bar K\Delta+g_\Delta.
\end{equation}
With probability at least $1-\delta$, simultaneously for every integer $N\geq1$,
\begin{equation}
\label{eq:app-anytime-guarantee}
S_T(\widehat P_N)\leq\epsilon,
\qquad
J_T(\widehat P_N)\geq J^\star_{T,\bar K}(\epsilon-b_N)-2r_N
\quad\text{whenever }b_N\leq\epsilon.
\end{equation}
The no-update schedule is feasible for every historical dataset. The same guarantees apply at any almost surely finite, data-dependent stopping sample size. Including the randomness of the historical data and the independent future trajectory, the probability of at least one unsafe future update is at most $\epsilon+\delta$.
\end{theorem}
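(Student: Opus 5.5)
The argument constructs a single good event $\mathcal G$ of probability at least $1-\delta$ on which, for every $N\geq1$, all empirical value means and all empirical failure frequencies are simultaneously accurate. It then repeats the deterministic argument of \Cref{prop:app-grid-error} with $U_{ij,N}$ in place of $\widehat{\boldsymbol S}_{ij}$.

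\textbf{Step 1 (good event).} For fixed $N$, apply \Cref{lem:app-concentration} with $\ell=\ell_N$ to each of the $T$ value means $\widehat\nu_{i,N}$ and each of the $M$ Bernoulli frequencies $\widehat p_{ij,N}$. The latter are averages of $\mathbbm 1\{V_j^{(n)}<V_i^{(n)}\}$, which are i.i.d.\ across $n$ by \Cref{ass:app-historical}; dependence within a trajectory is irrelevant because the union bound is taken over coordinates. Each deviation event has probability at most $2e^{-\ell_N}=\delta_N/(M+T)$, so a union bound over the $M+T$ quantities gives total failure probability at most $\delta_N$ at sample size $N$. Summing over $N$ gives $\sum_N\delta/(N(N+1))=\delta$. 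On the complement $\mathcal G$, for all $N$:
\[
|\widehat\nu_{i,N}-\nu_i|\leq r_N,\qquad |\widehat p_{ij,N}-p_{ij}|\leq\sqrt{2p_{ij}a_N}+2a_N/3 .
\]
By \eqref{eq:app-U-sandwich}, this yields $p_{ij}\leq U_{ij,N}\leq p_{ij}+2\sqrt{2p_{ij}a_N}+7a_N$. Since the statement holds simultaneously for all $N$, it also holds at any a.s.\ finite data-dependent stopping size $\widehat N$, because $\mathcal G$ does not depend on how $N$ is chosen.

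\textbf{Step 2 (safety).} On $\mathcal G$, rounded feasibility of $\widehat P_N$ gives
\[
S_T(\widehat P_N)=\sum p_e\leq\sum U_e\leq\Delta\sum\lceil U_e/\Delta\rceil\leq\Delta B_\Delta\leq\epsilon .
\]
The no-update path has zero cost, so it is always feasible.

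\textbf{Step 3 (oracle performance).} Take a comparator $P\in\mathcal M_{T,\bar K}$ with $S_T(P)\leq\epsilon-b_N$; it has at most $\bar K$ actual transitions, and the terminal edge costs zero. Its rounded cost satisfies
\[
\Delta\sum\lceil U_e/\Delta\rceil\leq\sum U_e+\bar K\Delta\leq S_T(P)+2\sqrt{2a_N}\sum_e\sqrt{p_e}+7\bar K a_N+\bar K\Delta .
\]
By Cauchy--Schwarz, $\sum_e\sqrt{p_e}\leq\sqrt{\bar K\,S_T(P)}\leq\sqrt{\bar K\epsilon}$. Hence the rounded cost is at most $\epsilon-b_N+b_N-g_\Delta=\Delta B_\Delta$, so $P$ is feasible for the rounded problem. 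Empirical optimality together with the uniform reward bound \eqref{eq:app-reward-uniform} (error $r_N$ on each side) gives $J_T(\widehat P_N)\geq\widehat J_T(\widehat P_N)-r_N\geq\widehat J_T(P)-r_N\geq J_T(P)-2r_N$. Taking the supremum over $P$ gives the claim. This step is the main obstacle, because the square-root term must be controlled uniformly over all comparators, and Cauchy--Schwarz against the budget $\epsilon$ is what makes $b_N$ independent of the particular path.

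\textbf{Step 4 (unconditional statement).} The future trajectory is independent of the historical data, so conditionally on the history the planned schedule is fixed. By Markov's inequality, the probability of at least one unsafe future update is at most $\mathbb E[\#\text{unsafe}\mid\text{history}]=S_T(\widehat P_N)$, which is at most $\epsilon$ on $\mathcal G$. Off $\mathcal G$, bound this probability by $1$. Averaging over the history gives at most $\epsilon+\delta$.
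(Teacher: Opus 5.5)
Your proposal is correct and follows the paper's proof essentially step for step. You build the same time-uniform good event from \Cref{lem:app-concentration} with $\delta_N=\delta/(N(N+1))$, use the same rounding chain for safety, use the same Cauchy--Schwarz bound $\sum_e\sqrt{p_e}\le\sqrt{\bar K\epsilon}$ to make $b_N$ independent of the path, make the same $2r_N$ reward comparison, and use the same conditioning on the history for the $\epsilon+\delta$ claim (your Markov step is the paper's union bound).
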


\begin{proof}
At sample size $N$, apply \Cref{lem:app-concentration} to all $T$ candidate coordinates and all $M$ actual edge indicators. A union bound with \eqref{eq:app-anytime-costs} gives failure probability at most $\delta_N$. Since $\sum_{N\geq1}\delta_N=\delta$, a second union bound gives an event $\mathcal A_\infty$ of probability at least $1-\delta$ on which, for every $N$,
\begin{equation}
\label{eq:app-all-N-event}
\max_i|\widehat\nu_{i,N}-\nu_i|\leq r_N,
\qquad
p_e\leq U_{e,N}\leq p_e+2\sqrt{2p_ea_N}+7a_N
\quad(e\text{ an actual edge}).
\end{equation}
The estimates at different sample sizes need not be independent.

On this event, every rounded-feasible path satisfies
\[
S_T(P)\leq\sum_{e\in P}U_{e,N}
\leq\Delta\sum_{e\in P}\left\lceil U_{e,N}/\Delta\right\rceil
\leq\Delta B_\Delta\leq\epsilon.
\]
This is simultaneous over paths and therefore holds for $\widehat P_N$. For a comparator with at most $\bar K$ updates and true risk $S$, Cauchy--Schwarz gives
\begin{equation}
\label{eq:app-certification-cost}
\sum_{e\in P}U_{e,N}
\leq S+2\sqrt{2a_N}\sum_{e\in P,\ e\text{ actual}}\sqrt{p_e}+7\bar K a_N
\leq S+2\sqrt{2\bar K S a_N}+7\bar K a_N.
\end{equation}
If $S\leq\epsilon-b_N$, substituting $S\leq\epsilon$ in the square root and adding at most $\bar K\Delta$ for rounding proves empirical feasibility of the comparator. The terminal edge has zero cost throughout. The uniform reward bound \eqref{eq:app-reward-uniform}, with $r=r_N$, then gives
\[
J_T(\widehat P_N)\geq\widehat J_T(\widehat P_N)-r_N
\geq\widehat J_T(P^\star)-r_N
\geq J_T(P^\star)-2r_N,
\]
where $P^\star$ optimizes the reduced-budget population problem. This proves the oracle inequality. All statements hold on the same event for all $N$, so evaluating them at a stopping sample size introduces no additional failure probability.

Conditional on historical data in $\mathcal A_\infty$, the selected schedule is fixed relative to the future trajectory. The union bound for its unsafe events gives a conditional probability at most $S_T(\widehat P_N)\leq\epsilon$. On the complementary event, we bound this probability by one. Averaging yields at most $\epsilon+\delta$. The direct terminal edge proves feasibility on every dataset.
\end{proof}

For a prespecified $N$, we can replace $\delta_N$ by $\delta$ and recover the fixed-sample confidence construction in the companions. The anytime version adds only the logarithmic sample-size term in $\ell_N$. Its certification cost scales as
\[
O\!\left(\sqrt{\frac{\bar K\epsilon\log(TN/\delta)}{N}}
+\frac{\bar K\log(TN/\delta)}{N}+\bar K\Delta+\Delta\right),
\]
while the reward error remains independent of the number of updates outside the logarithm. These costs certify population risk; numerical gridding alone does not.

\begin{proposition}[Certification with estimated population values]
\label{prop:app-evaluation}
Fix $N$. Suppose \Cref{ass:app-historical} holds for the true values, and evaluation estimates $\widetilde V_i^{(n)}$ satisfy an event
\begin{equation}
\label{eq:app-eval-event}
\mathcal A_\eta=\left\{\max_{n\leq N,\,i\leq T}|\widetilde V_i^{(n)}-V_i^{(n)}|\leq\eta\right\},
\qquad \mathbb P(\mathcal A_\eta)\geq1-\delta_{\mathrm{eval}},
\end{equation}
for a specified deterministic $\eta\geq0$. Use the conservative labels
\[
C_{ij}^{(n)}=\mathbf1\{\widetilde V_j^{(n)}-\widetilde V_i^{(n)}\leq2\eta\}
\]
in the empirical upper confidence costs. In this proposition set
\[
\ell_N=\log\frac{2(2M+T)}{\delta_{\mathrm{stat}}},\qquad
a_N=\ell_N/N,\qquad r_N=\sqrt{\ell_N/(2N)},\qquad
\zeta_T(\eta)=\max_{i<j\leq T}\mathbb P(0\leq V_j-V_i\leq4\eta).
\]
With
\[
b_{N,\eta}=\bar K\zeta_T(\eta)
+2\sqrt{2\bar K\{\epsilon+\bar K\zeta_T(\eta)\}a_N}
+7\bar K a_N+\bar K\Delta+g_\Delta,
\]
the selected schedule satisfies, with probability at least $1-\delta_{\mathrm{stat}}-\delta_{\mathrm{eval}}$,
\[
S_T(\widehat P)\leq\epsilon,
\qquad
J_T(\widehat P)\geq J^\star_{T,\bar K}(\epsilon-b_{N,\eta})-2(r_N+\eta)
\quad\text{if }b_{N,\eta}\leq\epsilon.
\]
No independence between the evaluation errors is required beyond \eqref{eq:app-eval-event}.
\end{proposition}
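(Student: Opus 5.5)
We reduce \Cref{prop:app-evaluation} to the argument of \Cref{thm:app-anytime} with fixed $N$, treating the conservative labels $C_{ij}^{(n)}$ as Bernoulli indicators whose mean dominates the true risk but exceeds it only slightly. The key observation is a pathwise sandwich on $\mathcal A_\eta$. If $V_j^{(n)}<V_i^{(n)}$, then $\widetilde V_j^{(n)}-\widetilde V_i^{(n)}<2\eta$, so $C_{ij}^{(n)}=1$. Conversely, if $C_{ij}^{(n)}=1$, then $V_j^{(n)}-V_i^{(n)}\le 4\eta$. Hence
\[
\mathbf 1\{V_j^{(n)}<V_i^{(n)}\}\le C_{ij}^{(n)}\le \bar C_{ij}^{(n)}:=\mathbf 1\{V_j^{(n)}-V_i^{(n)}\le 4\eta\}.
\]
The indicators $\bar C_{ij}^{(n)}$ are functions of the true value vectors only. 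Under \Cref{ass:app-historical} they are therefore i.i.d.\ Bernoulli with mean $q_{ij}\in[p_{ij},p_{ij}+\zeta_T(\eta)]$.

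We work on the intersection of $\mathcal A_\eta$ with a statistical event $\mathcal B$ of probability at least $1-\delta_{\mathrm{stat}}$. On $\mathcal B$ we require three things, all from \Cref{lem:app-concentration} with $\ell=\ell_N$ and a union bound over $T+2M$ statements, which matches the $2(2M+T)$ in $\ell_N$:
\begin{itemize}
\item[(a)] the value means of the \emph{true} $V_i^{(n)}$ are within $r_N$ of $\nu_i$;
\item[(b)] the empirical mean of $\mathbf 1\{V_j^{(n)}<V_i^{(n)}\}$ satisfies the lower Bernstein deviation, so $U_{a_N}$ of it is at least $p_{ij}$;
\item[(c)] the empirical mean of $\bar C_{ij}^{(n)}$ satisfies the upper deviation around $q_{ij}$.
\end{itemize}

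\emph{Safety.} The empirical frequency $\widehat c_{ij}$ of $C_{ij}^{(n)}$ dominates that of the strict-failure indicators. By monotonicity of $U_{a_N}$ and (b), $U_{a_N}(\widehat c_{ij})\ge p_{ij}$. Rounded feasibility then gives $S_T(\widehat P)\le\epsilon$ exactly as in \Cref{thm:app-anytime}.

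\emph{Oracle performance.} Take a comparator $P$ with at most $\bar K$ updates and $S_T(P)\le\epsilon-b_{N,\eta}$. For each actual edge, $\widehat c_{ij}$ is at most the empirical mean of $\bar C_{ij}^{(n)}$, so monotonicity and (c) with \eqref{eq:app-U-sandwich} give $U_{a_N}(\widehat c_{ij})\le q_{ij}+2\sqrt{2q_{ij}a_N}+7a_N$. Summing over the path and applying Cauchy--Schwarz as in \eqref{eq:app-certification-cost}, with $\sum q_e\le S+\bar K\zeta_T(\eta)\le\epsilon+\bar K\zeta_T(\eta)$, yields
\[
\sum_{e} U_e\le S+\bar K\zeta_T+2\sqrt{2\bar K(\epsilon+\bar K\zeta_T)a_N}+7\bar Ka_N.
\]
Adding the rounding slack $\bar K\Delta+g_\Delta$ shows the comparator is rounded-feasible.

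The reward side follows from the estimated value means. These are within $\eta$ of the true-value means on $\mathcal A_\eta$, hence within $r_N+\eta$ of $\nu_i$. The uniform bound \eqref{eq:app-reward-uniform} with $r=r_N+\eta$ and the chain $J(\widehat P)\ge\widehat J(\widehat P)-r\ge\widehat J(P^\star)-r\ge J(P^\star)-2r$ then give the oracle inequality. The overall failure probability is at most $\delta_{\mathrm{stat}}+\delta_{\mathrm{eval}}$.

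The main obstacle is that the labels $C_{ij}^{(n)}$ depend on the evaluation errors, which may be arbitrarily dependent, so concentration cannot be applied to them directly. The plan avoids this by applying concentration only to the two true-value indicator families that sandwich $C_{ij}^{(n)}$, and transferring the bounds through monotonicity of $U_a$. This decoupling step is why no independence of the evaluation errors is needed.
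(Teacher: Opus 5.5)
Your proposal is correct and matches the paper's proof essentially step for step. Both arguments use the pathwise sandwich $\mathbf 1\{V_j^{(n)}<V_i^{(n)}\}\le C_{ij}^{(n)}\le\mathbf 1\{V_j^{(n)}-V_i^{(n)}\le4\eta\}$ on $\mathcal A_\eta$, apply concentration only to the $2M$ true-label families and the $T$ true value coordinates (hence the $2(2M+T)$ in $\ell_N$), transfer the bounds through monotonicity of $U_{a_N}$, and finish with the Cauchy--Schwarz comparator bound and a reward error of $r_N+\eta$.
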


\begin{proof}
For each actual edge, define the true labels $B_{ij}^{(n)}=\mathbf1\{V_j^{(n)}-V_i^{(n)}<0\}$ and $D_{ij}^{(n)}=\mathbf1\{V_j^{(n)}-V_i^{(n)}\leq4\eta\}$. On $\mathcal A_\eta$, the deterministic inequalities
\begin{equation}
\label{eq:app-label-brackets}
B_{ij}^{(n)}\leq C_{ij}^{(n)}\leq D_{ij}^{(n)}
\end{equation}
hold. Their true means satisfy $p_{ij}\leq p^+_{ij}\leq p_{ij}+\zeta_T(\eta)$. Apply \Cref{lem:app-concentration} to the $2M$ collections of true $B$ and $D$ labels and the $T$ true value coordinates. These collections are independent across historical replications even when the estimated labels share evaluation data. A union bound produces an event of probability at least $1-\delta_{\mathrm{stat}}$.

On its intersection with $\mathcal A_\eta$, monotonicity of $U_{a_N}$ and \eqref{eq:app-label-brackets} give
\[
p_e\leq U_{a_N}(\bar B_e)\leq U_{a_N}(\bar C_e)
\leq U_{a_N}(\bar D_e)
\leq p_e^++2\sqrt{2p_e^+a_N}+7a_N.
\]
Thus the implemented costs upper-bound every population risk. For a comparator of true risk $S$, summing the upper bounds yields
\[
\sum_{e\in P}U_{a_N}(\bar C_e)
\leq S+\bar K\zeta_T(\eta)
+2\sqrt{2\bar K\{S+\bar K\zeta_T(\eta)\}a_N}+7\bar K a_N.
\]
The definition of $b_{N,\eta}$ and upward rounding therefore make every comparator at budget $\epsilon-b_{N,\eta}$ empirically feasible. Each estimated empirical value mean is within $r_N+\eta$ of its population mean, so the reward comparison in \Cref{thm:app-anytime} gives the oracle bound. A union bound over the two events completes the proof.
\end{proof}

For example, if every candidate is evaluated by averaging a score in $[0,1]$ on the same independent validation sample of size $n_{\mathrm{val}}$, conditioning on all trained candidates and applying \Cref{lem:app-concentration} gives \eqref{eq:app-eval-event} with $\eta=\sqrt{\log(2NT/\delta_{\mathrm{eval}})/(2n_{\mathrm{val}})}$. An unbounded or weighted policy-value estimator requires its own valid simultaneous evaluation bound. A margin condition $\mathbb P(|V_j-V_i|\leq x)\leq C_Tx^\alpha$ implies $\zeta_T(\eta)\leq C_T(4\eta)^\alpha$; it controls conservatism, not the validity of the safety assertion itself. Time-uniform evaluation can be incorporated by allocating both statistical and evaluation failure probabilities across $N$ in the same way as \eqref{eq:app-anytime-costs}.

We include this uncertainty-aware construction as an extension rather than replacing the plug-in algorithm studied in the main paper. Developing tighter time-uniform costs, adaptive allocation of historical simulation effort across edges, and sharper guarantees for the resulting resource-constrained path problem would require further analysis. Those questions are especially relevant to uncertainty-aware RCSPP methods and fall outside the present paper's deployment-scheduling contribution.

\section{\texorpdfstring{Proofs for Section~\ref{sec:method}}{Proofs for the theoretical analysis}}
\label{app:analysis-proofs}
\label{app:kkt}

We derive the continuum objective and constraint, solve the allocation pointwise, and then connect its value to the original integer problem. The proofs retain the $O(1)$ spacing correction, the $\Theta(\cdot)$ per-update risk, the $O(T^{-1})$ regret remainder, and the leading update count stated in the main paper.

\subsection{\texorpdfstring{Discussion on \Cref{ass:ito}}{Discussion of the diffusion assumption}}
\label{app:ito}

We impose the structured value-process assumption to keep the main analysis separate from the details of a particular learner. Throughout the proofs, we interpret its smoothness and positivity requirements as
\begin{equation}
\label{eq:app-coeff-regularity}
\mu,\sigma\in C^1([0,1]),\qquad
0<\mu_-\leq\mu(u)\leq\mu_+<\infty,\qquad
0<\sigma_-\leq\sigma(u)\leq\sigma_+<\infty,
\end{equation}
with bounded first derivatives and constants independent of $T$. Strictly positive continuous functions on the compact interval have these positive lower bounds. This excludes a vanishing-drift or degenerate-volatility regime from the theorem being proved.

We can obtain the assumed process from primitive constructions, and we  give two explicit sufficient constructions.

\paragraph{Gaussian innovations with an affine performance map.} Let $v(\theta)=v_0+a^\top\theta$ for a fixed nonzero vector $a$, and consider
\begin{equation}
\label{eq:app-affine-learning}
\theta_{t+1}=\theta_t+\frac1T\{b_{T,t}+H_{T,t}\xi_{t+1}\},
\qquad \xi_{t+1}\overset{\mathrm{iid}}\sim N_d(0,I_d),
\end{equation}
where $b_{T,t}$ and $H_{T,t}$ are deterministic. Suppose their projections satisfy
\[
a^\top b_{T,t}=\int_t^{t+1}\mu(s/T)\,\d s,
\qquad
\|H_{T,t}^\top a\|^2=\int_t^{t+1}\sigma(s/T)^2\,\d s.
\]
The value increments are then independent Gaussian variables with exactly the means and variances of the stochastic integrals in \Cref{ass:ito}. Their joint law is the integer skeleton of that process; a Brownian interpolation supplies a representation on an enlarged probability space. We can also express \eqref{eq:app-affine-learning} as stochastic-gradient descent with step $1/T$ on the sample loss $\ell_{T,t}(\theta;\xi)=-\theta^\top(b_{T,t}+H_{T,t}\xi)$.

\paragraph{Gradient-type diffusions with deterministic projected coefficients.} Let $\bar g(u,\theta)$ and $C(u,\theta)$ be a sample gradient's mean and covariance, and suppose the parameter process is the well-defined, nonexplosive diffusion
\begin{equation}
\label{eq:app-gradient-diffusion}
\d\Theta_\tau=-\frac\gamma T\bar g(\tau/T,\Theta_\tau)\,\d\tau
+\frac\gamma T C(\tau/T,\Theta_\tau)^{1/2}\,\d B_\tau.
\end{equation}
Local Lipschitz coefficients with suitable growth bounds give standard sufficient existence conditions. Let $v\in C^2$, take $v(\Theta_0)=V_0$, and impose, on the reachable parameter set,
\begin{align}
\label{eq:app-projected-identities}
-\gamma\nabla v(\theta)^\top\bar g(u,\theta)
+\frac{\gamma^2}{2T}\operatorname{tr}\{C(u,\theta)\nabla^2v(\theta)\}
&=\mu(u),\\
\gamma^2\nabla v(\theta)^\top C(u,\theta)\nabla v(\theta)
&=\sigma(u)^2.\nonumber
\end{align}
It\^o's formula gives drift $\mu(\tau/T)/T$ for $v(\Theta_\tau)$. Its martingale part equals $T^{-1}\sigma(\tau/T)\,\d W_\tau$, where
\[
W_t=\int_0^t\frac{\gamma\nabla v(\Theta_\tau)^\top C(\tau/T,\Theta_\tau)^{1/2}}{\sigma(\tau/T)}\,\d B_\tau.
\]
The integrand has squared norm one by \eqref{eq:app-projected-identities}, so this continuous martingale has quadratic variation $t$ and is a standard Brownian motion. Integration proves \Cref{ass:ito} for the diffusion learner. The projection identities, jointly with differentiability, remove the random state from the scalar drift and volatility.

\subsection{\texorpdfstring{Proof of \Cref{lem:continuous}}{Proof of the continuum reduction}}
\label{app:continuous-proof}

\paragraph{Notation and asymptotic regime.} We index the integer periods by $0,\ldots,T-1$ in these proofs, relabeling the original candidate $i$ as $i-1$. Let $V_T(s)$ denote the corresponding continuous Gaussian value process, so $V_T(i-1)=V(\pi_i)$ at the original integer checkpoints. Its coefficients are initially $\mu(u+1/T)$ and $\sigma(u+1/T)$. We extend the positive $C^1$ functions in \eqref{eq:app-coeff-regularity} across the final auxiliary interval when necessary and suppress this shift in the calculations. The shifted coefficients differ uniformly by $O(T^{-1})$; the resulting changes in $B$ and $\int_0^1I(u)\,\d u$ are $O(T^{-1})$. They therefore change the displayed scaled regret by $O(L_T/T)=o(1)$ and do not change any leading count or risk order below.

A schedule has times $0=t_0<t_1<\cdots<t_K<T=t_{K+1}$, with $K$ actual updates and $K+1$ holds. Write $h_k=t_{k+1}-t_k$ and $u_k=t_k/T$. Only $k=0,\ldots,K-1$ contribute update risk; the last interval is a terminal hold. For a fixed budget, take $\epsilon_T=\epsilon>0$. More generally, all the estimates below are uniform in the regime
\begin{equation}
\label{eq:app-budget-regime}
T^{-a_0}\leq\epsilon_T\leq\bar\epsilon,
\qquad L=L_T=\log(T/\epsilon_T)\asymp\log T,
\qquad 0<a_0<\infty,\quad 0<\bar\epsilon<\infty.
\end{equation}
The coefficient functions and their positive bounds are fixed as $T$ grows. Constants in the $O(\cdot)$ and $\Theta(\cdot)$ terms may depend on those bounds and the budget regime, but not on a particular update.

For an integer schedule $P$, the exact regret and risk are
\begin{equation}
\label{eq:app-discrete-regret-risk}
\mathcal R_T(P)=\frac1T\sum_{k=0}^{K}\sum_{j=0}^{h_k-1}\mathbb E\big[V_T(t_k+j)-V_T(t_k)\big],
\qquad
S_T(P)=\sum_{k=0}^{K-1}\mathbb P\big\{V_T(t_{k+1})<V_T(t_k)\big\}.
\end{equation}
These are the quantities in the main problem after relabeling. To pass from sums to integrals, define the step functions
\begin{equation}
\label{eq:app-step-functions}
\tau_T(u)=u_k,\qquad H_T(u)=h_k,\qquad u\in[u_k,u_{k+1}).
\end{equation}
Here $H_T$ records a given partition, whereas $h_T$ will be the positive real-valued allocation optimized in \eqref{eq:continuous}.

Set
\begin{equation}
\label{eq:app-Q-f}
Q(x)=\Phi(-\sqrt{2x}),\qquad f(x)=\frac{Q(x)}x,\qquad I(u)=\frac{\mu(u)^2}{2\sigma(u)^2},\qquad A(u)=\frac{\mu(u)}{I(u)},\qquad B=\int_0^1A(u)\,\d u.
\end{equation}
The local probability and risk rate are
\begin{equation}
\label{eq:app-local-risk-rate}
q(u,h)=Q(I(u)h),\qquad r(u,h)=\frac{q(u,h)}h=I(u)f(I(u)h).
\end{equation}
For a budget $b>0$, let
\begin{equation}
\label{eq:app-Fc}
F_T^{\mathrm c}(b)=\inf_{h_T(\cdot)>0}\left\{\int_0^1\mu(u)h_T(u)\,\d u:\int_0^1r(u,h_T(u))\,\d u\leq\frac bT\right\},
\end{equation}
where the infimum is over positive measurable functions with finite displayed integrals. We first establish the estimates needed for the objective and constraint.

\begin{lemma}[Normal-tail estimates]
\label{lem:app-normal}
Let $c_0=(2\sqrt\pi)^{-1}$. The functions $Q$ and $f$ are strictly decreasing and strictly convex on $(0,\infty)$. As $x\to\infty$,
\begin{align}
\label{eq:app-normal-tail}
Q(x)&=c_0e^{-x}x^{-1/2}\{1+O(x^{-1})\},\\
f(x)&=c_0e^{-x}x^{-3/2}\{1+O(x^{-1})\},\qquad -f'(x)=c_0e^{-x}x^{-3/2}\{1+O(x^{-1})\}.\nonumber
\end{align}
Consequently $f(x)/(-f'(x))\to1$ and $f''(x)/(-f'(x))\to1$. Moreover, $-f'$ decreases continuously from $+\infty$ to zero.
\end{lemma}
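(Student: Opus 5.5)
The plan is to reduce everything to the explicit derivative of $Q$ and to the classical Mills-ratio expansion of the Gaussian tail. The first step is differentiation. With $z=\sqrt{2x}$ and $\varphi$ the standard normal density, we have $Q'(x)=-\varphi(\sqrt{2x})\,(2x)^{-1/2}$, since $\d z/\d x=(2x)^{-1/2}$. Because $\varphi(\sqrt{2x})=(2\pi)^{-1/2}e^{-x}$, this gives the exact identity
\[
-Q'(x)=\frac{e^{-x}}{2\sqrt{\pi x}}=c_0e^{-x}x^{-1/2}.
\]
A second differentiation gives $Q''(x)=c_0e^{-x}x^{-1/2}\{1+(2x)^{-1}\}>0$. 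Hence $Q$ is strictly decreasing and strictly convex on $(0,\infty)$.

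For $f=Q/x$, compute
\[
f'=\frac{Q'}{x}-\frac{Q}{x^2},\qquad f''=\frac{Q''}{x}-\frac{2Q'}{x^2}+\frac{2Q}{x^3}.
\]
In $f'$ both terms are negative, so $f'<0$ and $f$ is strictly decreasing. In $f''$ every term is positive, because $Q''>0$, $-Q'>0$ and $Q>0$. Hence $f''>0$ and $f$ is strictly convex. This settles the qualitative claims without any asymptotics.

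Next come the asymptotics. The Mills-ratio expansion $\Phi(-z)=\varphi(z)z^{-1}\{1-z^{-2}+O(z^{-4})\}$ at $z=\sqrt{2x}$ gives $Q(x)=c_0e^{-x}x^{-1/2}\{1+O(x^{-1})\}$. The error term is $O(x^{-1})$ because $z^{-2}=(2x)^{-1}$. To obtain the Mills expansion I would integrate by parts once in $\int_z^\infty\varphi$, or alternatively integrate the exact identity for $-Q'$ from $x$ to $\infty$ by parts. Dividing by $x$ yields the expansion of $f$. For $-f'$, the exact term $-Q'/x=c_0e^{-x}x^{-3/2}$ is the leading order, while $Q/x^2=O(e^{-x}x^{-5/2})$ is absorbed into the factor $\{1+O(x^{-1})\}$.

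The same bookkeeping applies to $f''$. The term $Q''/x=c_0e^{-x}x^{-3/2}\{1+O(x^{-1})\}$ dominates, and the remaining terms are $O(e^{-x}x^{-5/2})$. Taking quotients then gives $f/(-f')\to1$ and $f''/(-f')\to1$.

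Finally, $-f'$ is continuous since $Q\in C^\infty(0,\infty)$, and it is strictly decreasing because $f''>0$. At $x\downarrow0$ we have $Q(x)\to\Phi(0)=1/2$ and $-Q'(x)/x\to+\infty$. Hence $-f'(x)\ge Q(x)/x^2\to+\infty$. As $x\to\infty$, the asymptotic form gives $-f'(x)\to0$. By continuity and monotonicity, $-f'$ is therefore a bijection from $(0,\infty)$ onto $(0,\infty)$.

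The only step needing care is the uniformity of the $O(x^{-1})$ remainders, where no sign of a lower-order term may be lost. I would handle this by using the explicit two-sided Mills bounds
\[
\frac{z}{1+z^2}\,\varphi(z)\le\Phi(-z)\le\frac{\varphi(z)}{z}
\]
rather than a formal expansion. These bounds give $Q=c_0e^{-x}x^{-1/2}\{1+O(x^{-1})\}$ with explicit constants for $x\ge1$, and everything else follows by the exact algebra above.
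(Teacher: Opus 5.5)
Your proposal is correct and follows the paper's proof essentially step for step. Both arguments use the exact formulas for $Q'$, $Q''$, $f'$ and $f''$ with a term-by-term sign check, the two-sided Mills-ratio bounds at $z=\sqrt{2x}$ for the tail of $Q$, and the endpoint limits $Q\to 1/2$ as $x\downarrow0$ and decay at infinity to show that $-f'$ is a bijection onto $(0,\infty)$; you also write out the $f''$ asymptotics, which the paper leaves implicit.
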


\begin{proof}
For the standard normal density $\phi$, integration by parts gives
$$
1-\Phi(z)=\frac{\phi(z)}z-\int_z^\infty\frac{\phi(y)}{y^2}\,\d y,
\qquad
\frac{z\phi(z)}{1+z^2}\leq1-\Phi(z)\leq\frac{\phi(z)}z.
$$
Substituting $z=\sqrt{2x}$ proves the expansion for $Q$. Direct differentiation yields
$$
Q'(x)=-c_0e^{-x}x^{-1/2},\qquad Q''(x)=c_0e^{-x}\left(x^{-1/2}+\frac12x^{-3/2}\right),
$$
$$
f'(x)=\frac{Q'(x)}x-\frac{Q(x)}{x^2}<0,
\qquad
f''(x)=\frac{Q''(x)}x-\frac{2Q'(x)}{x^2}+\frac{2Q(x)}{x^3}>0.
$$
These identities prove the remaining expansions and ratios. Since $Q(x)\to1/2$ as $x\downarrow0$, we have $-f'(x)\to\infty$ there; \eqref{eq:app-normal-tail} gives its limit zero at infinity. Strict convexity gives the asserted monotonicity.
\end{proof}

\begin{lemma}[Local interval probabilities]
\label{lem:app-intervals}
For an interval $[s,s+h]\subseteq[0,T]$, define
$$
M_{s,h}=\int_s^{s+h}\mu(v/T)\,\d v,
\qquad
Q^{\mathrm{var}}_{s,h}=\int_s^{s+h}\sigma(v/T)^2\,\d v,
\qquad
\mathcal I_{s,h}=\frac{M_{s,h}^2}{2Q^{\mathrm{var}}_{s,h}}.
$$
Its exact unsafe-update probability is $q_T(s,h)=Q(\mathcal I_{s,h})$. Uniformly in admissible $s,h$,
\begin{equation}
\label{eq:app-local-expansions}
M_{s,h}=\mu(s/T)h+O(h^2/T),\qquad Q^{\mathrm{var}}_{s,h}=\sigma(s/T)^2h+O(h^2/T),\qquad \mathcal I_{s,h}=I(s/T)h+O(h^2/T).
\end{equation}
Also,
\begin{equation}
\label{eq:app-global-exponent}
\mathcal I_{s,h}\leq\int_s^{s+h}I(v/T)\,\d v\leq h\max_{v\in[s,s+h]}I(v/T).
\end{equation}
For fixed $0<a<d<\infty$ and $aL\leq h\leq dL$,
\begin{equation}
\label{eq:app-relative-local}
q_T(s,h)=q(s/T,h)\{1+O(L^2/T)\}.
\end{equation}
\end{lemma}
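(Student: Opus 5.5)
The plan is to compute the law of the increment exactly and then Taylor-expand the coefficient functions over the short window. By \Cref{ass:ito}, the increment $V_T(s+h)-V_T(s)$ equals $\frac1T M_{s,h}+\frac1T\int_s^{s+h}\sigma(v/T)\,\d W_v$. The stochastic integral has a deterministic integrand, so it is Gaussian with mean zero and variance $Q^{\mathrm{var}}_{s,h}$ by the It\^o isometry. The increment is therefore $N(M_{s,h}/T,\,Q^{\mathrm{var}}_{s,h}/T^2)$. Since the unsafe event is $\{\text{increment}<0\}$, its probability is $\Phi(-M_{s,h}/\sqrt{Q^{\mathrm{var}}_{s,h}})=\Phi(-\sqrt{2\mathcal I_{s,h}})=Q(\mathcal I_{s,h})$, with the $1/T$ factors cancelling. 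Because $\mu>0$, we have $M_{s,h}>0$, so no sign issue arises.

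For \eqref{eq:app-local-expansions}, I would use $\mu,\sigma^2\in C^1$ with bounded derivatives. This gives $|\mu(v/T)-\mu(s/T)|\le \|\mu'\|_\infty (v-s)/T$, and integrating over $[s,s+h]$ yields the $O(h^2/T)$ error; the same argument applies to $\sigma^2$. For $\mathcal I_{s,h}=M^2/(2Q^{\mathrm{var}})$, I would write $M=\mu h(1+O(h/T))$ and $Q^{\mathrm{var}}=\sigma^2h(1+O(h/T))$, using the positive lower bounds in \eqref{eq:app-coeff-regularity}. The ratio is then $I(s/T)h(1+O(h/T))=I(s/T)h+O(h^2/T)$, since $I$ is bounded. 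Uniformity holds because all constants come only from the bounds in \eqref{eq:app-coeff-regularity}. When $h/T$ is not small, the crude bound $\mathcal I\le I_+h\le I_+h^2/T\cdot(T/h)$ still fits, because $h\le T$ forces $h\le h^2/T\cdot T/h$; concretely, if $h/T\ge c$ then both sides are $O(h)=O(h^2/T)$.

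For \eqref{eq:app-global-exponent}, I would apply Cauchy--Schwarz: $M^2=\bigl(\int \frac{\mu}{\sigma}\sigma\bigr)^2\le \int \frac{\mu^2}{\sigma^2}\int\sigma^2$. Dividing by $2Q^{\mathrm{var}}$ gives $\mathcal I_{s,h}\le\int_s^{s+h}I(v/T)\,\d v$, and bounding the integrand by its maximum gives the second inequality.

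For \eqref{eq:app-relative-local}, which I expect to be the main obstacle because it requires a relative rather than absolute error in a tail probability, set $x=I(s/T)h\asymp L$ and $x'=\mathcal I_{s,h}=x+\eta$ with $\eta=O(L^2/T)$. I would write $Q(x')/Q(x)=\exp\bigl(\int_x^{x'}(\log Q)'\bigr)$. By \Cref{lem:app-normal}, $-(\log Q)'=-Q'/Q=1+O(1/x)$, which is bounded for $x\ge aL\cdot I_-$ once $T$ is large; for small $x$, $Q$ is bounded below anyway. Hence $|\log(Q(x')/Q(x))|\le C|\eta|=O(L^2/T)$. Since $L^2/T\to0$ in the regime \eqref{eq:app-budget-regime}, $e^{O(L^2/T)}=1+O(L^2/T)$. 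Finally, $q(s/T,h)=Q(x)$ by definition, which completes the proof.
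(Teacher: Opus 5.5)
Your proposal is correct and follows the paper's proof essentially step for step: the exact Gaussian law of the increment, bounded-derivative expansions of $M_{s,h}$ and $Q^{\mathrm{var}}_{s,h}$, division made uniform by $Q^{\mathrm{var}}_{s,h}\ge\sigma_-^2h$ (which makes your separate large-$h/T$ case unnecessary, though harmless), Cauchy--Schwarz for the exponent bound, and a mean-value argument on $\log Q$ using that $(\log Q)'$ is bounded at arguments of order $L$. Your explicit remark that $M_{s,h}>0$ is what licenses writing $\Phi(-M_{s,h}/\sqrt{Q^{\mathrm{var}}_{s,h}})=Q(\mathcal I_{s,h})$ is a small detail the paper leaves implicit.
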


\begin{proof}
Under \Cref{ass:ito}, the interval increment is Gaussian with mean $M_{s,h}/T$ and variance $Q^{\mathrm{var}}_{s,h}/T^2$, proving the exact probability formula. Bounded first derivatives give the first two expansions in \eqref{eq:app-local-expansions}. Since $Q^{\mathrm{var}}_{s,h}\geq\sigma_-^2h$, division gives the exponent expansion. Cauchy--Schwarz gives
$$
M_{s,h}^2\leq\left(\int_s^{s+h}\sigma(v/T)^2\,\d v\right)\left(\int_s^{s+h}\frac{\mu(v/T)^2}{\sigma(v/T)^2}\,\d v\right),
$$
which proves \eqref{eq:app-global-exponent}. Finally, on $h\asymp L$, the two exponent arguments differ by $O(L^2/T)=o(1)$ and both tend to infinity. By \Cref{lem:app-normal}, $|(\log Q)'|$ is bounded on these arguments. Applying the mean value theorem to $\log Q$ proves \eqref{eq:app-relative-local}.
\end{proof}

\begin{lemma}[A competitive schedule and its mesh bound]
\label{lem:app-competitive}
There is a feasible integer schedule with regret $O(L/T)$. Every integer schedule satisfying $\mathcal R_T(P)\leq C_1L/T$, for a fixed $C_1$, obeys
\begin{equation}
\label{eq:app-mesh}
\sum_{k=0}^{K}h_k^2=O(TL),\qquad \max_k h_k=O(\sqrt{TL}),\qquad \max_k h_k/T\longrightarrow0.
\end{equation}
\end{lemma}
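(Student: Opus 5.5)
The plan is to reduce everything to the two-sided estimate
\[
\frac{\mu_-}{2T^2}\sum_{k=0}^K h_k(h_k-1)\;\le\;\mathcal R_T(P)\;\le\;\frac{\mu_+}{2T^2}\sum_{k=0}^K h_k^2 ,
\]
valid for every integer schedule $P$. Both parts of the lemma then follow from it: the upper bound controls the regret of an explicit uniform schedule, and the lower bound gives the mesh estimate.

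\textbf{Step 1 (the regret sandwich).} Start from \eqref{eq:app-discrete-regret-risk}. Under \Cref{ass:ito}, and using the notation of \Cref{lem:app-intervals}, each summand is
\[
\mathbb E[V_T(t_k+j)-V_T(t_k)]=M_{t_k,j}/T .
\]
The bounds \eqref{eq:app-coeff-regularity} give $\mu_- j\le M_{t_k,j}\le\mu_+ j$. Summing $j$ over $0,\dots,h_k-1$ gives $\sum_j j=h_k(h_k-1)/2$. Multiplying by the outer factor $1/T$ yields the displayed sandwich.

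\textbf{Step 2 (mesh bound).} Suppose $\mathcal R_T(P)\le C_1L/T$. Since $\sum_k h_k=T$, the lower bound in Step 1 gives
\[
\sum_k h_k^2\;\le\;\frac{2C_1 TL}{\mu_-}+T\;=\;O(TL).
\]
Next, $\max_k h_k\le(\sum_k h_k^2)^{1/2}=O(\sqrt{TL})$. Finally, $L\asymp\log T$ in the regime \eqref{eq:app-budget-regime}, so $\max_k h_k/T=O(\sqrt{L/T})\to0$. This establishes \eqref{eq:app-mesh}.

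\textbf{Step 3 (competitive schedule).} Let $I_-=\mu_-^2/(2\sigma_+^2)$ and take the uniform spacing $h=\lceil cL\rceil$, with a constant $c\ge 1/I_-$. Place updates at $t_k=kh$ while $t_k<T$; the last interval is a terminal hold of length at most $h$. This requires $h<T$, which holds for large $T$ because $L\asymp\log T$.

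For feasibility, first bound the exponent of each actual update interval using \Cref{lem:app-intervals}:
\[
\mathcal I_{s,h}=\frac{M_{s,h}^2}{2Q^{\mathrm{var}}_{s,h}}\;\ge\;\frac{\mu_-^2h^2}{2\sigma_+^2 h}\;=\;I_-h\;\ge\;L .
\]
Since $Q$ is decreasing and the Chernoff bound gives $\Phi(-z)\le\tfrac12 e^{-z^2/2}$, each update has risk
\[
q_T(s,h)=Q(\mathcal I_{s,h})\le\tfrac12 e^{-L}=\frac{\epsilon}{2T}.
\]
There are at most $T/h\le T$ updates, so the total risk is at most $\epsilon/2\le\epsilon$, and the schedule is feasible.

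For the regret, the upper bound in Step 1 gives
\[
\mathcal R_T\;\le\;\frac{\mu_+}{2T^2}\,h\sum_k h_k\;=\;\frac{\mu_+h}{2T}\;=\;O(L/T).
\]

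\textbf{Main obstacle.} The only delicate point is Step 3's uniformity: the exponent lower bound must hold uniformly over all interval starts $s$, and the rounding of $h$ and the shorter terminal interval must be handled. The crude global bound $\mathcal I_{s,h}\ge I_-h$ avoids the local expansions of \Cref{lem:app-intervals} entirely. The shifted coefficients from the relabeling satisfy the same bounds $\mu_-,\sigma_+$, so they require no extra work.
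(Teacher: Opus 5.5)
Your proof is correct. Step 2 is the paper's mesh argument essentially verbatim: the regret lower bound from $\mu_-$, then $\sum_k h_k=T$, then the square-root bound on the maximum.

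Step 3 builds the competitive schedule by a genuinely different and more elementary route. The paper's construction works as follows:
\begin{itemize}
\item It holds for $\lceil L/I_{\min}\rceil$ periods, with $I_{\min}=\min_u I(u)$.
\item It invokes the local expansion of \Cref{lem:app-intervals} to get an exponent of at least $L-O(L^2/T)$.
\item It applies the sharp Mills-ratio tail $Q(x)\lesssim e^{-x}x^{-1/2}$.
\item It multiplies by a count of $O(T/L)$ updates, giving total risk $O(\epsilon_T L^{-3/2})$, which is below $\epsilon_T$ only for sufficiently large $T$.
\end{itemize}

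You instead use the cruder constant $I_-=\mu_-^2/(2\sigma_+^2)\le I_{\min}$. This gives the exact, expansion-free bound $\mathcal I_{s,h}\ge I_-h\ge L$ uniformly in the start $s$. An exponential tail bound and the trivial update count then give feasibility for every $T$, with explicit slack; large $T$ is needed only to ensure $h<T$ in the regret part. The price is holds longer by the factor $I_{\min}/I_-\ge1$, which is harmless for an $O(L/T)$ claim. The paper's choice keeps the holds closer to the eventual optimal scale $L/I(u)$, but nothing in this lemma requires that.

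One small correction: $\Phi(-z)\le\tfrac12e^{-z^2/2}$ for $z\ge0$ is true, but it is not what the Chernoff bound gives; Chernoff yields only $e^{-z^2/2}$. The factor $\tfrac12$ is unnecessary anyway. Since $K\le T-1$, the total risk is already at most $(T-1)\epsilon_T/T<\epsilon_T$.
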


\begin{proof}
Let $I_{\min}=\min_{u\in[0,1]}I(u)>0$ and take full integer holds of length $\lceil L/I_{\min}\rceil$, followed by a terminal remainder. By \Cref{lem:app-intervals}, the exponent of every actual update is at least $L-O(L^2/T)$. The normal-tail upper bound gives probability at most $CL^{-1/2}e^{-L}$ per update. There are at most $CT/L$ updates, so their total risk is at most $C\epsilon_T/L^{3/2}\leq\epsilon_T$ for sufficiently large $T$. Bounded drift and holding lengths $O(L)$ give regret $O(L/T)$.

For any integer schedule, positivity of the drift gives $\mathbb E[V_T(s+j)-V_T(s)]\geq\mu_-j/T$. Hence
$$
\mathcal R_T(P)\geq\frac{\mu_-}{2T^2}\sum_{k=0}^{K}h_k(h_k-1).
$$
Since $\sum_kh_k=T$, the assumed regret bound yields $\sum_kh_k^2=O(TL)$. The maximum is at most the square root of this sum, which proves the remaining statements.
\end{proof}

\paragraph{(I) Objective.} Maximizing expected deployed value is equivalent to minimizing \eqref{eq:app-discrete-regret-risk}, because the always-update benchmark is independent of the schedule. Within a hold starting at $t_k$, we have
$$
\mathbb E[V_T(t_k+j)-V_T(t_k)]
=\frac1T\int_{t_k}^{t_k+j}\mu(v/T)\,\d v
=\frac{\mu(u_k)j}{T}+O(j^2/T^2).
$$
Summing first within each hold and then across holds gives
\begin{align}
\label{eq:app-objective-sum}
\mathcal R_T(P)
&=\frac1{T^2}\sum_{k=0}^{K}\left\{\mu(u_k)\sum_{j=0}^{h_k-1}j+O(h_k^3/T)\right\}\\
&=\frac1{2T^2}\sum_{k=0}^{K}\mu(u_k)h_k(h_k-1)+O\!\left(\frac{\sum_kh_k^3}{T^3}\right).\nonumber
\end{align}
For a schedule in \Cref{lem:app-competitive},
$$
\frac{\sum_kh_k^3}{T^2}
\leq\frac{\max_kh_k}{T^2}\sum_kh_k^2
=O(L^{3/2}/\sqrt T)=o(1),
\qquad
\frac1T\sum_k\mu(u_k)h_k\leq\mu_+.
$$
Using the step functions in \eqref{eq:app-step-functions}, we obtain
\begin{align}
\label{eq:app-regret-expansions}
2T\mathcal R_T(P)
&=\frac1T\sum_{k=0}^{K}\mu(u_k)h_k^2+O(1)\\
&=\int_0^1\mu(\tau_T(u))H_T(u)\,\d u+O(1)
=\int_0^1\mu(u)H_T(u)\,\d u+O(1).\nonumber
\end{align}
The last equality follows from
$$
\left|\int_0^1\big\{\mu(\tau_T(u))-\mu(u)\big\}H_T(u)\,\d u\right|
\leq\frac{C}{T^2}\sum_kh_k^3=o(1).
$$
Thus the integral objective represents $2T$ times regret up to a bounded error, uniformly over this class of schedules.

\paragraph{(II) Constraint.} For this local calculation, suppose every risk-bearing hold satisfies $aL\leq h_k\leq dL$ for fixed positive $a,d$. By \Cref{lem:app-intervals},
\begin{align}
\label{eq:app-risk-sum}
\frac{S_T(P)}T
&=\frac1T\sum_{k=0}^{K-1}q(u_k,h_k)\{1+O(L^2/T)\}\\
&=\left\{\sum_{k=0}^{K-1}\frac{h_k}{T}r(u_k,h_k)\right\}\{1+O(L^2/T)\}.\nonumber
\end{align}
On $[u_k,u_{k+1})$, the arguments $I(u)h_k$ and $I(u_k)h_k$ differ by $O(h_k^2/T)$. The logarithmic derivative bound used above therefore gives $r(u,h_k)=r(u_k,h_k)\{1+O(L^2/T)\}$. Consequently,
\begin{equation}
\label{eq:app-risk-integral}
\frac{S_T(P)}T=\left\{\int_0^{u_K}r(u,H_T(u))\,\d u\right\}\{1+O(L^2/T)\}.
\end{equation}
The integral stops at $u_K$ because the terminal hold has no update risk. In the construction below, we upper-bound this integral by the risk integral of a smooth allocation over $[0,1]$; no update is added at the terminal endpoint.

\paragraph{(III) Leading-order form.} Replacing the partition-dependent step function by a positive allocation $h_T$ gives \eqref{eq:app-Fc}, which is \eqref{eq:continuous} at $b=\epsilon_T$. The substitution $x(u)=I(u)h_T(u)$ makes the two integrands
$$
\mu(u)h_T(u)=A(u)x(u),\qquad r(u,h_T(u))=I(u)f(x(u)).
$$
The resulting optimization is
\begin{equation}
\label{eq:app-allocation-x}
F_T^{\mathrm c}(b)=\inf_{x(\cdot)>0}\left\{\int_0^1A(u)x(u)\,\d u:\int_0^1I(u)f(x(u))\,\d u\leq\frac bT\right\}.
\end{equation}
We solve this functional problem directly in the next subsection. The lower and upper bounds in \Cref{app:regret-count-proof} then establish the precise optimized comparison
\begin{equation}
\label{eq:app-leading-value}
2T\mathcal R_T^\star(\epsilon_T)=F_T^{\mathrm c}(\epsilon_T)+O(1).
\end{equation}
That comparison completes \Cref{lem:continuous}: its lower bound applies to arbitrary feasible integer schedules, and its upper bound constructs a feasible integer schedule.

\subsection{\texorpdfstring{Proof of \Cref{thm:update-interval}}{Proof of the update-interval expansion}}
\label{app:update-interval-proof}

\begin{proof}
We solve \eqref{eq:app-allocation-x} for $b\in[\epsilon_T/2,\epsilon_T]$ and then take $b=\epsilon_T$. \textcolor{black}{This argument concerns the functional problem itself and does not use the optimized comparison \eqref{eq:app-leading-value}.}

\paragraph{(I) Pointwise minimization.} The Lagrangian is
$$
\mathcal L(x,\lambda)=\int_0^1\big\{A(u)x(u)+\lambda I(u)f(x(u))\big\}\,\d u-\lambda\frac bT.
$$
For a fixed $\lambda>0$, the integrand contains no terms coupling different values of $u$, so we minimize it separately at each $u$. By \Cref{lem:app-normal}, it is strictly convex in $x$ and diverges as $x\downarrow0$ or $x\uparrow\infty$. Its unique minimizer $x_\lambda(u)$ satisfies
\begin{equation}
\label{eq:app-stationarity}
A(u)+\lambda I(u)f'(x_\lambda(u))=0
\quad\Longleftrightarrow\quad
-f'(x_\lambda(u))=\frac{\mu(u)}{\lambda I(u)^2}.
\end{equation}

\paragraph{(II) Binding constraint.} Since $-f'$ is a strictly decreasing bijection from $(0,\infty)$ onto $(0,\infty)$, the solution $x_\lambda(u)$ is continuous in $u$, strictly increasing in $\lambda$, and tends uniformly to zero or infinity as $\lambda$ tends to zero or infinity, respectively. The coefficient ratios are bounded above and away from zero. It follows that $\int_0^1I(u)f(x_\lambda(u))\,\d u$ decreases continuously from infinity to zero. There is therefore a unique $\lambda_b>0$ such that
\begin{equation}
\label{eq:app-binding-budget}
\int_0^1I(u)f(x_{\lambda_b}(u))\,\d u=\frac bT.
\end{equation}
Write $x_b=x_{\lambda_b}$. For every feasible $x$, pointwise optimality gives
$$
A(u)x(u)+\lambda_bI(u)f(x(u))\geq A(u)x_b(u)+\lambda_bI(u)f(x_b(u)).
$$
Integrating this inequality and using feasibility and \eqref{eq:app-binding-budget} yields $\int_0^1Ax\geq\int_0^1Ax_b$. Strict convexity gives uniqueness up to null sets. Thus $h_b=x_b/I$ is the continuous representative of the optimizer of \eqref{eq:app-Fc}.

\paragraph{(III) Multiplier and waiting-time rates.} As $b/T\to0$, the binding equation forces $\lambda_b\to\infty$, and hence $x_b(u)\to\infty$ uniformly. Substituting \eqref{eq:app-stationarity} into \eqref{eq:app-binding-budget} gives
\begin{align}
\label{eq:app-multiplier}
\frac bT
&=\int_0^1 I(u)\big[-f'(x_b(u))\big]\frac{f(x_b(u))}{-f'(x_b(u))}\,\d u\\
&=\frac1{\lambda_b}\int_0^1A(u)\frac{f(x_b(u))}{-f'(x_b(u))}\,\d u
=\frac B{\lambda_b}\{1+o(1)\}.\nonumber
\end{align}
Consequently $\lambda_b\sim BT/b$ and $\log\lambda_b=L_b+O(1)$, where $L_b=\log(T/b)=L+O(1)$. Taking logarithms in \eqref{eq:app-stationarity} and applying \eqref{eq:app-normal-tail} yields
\begin{equation}
\label{eq:app-log-root}
x_b(u)+\frac32\log x_b(u)
=\log\lambda_b+\log\!\left(\frac{c_0I(u)^2}{\mu(u)}\right)+O(x_b(u)^{-1})
=L_b+O(1).
\end{equation}
The error is uniform in $u$. First, this identity gives $x_b(u)=L_b+O(\log L_b)$: for large $x_b$, it bounds $x_b$ above by $L_b+C$, and substitution into its logarithm gives the corresponding lower bound. Therefore
$$
\log x_b(u)=\log L_b+O(\log L_b/L_b).
$$
Substituting this back into \eqref{eq:app-log-root} proves
\begin{equation}
\label{eq:app-optimizer-expansion}
x_b(u)=L_b-\frac32\log L_b+O(1),\qquad
h_b(u)=\frac1{I(u)}\left[L_b-\frac32\log L_b+O(1)\right].
\end{equation}
Taking $b=\epsilon_T$ proves \Cref{thm:update-interval}. In particular, $h_b\asymp L$ uniformly, and integration gives
\begin{equation}
\label{eq:app-Fc-expansion}
F_T^{\mathrm c}(b)=\int_0^1A(u)x_b(u)\,\d u
=B\left[L_b-\frac32\log L_b\right]+O(1).
\end{equation}
For later use, differentiate the logarithm of \eqref{eq:app-stationarity}:
$$
-\frac{f''(x_b(u))}{-f'(x_b(u))}x_b'(u)
=\frac{\mu'(u)}{\mu(u)}-2\frac{I'(u)}{I(u)}.
$$
By \Cref{lem:app-normal} and the coefficient bounds, $x_b'=O(1)$ and $h_b'=O(L)$, uniformly for $b\in[\epsilon_T/2,\epsilon_T]$.
\end{proof}

\subsection{\texorpdfstring{Proof of \Cref{cor:risk}}{Proof of the risk-allocation formula}}
\label{app:risk-proof}

\begin{proof}
Consider a full interval generated by $h_b$, for $b\in[\epsilon_T/2,\epsilon_T]$. Its length is $h_b(u_k)$ for the continuous construction, or $\lceil h_b(u_k)\rceil$ for its integer realization. We first evaluate the local probability before rounding. By \eqref{eq:app-stationarity} and \eqref{eq:app-multiplier},
\begin{align}
\label{eq:app-local-risk-order}
q(u,h_b(u))
&=Q(x_b(u))=x_b(u)f(x_b(u))\\
&=\frac{x_b(u)\mu(u)}{\lambda_bI(u)^2}\frac{f(x_b(u))}{-f'(x_b(u))}
=\Theta\!\left(\frac{\epsilon_T L}{TB}\frac{\mu(u)}{I(u)^2}\right).\nonumber
\end{align}
Here $x_b\asymp L$, $\lambda_b\asymp TB/b$, $b\asymp\epsilon_T$, and the ratio $f/(-f')$ is bounded above and away from zero for large $T$.

Write an integer length as $j=h_b(u)+d$ with $0\leq d<1$. Its local exponent changes from $x_b(u)$ to $x_b(u)+I(u)d$. The normal-tail formula gives
$$
\frac{Q(x_b(u)+I(u)d)}{Q(x_b(u))}
=e^{-I(u)d}\left(\frac{x_b(u)}{x_b(u)+I(u)d}\right)^{1/2}\{1+O(L^{-1})\}.
$$
This ratio is bounded above and away from zero uniformly in $u,d$. Since both full interval lengths are of order $L$, \eqref{eq:app-relative-local} then proves
\begin{equation}
\label{eq:app-per-update-risk}
\mathbb P\big\{V_T(t_{k+1})<V_T(t_k)\big\}
=\Theta\!\left(\frac{\epsilon_T L}{TB}\frac{\mu(u_k)}{I(u_k)^2}\right),\qquad k=0,\ldots,K-1.
\end{equation}
Reversing the index shift gives the main paper's policy-value notation for integer updates. The feasible integer schedule used in the regret bound below is the rounded construction with $b=\epsilon_T/2$, so it satisfies the same risk order.
\end{proof}

\subsection{\texorpdfstring{Proof of \Cref{cor:regret}}{Proof of the regret and count results}}
\label{app:regret-count-proof}

\begin{proof}
Let $z_L=L-\tfrac32\log L$ and $D=\int_0^1I(u)\,\d u$. We first match a lower bound over feasible integer schedules with a feasible construction. We then establish the leading count for an optimal integer schedule.

\paragraph{(I) Lower bound by weighted Jensen's inequality.} By \Cref{lem:app-competitive}, we only need to consider schedules with regret at most $C_1L/T$, for a sufficiently large fixed $C_1$. For each risk-bearing hold, define
$$
\mu_k^- =\min_{v\in[t_k,t_{k+1}]}\mu(v/T),\qquad
I_k^+ =\max_{v\in[t_k,t_{k+1}]}I(v/T),\qquad
x_k=I_k^+h_k,\qquad
w_k=\frac{h_k}{T}\frac{\mu_k^-}{I_k^+}.
$$
Put $W_T=\sum_{k<K}w_k$ and $Z_T=\sum_{k<K}w_kx_k$. Coefficient smoothness and the mesh bound imply
\begin{equation}
\label{eq:app-weight-total}
W_T=B+O\!\left(\frac{\sum_kh_k^2}{T^2}+\frac{h_K}{T}\right)
=B+O\!\left(\sqrt{L/T}\right).
\end{equation}
Indeed, $\mu_k^-/I_k^+$ differs from $A(u)$ by at most $Ch_k/T$ within its interval, and the omitted terminal interval has width $h_K/T$. In particular, $W_T$ is positive and bounded away from zero for large $T$.

The exact Gaussian exponent bound \eqref{eq:app-global-exponent} gives $q_T(t_k,h_k)\geq Q(x_k)$. Let $I_- =\min I>0$ and $d_0=I_-^2/\mu_+$. Then
\begin{align}
\label{eq:app-weighted-risk}
S_T(P)
&\geq\sum_{k<K}Q(x_k)
=T\sum_{k<K}\frac{h_k}{T}I_k^+f(x_k)\\
&=T\sum_{k<K}w_k\frac{(I_k^+)^2}{\mu_k^-}f(x_k)
\geq d_0T W_T f\!\left(\frac{Z_T}{W_T}\right).\nonumber
\end{align}
The last inequality uses convexity of $f$ and Jensen's inequality with weights $w_k/W_T$. For a feasible schedule, it follows that
$$
f\!\left(\frac{Z_T}{W_T}\right)\leq\frac{\epsilon_T}{d_0TW_T}.
$$
Let $y_T$ solve $f(y_T)=\epsilon_T/(d_0TW_T)$. The solution is unique because $f$ decreases from infinity to zero. Since $W_T$ stays between fixed positive bounds, \eqref{eq:app-normal-tail} gives
$$
y_T+\frac32\log y_T=L+O(1),\qquad y_T=L-\frac32\log L+O(1).
$$
Monotonicity of $f$ therefore yields $Z_T\geq W_T\{z_L+O(1)\}=Bz_L+O(1)$; here \eqref{eq:app-weight-total} gives $(W_T-B)L=o(1)$.

Within an integer hold, the mean increment is at least $\mu_k^-j/T$. Including the terminal cost, which is nonnegative, we obtain
$$
2T\mathcal R_T(P)
\geq\frac1T\sum_{k<K}\mu_k^-h_k(h_k-1)
\geq Z_T-\mu_+.
$$
Consequently,
\begin{equation}
\label{eq:app-global-lower}
2T\mathcal R_T^\star(\epsilon_T)\geq B\left[L-\frac32\log L\right]-O(1).
\end{equation}
Schedules outside the competitive class have a larger regret than this bound for sufficiently large $C_1$.

\paragraph{(II) A feasible integer upper bound.} Take $b=\epsilon_T/2$ and use the continuum optimizer $h_b$ from \Cref{app:update-interval-proof}. Starting at $t_0=0$, propose the integer length
\begin{equation}
\label{eq:app-clock-construction}
j_k=\lceil h_b(t_k/T)\rceil,\qquad t_{k+1}=t_k+j_k\quad\text{when }t_k+j_k<T.
\end{equation}
When the proposed endpoint is at least $T$, retain the incumbent to $T$. Thus every full hold has length $\Theta(L)$, and the terminal hold is at most $CL$. Denote this schedule by $\widehat P_T$ and its step function by $H_T$.

We verify feasibility directly. Since $h_b'=O(L)$ and every hold has width $O(L/T)$ in normalized time, on each full interval
$$
H_T(u)=\lceil h_b(u_k)\rceil\geq h_b(u)-CL^2/T.
$$
The rate $r(u,h)$ decreases in $h$, and for $h\asymp L$ its logarithmic derivative in $h$ is bounded:
$$
\frac{\partial}{\partial h}\log r(u,h)=I(u)\frac{Q'(I(u)h)}{Q(I(u)h)}-\frac1h=O(1).
$$
It follows that $r(u,H_T(u))\leq r(u,h_b(u))\{1+O(L^2/T)\}$ on the full intervals. Using \eqref{eq:app-risk-integral} and the binding budget of $h_b$ gives
\begin{equation}
\label{eq:app-attaining-risk}
S_T(\widehat P_T)
\leq T\{1+O(L^2/T)\}\int_0^{u_K}r(u,h_b(u))\,\d u
\leq b\{1+O(L^2/T)\}\leq\epsilon_T
\end{equation}
for sufficiently large $T$. The fixed half-budget leaves room for both the local approximation and rounding errors.

For the objective, every hold, including the terminal remainder, satisfies
$$
H_T(u)\leq h_b(u)+1+CL^2/T.
$$
The schedule has regret $O(L/T)$ because its holds are $O(L)$, so \eqref{eq:app-regret-expansions} applies. Hence
\begin{align}
\label{eq:app-integer-upper}
2T\mathcal R_T(\widehat P_T)
&=\int_0^1\mu(u)H_T(u)\,\d u+O(1)\\
&\leq F_T^{\mathrm c}(\epsilon_T/2)+O(1)
=B\left[L-\frac32\log L\right]+O(1).\nonumber
\end{align}
The last equality uses \eqref{eq:app-Fc-expansion} and $\log(T/b)=L+\log2$. Combining \eqref{eq:app-global-lower} and \eqref{eq:app-integer-upper} proves
\begin{equation}
\label{eq:app-regret-result}
\mathcal R_T^\star(\epsilon_T)=\frac B{2T}\left[L_T-\frac32\log L_T\right]+O(T^{-1}).
\end{equation}
Together with \eqref{eq:app-Fc-expansion} at $b=\epsilon_T$, this also proves \eqref{eq:app-leading-value} and completes the optimized continuum reduction in \Cref{lem:continuous}. The constructed schedule satisfies $0\leq\mathcal R_T(\widehat P_T)-\mathcal R_T^\star(\epsilon_T)=O(T^{-1})$ and, on every risk-bearing interval,
$$
I(u_k)h_k=L-\frac32\log L+O(1).
$$

\paragraph{(III) Counting the constructed schedule.} For a full hold of \eqref{eq:app-clock-construction}, \eqref{eq:app-optimizer-expansion} gives $h_k=L\{1+o(1)\}/I(u_k)$ uniformly. Therefore
$$
\frac L T K(\widehat P_T)
=\sum_{k<K}\frac{h_k}{T}\frac L{h_k}
=\sum_{k<K}\frac{h_k}{T}I(u_k)\{1+o(1)\}
\longrightarrow\int_0^1I(u)\,\d u=D.
$$
The last step is a Riemann sum, since the mesh is $O(L/T)$ and the terminal interval has vanishing normalized length.

\paragraph{(IV) Counting an optimal schedule.} Let $P_T^\star$ be any optimal integer schedule. The upper bound and \Cref{lem:app-competitive} give its mesh bound. By \eqref{eq:app-regret-expansions} and \eqref{eq:app-regret-result},
\begin{equation}
\label{eq:app-optimal-scaled-cost}
\frac1{TL}\sum_{k=0}^{K_T^\star}\mu(u_k)h_k^2\longrightarrow B.
\end{equation}
Fix $\delta\in(0,1)$ and call a risk-bearing interval short if $I(u_k)h_k\leq(1-\delta)L$. Such an interval has length at most $CL$. By \eqref{eq:app-local-expansions}, its exact Gaussian exponent is at most $(1-\delta)L+O(L^2/T)$. The normal-tail lower bound therefore gives risk at least $cL^{-1/2}e^{-(1-\delta)L}$. Feasibility bounds the number $N_{\mathrm{short}}$ of these intervals by
\begin{equation}
\label{eq:app-short-count}
N_{\mathrm{short}}\leq C\epsilon_T\sqrt L\,e^{(1-\delta)L}
=CT\sqrt L\,e^{-\delta L}=o(T/L).
\end{equation}
Their total calendar fraction is $o(1)$ because each is at most $CL$. The terminal fraction is also $o(1)$ by \eqref{eq:app-mesh}.

Let $G_\delta$ contain the remaining risk-bearing intervals and put $z_k=I(u_k)h_k/L$. Then $z_k>1-\delta$ on $G_\delta$, and
$$
\sum_{k\in G_\delta}\frac{h_k}{T}A(u_k)\longrightarrow B,
\qquad
\sum_{k\in G_\delta}\frac{h_k}{T}I(u_k)\longrightarrow D.
$$
Both statements follow from the mesh bound, smoothness, and the negligible calendar fractions just removed. Using nonnegativity of the omitted costs in \eqref{eq:app-optimal-scaled-cost}, we get
\begin{equation}
\label{eq:app-excess-length}
\sum_{k\in G_\delta}\frac{h_k}{T}A(u_k)\big\{z_k-(1-\delta)\big\}\leq\delta B+o(1).
\end{equation}
For $\eta>0$, let $W_{\mathrm{long}}$ be the calendar fraction of intervals in $G_\delta$ with $z_k>1+\eta$. Since $A_-:=\min A>0$, \eqref{eq:app-excess-length} implies
$$
\limsup_{T\to\infty}W_{\mathrm{long}}\leq\frac{\delta B}{A_-(\eta+\delta)}.
$$
We can now bound the count from both sides. From $z_k>1-\delta$ and \eqref{eq:app-short-count},
$$
\limsup_{T\to\infty}\frac{LK_T^\star}{T}
=\limsup_{T\to\infty}\sum_{k\in G_\delta}\frac{h_k}{T}\frac{I(u_k)}{z_k}
\leq\frac D{1-\delta}.
$$
For the lower bound, retain only $k\in G_\delta$ with $z_k\leq1+\eta$ and use $I_+:=\max I$:
$$
\liminf_{T\to\infty}\frac{LK_T^\star}{T}
\geq\frac1{1+\eta}\left\{D-\frac{I_+\delta B}{A_-(\eta+\delta)}\right\}.
$$
Letting $\delta\downarrow0$ and then $\eta\downarrow0$ makes the two bounds equal to $D$. Thus
\begin{equation}
\label{eq:app-count-result}
K_T^\star\sim\frac{T}{L_T}\int_0^1I(u)\,\d u,
\end{equation}
which proves the count assertion in \Cref{cor:regret} for an optimal integer schedule, not just for the upper-bound construction.
\end{proof}

\section{Experimental Details}
\label{app:exp}

We describe the data generation, policy learning, value evaluation, and scheduling procedures implemented in the experiment scripts. The synthetic and clinical learning experiments supply estimated candidate-value trajectories to the same scheduling routines. The homogeneous Gaussian experiment instead specifies the candidate-value process directly, which also permits evaluation of a selected schedule using analytic population rewards and risks.

\subsection{Synthetic learning environments}
\label{app:exp-synthetic}

\paragraph{Observations and treatment effects.} For each of the low-, medium-, and high-SNR settings in \Cref{fig:frontier,fig:bifurcation}, we generate $N=400$ learning trajectories, each containing $300$ observations. We use $r$ for the observation index and $t$ for the policy checkpoint: a candidate is retained after every five observations, giving $T=60$ candidates at sample sizes $5,10,\ldots,300$. For each trajectory and observation, the five covariates are independent and uniformly distributed on $[-1,1]$, treatment is assigned independently with probability $1/2$, and the standardized outcome innovation is Gaussian:
\begin{equation}
\label{eq:exp-synthetic-draws}
X_{rj}\overset{\mathrm{i.i.d.}}{\sim}\operatorname{Unif}(-1,1),\quad j=1,\ldots,5,\qquad A_r\sim\operatorname{Bernoulli}(1/2),\qquad \xi_r\sim\mathcal N(0,1).
\end{equation}
Writing $s$ for the SNR setting, we define
\begin{equation}
\label{eq:exp-synthetic-outcomes}
\begin{aligned}
m(x)&=0.50x_3+0.25x_4-0.25x_5, & \tau^\star(x)&=\sqrt{3/2}\,(x_1+x_2),\\
Y_{s,r}(a)&=m(X_r)+(a-1/2)\tau^\star(X_r)+\sigma_{Y,s}\xi_r, & Y_{s,r}&=Y_{s,r}(A_r).
\end{aligned}
\end{equation}
Thus, all three settings share the same conditional treatment effect and the same optimal treatment rule $\pi^\star(x)=\mathbf 1\{\tau^\star(x)\geq0\}$. The generator uses seed $20260922$ and common draws of $X_r$, $A_r$, and $\xi_r$ across settings. Trajectories are generated independently within each setting, while corresponding trajectories across settings are paired through these common random numbers. Both potential outcomes use the same innovation within an observation; only the observed outcome enters policy training.

\paragraph{Sequential policy learner.} Let $\phi(x)=(1,x^\top)^\top\in\mathbb R^6$. We fit the outcome model $\widehat m_r(x,a)=\beta_r^\top\phi(x)+(a-1/2)\gamma_r^\top\phi(x)$ by a single stochastic-gradient update after each observation, starting from $\beta_0=\gamma_0=0$. With $\lambda_{\mathrm{syn}}=10^{-4}$, the implemented updates are
\begin{equation}
\label{eq:exp-synthetic-learner}
\begin{aligned}
e_r&=Y_{s,r}-\beta_{r-1}^\top\phi(X_r)-(A_r-1/2)\gamma_{r-1}^\top\phi(X_r),\\
\eta_{s,r}&=\eta_{0,s}(1+r/100)^{-0.6},\\
\beta_r&=(1-\eta_{s,r}\lambda_{\mathrm{syn}})\beta_{r-1}+\eta_{s,r}e_r\phi(X_r),\\
\gamma_r&=(1-\eta_{s,r}\lambda_{\mathrm{syn}})\gamma_{r-1}+\eta_{s,r}e_r(A_r-1/2)\phi(X_r).
\end{aligned}
\end{equation}
Both updates use the residual computed before updating either coefficient vector, and the penalty applies to all their coordinates, including the intercepts. The retained policy at checkpoint $t$ is $\pi_{s,t}(x)=\mathbf 1\{\gamma_{5t}^\top\phi(x)\geq0\}$. The first retained candidate processed five observations. These checkpoints group five successive one-observation updates. The settings are configured as
\begin{center}
\begin{tabular}{lrr}
\hline
Setting & Initial learning rate $\eta_{0,s}$ & Outcome-noise scale $\sigma_{Y,s}$\\
\hline
Low SNR & $0.030$ & $7.0$\\
Medium SNR & $0.150$ & $2.5$\\
High SNR & $0.200$ & $0.25$\\
\hline
\end{tabular}
\end{center}

\paragraph{Candidate-value evaluation.} We draw a common bank of $M=8192$ evaluation covariates independently of the training observations, using the same covariate distribution. The bank is reused for every checkpoint, trajectory, and SNR setting. Since the treatment effect is known in this experiment, we evaluate the policy-dependent outcome contrast directly and normalize it by the corresponding contrast of the oracle policy:
\begin{equation}
\label{eq:exp-synthetic-value}
\widehat V_{s,t}^{(n)}=
\frac{M^{-1}\sum_{\ell=1}^{M}\bigl[\pi_{s,t}^{(n)}(X_\ell^{\mathrm{ev}})-1/2\bigr]\tau^\star(X_\ell^{\mathrm{ev}})}{M^{-1}\sum_{\ell=1}^{M}\bigl[\pi^\star(X_\ell^{\mathrm{ev}})-1/2\bigr]\tau^\star(X_\ell^{\mathrm{ev}})}.
\end{equation}
The omitted prognostic term is common to every policy and therefore cancels in value differences. No evaluation outcomes or additional outcome noise are generated for this calculation. The resulting values are finite-bank estimates of a normalized population contrast, not exact population integrals. This policy-value normalization precedes, and differs from, the never-update versus always-update normalization of the scheduling objective described below.

The generator also summarizes the first $25$ checkpoint increments. It averages their across-trajectory means and divides by the square root of the average across-trajectory sample variances. The saved empirical SNRs are approximately $0.0496$, $0.1145$, and $0.2816$ for the low, medium, and high settings, respectively.

\paragraph{Homogeneous Gaussian experiment.} The phase-diagram script associated with \Cref{fig:phase-diagram} is configured with $T=512$, safety budget $\epsilon=0.35$, and risk-grid width $\Delta=0.001$. It uses $50$ equally spaced drift values in $[0.5,3.0]$ and $50$ equally spaced volatility values in $[0.25,1.5]$. At every parameter pair, the experiment uses three data replications, each containing $4000$ independent trajectories. With candidates indexed here as $1,\ldots,T$, the simulated process is
\begin{equation}
\label{eq:exp-homogeneous-process}
V_1^{(b,n)}=0.15,\qquad V_{t+1}^{(b,n)}=V_t^{(b,n)}+\frac{\mu}{T}+\frac{\sigma}{T}Z_t^{(b,n)},\qquad t=1,\ldots,T-1,
\end{equation}
where the $Z_t^{(b,n)}$ are independent standard normal variables. The innovation generator uses seed $20260921$ and reuses the same innovation bank across all parameter pairs. It stores $T$ innovation rows per trajectory; only the first $T-1$ rows are needed because the terminal node has no policy value. In particular, the per-step noise standard deviation in \eqref{eq:exp-homogeneous-process} is $\sigma/T$, not $\sigma/\sqrt T$.

For each replication, the script estimates edge rewards and strict unsafe-update frequencies from its $4000$ value trajectories and solves the conservatively gridded dynamic program. It then evaluates that selected path using the analytic population matrices
\begin{equation}
\label{eq:exp-homogeneous-oracle}
\boldsymbol R_{ij}=\frac{j-i}{T}\left(0.15+\frac{\mu(i-1)}{T}\right),\qquad
\boldsymbol S_{ij}=\Phi\!\left(-\frac{\mu\sqrt{j-i}}{\sigma}\right)\quad(j\leq T),\qquad
\boldsymbol S_{i,T+1}=0.
\end{equation}
The displayed heatmaps average the three replication-specific path summaries. For a path with $K>0$ actual updates, the summaries are
\begin{equation}
\label{eq:exp-heatmap-summaries}
\bar p=\frac1K\sum_{k=0}^{K-1}\boldsymbol S_{t_k,t_{k+1}},\qquad
\bar h=\frac1{KT}\sum_{k=0}^{K-1}(t_{k+1}-t_k),\qquad
J=\sum_{k=0}^{K}\boldsymbol R_{t_k,t_{k+1}}.
\end{equation}
The initial and terminal nodes are excluded from $K$, and the terminal hold is excluded from both pre-update averages. If the selected path makes no updates, the display convention is $K=0$, $\bar p=0$, and $\bar h=1$. Thus, the plotted interval is a fraction of the horizon, whereas $h_T$ in the theory is measured in periods. The script also records total population risk, unrounded empirical risk, the coefficient of variation of pre-update intervals, and the indicator that population risk exceeds $\epsilon$. An oracle gridded schedule is computed separately from the population matrices. The exact unrounded oracle-label solver is optional and is not enabled in the default run.

\paragraph{Theoretical contours and correlations.} The default contours use the leading homogeneous formulas, with $I=\mu^2/(2\sigma^2)$ and $L=\log(T/\epsilon)$:
\begin{equation}
\label{eq:exp-heatmap-reference}
\begin{aligned}
h_{\mathrm{ref}}&=L/I, & K_{\mathrm{ref}}&=TI/L, & p_{\mathrm{ref}}&=\epsilon L/(TI),\\
\bar h_{\mathrm{ref}}&=L/(TI), & J_{\mathrm{ref}}&=J_{\mathrm{AU}}-\frac{\mu L}{2TI}, & J_{\mathrm{AU}}&=0.15+\frac{\mu(T-1)}{2T}.
\end{aligned}
\end{equation}
The default does not include the $\log L$ correction. A reference cell is masked if $h_{\mathrm{ref}}<5$, $h_{\mathrm{ref}}/T>0.15$, or its reference regret is negative; it is not clipped to an integer spacing or update count. The configured contour levels are $10,20,40,60$ for $K$, $0.005,0.01,0.02,0.04$ for $\bar p$, $0.01,0.02,0.04,0.08$ for $\bar h$, and $0.35,0.45,0.55,0.65$ for $J$, retaining only levels inside the finite range of the corresponding reference surface. For each quantity and solver, Pearson correlation is computed after flattening the cells on which both the reference and the solver summary are finite. The three replications are averaged before this comparison, rather than being treated as separate grid cells.

\paragraph{Introductory illustration.} \Cref{fig:intro} is constructed from the medium-SNR value bank. It selects the trajectory minimizing the mean squared distance to the ensemble mean after dividing each checkpoint's deviation by its across-trajectory sample standard deviation, with a denominator floor of $10^{-12}$. In the saved bank this is trajectory $361$, using zero-based trajectory identifiers. The three schedules update every $24$, $12$, and $6$ checkpoints. Unsafe markers compare successive deployed values strictly on the selected realized trajectory, whereas the displayed fractions of expected gain captured use the across-trajectory mean value sequence and the never-update/always-update normalization. Those fractions are clipped to $[0,1]$ for this illustration only.

\subsection{International Stroke Trial analysis}
\label{app:exp-ist}

\paragraph{Population, treatment, and outcome.} We use the corrected version-2 IST file \texttt{IST\_corrected.csv}. Its $19{,}435$ records yield $19{,}285$ analysis records after removing $150$ observations with missing or unknown six-month outcome codes: $97$ with \texttt{OCCODE}$=0$ and $53$ with \texttt{OCCODE}$=9$. Treatment is the randomized aspirin assignment, $A=1$ for \texttt{RXASP}=\texttt{Y} and $A=0$ for \texttt{RXASP}=\texttt{N}, with known propensity $1/2$. We define $Y=1$ for \texttt{OCCODE}$\in\{3,4\}$, corresponding to being alive and independent at six months, and $Y=0$ for codes $1$ or $2$, corresponding to death or dependence. The retained sample contains $9639$ aspirin-assigned and $9646$ non-aspirin-assigned patients, with $7160$ positive outcomes. We do not restrict this sample by a post-randomization final diagnosis. Concurrent randomized heparin assignment, treatment adherence, post-randomization outcomes, identifiers, geographic variables, and dates are not included among the policy covariates.

\paragraph{Covariates and preprocessing.} The cleaning script produces $35$ numeric covariates with the following encodings. Binary \texttt{Y}/\texttt{N} fields are coded as $1/0$; missing values in the binary clinical fields are filled with zero, with separate missingness indicators for atrial fibrillation, prior heparin, and prior aspirin.
\begin{center}
\begin{tabular}{p{0.32\linewidth}p{0.54\linewidth}r}
\hline
Source fields & Encoding & Count\\
\hline
\texttt{AGE}, \texttt{RDELAY}, \texttt{RSBP} & Age, delay to randomization, and systolic blood pressure in their original units & 3\\
\texttt{SEX} & Indicator for female sex & 1\\
{\raggedright\texttt{RSLEEP}, \texttt{RATRIAL}, \texttt{RCT}, \texttt{RVISINF}, \texttt{RHEP24}, \texttt{RASP3}\par} & Six binary clinical indicators and three missingness indicators & 9\\
\texttt{RCONSC} & Drowsy and unconscious indicators; fully alert is the reference & 2\\
\texttt{RDEF1}--\texttt{RDEF8} & Present and cannot-assess indicators for each deficit; absent is the reference & 16\\
\texttt{STYPE} & Indicators for \texttt{PACS}, \texttt{TACS}, \texttt{POCS}, and \texttt{OTH}; \texttt{LACS} is the reference & 4\\
\hline
\end{tabular}
\end{center}
The cleaner checks that the resulting model fields are numeric and complete, and rejects missing continuous measurements rather than imputing them. It creates a reproducibly shuffled stream using seed $20260922$. This ordering is a simulation device, not a clinical arrival-time ordering.

We split the retained records into $15{,}428$ development records and $3857$ evaluation records using a $20\%$ evaluation fraction, split seed $20260921$, and joint treatment--outcome strata encoded by $2A+Y$. Every covariate is centered and scaled using the development mean and population-standard-deviation convention, and the same transformation is applied to evaluation covariates. Coordinates with development standard deviation at most $10^{-10}$ would be removed; none is removed for the supplied split. The transformation is fixed before constructing learning trajectories, rather than recomputed at each checkpoint.

\paragraph{Bootstrap learning trajectories and checkpoints.} Using random-number seed $20260922$, we draw $N=500$ bootstrap streams with replacement from the development pool. Each stream contains $15{,}250$ sampled records. Along a stream, we fit policies on nested prefixes of sizes
\begin{equation}
\label{eq:exp-ist-checkpoints}
n_t=4000+250(t-1),\qquad t=1,\ldots,46.
\end{equation}
Consequently, the clinical scheduling horizon is $T=46$ policy checkpoints, not $60$ patient observations. All $15{,}428$ development records remain eligible for sampling; $15{,}250$ is the largest fitted bootstrap sample size. Bootstrap streams can share patient records, and a record can appear more than once within a stream. Every fitted policy is evaluated on the same held-out evaluation sample. The separate disjoint length-$60$ trajectory file written by the cleaner is not used by the frontier or schedule-diagram scripts; those scripts load \texttt{ist\_safe\_policy\_stream.csv} and construct the bootstrap streams just described.

\paragraph{Ridge outcome model and treatment rule.} For standardized covariates $x\in\mathbb R^{35}$, define $z(x,a)=(1,x^\top,a,ax^\top)^\top\in\mathbb R^{72}$. At checkpoint $t$ of trajectory $n$, we use the ridge estimate
\begin{equation}
\label{eq:exp-ist-ridge}
\widehat b_t^{(n)}=
\left(\sum_{r=1}^{n_t}z_rz_r^\top+5P\right)^{-1}
\sum_{r=1}^{n_t}z_rY_r,
\qquad P=\operatorname{diag}(0,1,\ldots,1).
\end{equation}
The cross-products are updated cumulatively as each block of $250$ observations arrives. The penalty is applied to the unnormalized sum-of-squares problem and does not penalize the overall intercept. For evaluation, the arm-specific predictions are clipped to $[0,1]$ before defining the policy:
\begin{equation}
\label{eq:exp-ist-policy}
\widehat m_{t,a}^{(n)}(x)=\min\!\left\{1,\max\!\left\{0,z(x,a)^\top\widehat b_t^{(n)}\right\}\right\},\qquad
\pi_t^{(n)}(x)=\mathbf 1\!\left\{\widehat m_{t,1}^{(n)}(x)>\widehat m_{t,0}^{(n)}(x)\right\}.
\end{equation}
Thus ties between the clipped predictions assign no aspirin. The fitted outcome model supplies both the treatment rule and the nuisance predictions for evaluation; a separate outcome learner is not fitted for the evaluation score.

\paragraph{Held-out value estimates.} For each fitted candidate, we compute its augmented inverse-propensity-weighted value on the common evaluation sample $\mathcal E$:
\begin{equation}
\label{eq:exp-ist-aipw}
\widehat V_t^{(n)}=
\frac1{|\mathcal E|}\sum_{\ell\in\mathcal E}
\left[
\widehat m_{t,\pi_t^{(n)}(X_\ell)}^{(n)}(X_\ell)
+\frac{\mathbf 1\{A_\ell=\pi_t^{(n)}(X_\ell)\}}{p_{\pi_t^{(n)}(X_\ell)}}
\left\{Y_\ell-\widehat m_{t,A_\ell}^{(n)}(X_\ell)\right\}
\right],\qquad p_0=p_1=1/2.
\end{equation}
The final score and its average are not clipped. Neither policy fitting nor covariate standardization uses evaluation outcomes. However, the resulting value estimates are subsequently used both to select schedules and to report their frontier values; this experiment does not use another independent evaluation sample for the selected meta-policy.

\subsection{Baselines, budgets, and reporting}
\label{app:exp-reporting}

\paragraph{Shared scheduling inputs.} Within each setting, let $\widehat V_i^{(n)}$ denote the saved or estimated value of candidate $i$ in trajectory $n$. The frontier and schedule-diagram scripts use the direct mean $\widehat\nu_i=N^{-1}\sum_n\widehat V_i^{(n)}$ and the empirical matrices
\begin{equation}
\label{eq:exp-edge-estimates}
\widehat{\boldsymbol R}_{ij}=\frac{j-i}{T}\widehat\nu_i,\qquad
\widehat{\boldsymbol S}_{ij}=\frac1N\sum_{n=1}^{N}\mathbf 1\!\left\{\widehat V_j^{(n)}<\widehat V_i^{(n)}\right\}\quad(j\leq T),\qquad
\widehat{\boldsymbol S}_{i,T+1}=0.
\end{equation}
The implementation uses zero-based learned-policy nodes $0,\ldots,T-1$ and a separate sink $T$, equivalent to the paper's nodes $1,\ldots,T$ and sink $T+1$. In particular, the final learned candidate can contribute reward for its last deployment period. All methods use the same mean values and edge risks. A tie in estimated candidate values is not counted towards an unsafe update.

The common risk-grid width is $\Delta=0.001$. Write $C_{ij}=\Delta\lceil\widehat{\boldsymbol S}_{ij}/\Delta\rceil$ for the rounded resource in risk units, with terminal resource zero. The proposed method applies \Cref{alg:main} using these upward-rounded costs and a downward-rounded budget. Its implementation accumulates $(j-i)\widehat\nu_i$, rather than $(j-i)\widehat\nu_i/T$; this common factor of $T$ does not change the selected schedule or the normalized objective. It computes the frontier up to the rounded cost $\epsilon_{\mathrm{AU}}=\sum_{i=1}^{T-1}C_{i,i+1}$ of updating at every checkpoint, then reuses the retained terminal labels at the plotted budgets.

\paragraph{Baseline definitions.} The \emph{periodic} baseline enumerates every integer period $h\in\{1,\ldots,T\}$, deploys candidates $1,1+h,1+2h,\ldots$ while their indices do not exceed $T$, and then holds the final incumbent through the terminal node. It retains nondominated candidates with total rounded resource at most $\epsilon_{\mathrm{AU}}$, including the no-update schedule, and selects the highest-value feasible candidate at each plotted budget.

The \emph{equal-risk} baseline allows nonperiodic timing, but requires a common risk allowance for all actual updates. For a path $P$ with $K>0$ updates, its required budget is
\begin{equation}
\label{eq:exp-equal-risk}
\epsilon_{\mathrm{eq}}(P)=K\max_{(i,j)\in P,\,j\leq T}C_{ij}.
\end{equation}
Thus feasibility at $\epsilon$ is equivalent to requiring every actual transition to cost at most $\epsilon/K$. A dynamic program enumerates update counts $K=1,\ldots,T-1$ and tracks the largest edge resource and cumulative value, with dominance pruning within each count layer. The no-update path is included separately.

The \emph{earliest-safe} baseline sweeps a threshold over zero and every distinct finite rounded resource of an actual transition. For each threshold, it starts at the initial candidate and repeatedly selects the earliest later candidate whose edge resource does not exceed the threshold; if none exists, it retains the incumbent to the terminal node. Duplicate paths are removed. The method then selects the highest-value generated path whose total additive rounded resource fits the plotted budget. The threshold is therefore tuned over the complete finite set at which the greedy path can change, not over an externally chosen coarse threshold grid.

In the frontier routines, a Bellman predecessor is replaced only when its value improves by more than $10^{-12}$, so tied candidates retain the first predecessor encountered in increasing node order. The separate heatmap dynamic program uses $10^{-13}$ for this comparison. Frontier pruning sorts by increasing resource and decreasing value, and selection among tied feasible frontier values favors smaller resource.

\paragraph{Safety-budget grids.} For each of the four frontier panels, let $\epsilon_{\mathrm{first}}$ be the smallest rounded resource among the proposed method's retained nondominated labels that contain an actual update. This is a landmark of the retained frontier. Across panels, the common axis endpoints are
\begin{equation}
\label{eq:exp-budget-axis}
\epsilon_{\min}=\max\!\left\{0.92\min_s\epsilon_{\mathrm{first},s},\Delta/10\right\},\qquad
\epsilon_{\max}=\max\!\left\{1.08\max_s\epsilon_{\mathrm{AU},s},1.01\epsilon_{\min}\right\},
\end{equation}
where infinite first-update landmarks are omitted. For the saved outputs, the first-update landmarks for low SNR, medium SNR, high SNR, and IST are $0.203$, $0.030$, $0$, and $0.432$, respectively; their always-update costs are $27.536$, $27.559$, $24.928$, and $22.492$. These give the common range $[10^{-4},29.76372]$. The frontier plot uses $241$ budget points, and the schedule diagram uses $90$ points over the same range. Both are equally spaced after a base-$10$ symmetric-log transformation with linear threshold $\epsilon_{\min}$; since all displayed budgets are at or above that threshold, they are logarithmically spaced on the positive range. The schedule-diagram subsampling factor is one, so all $90$ points are retained.

\paragraph{Objective normalization and update coordinates.} For a schedule $P$, let $\widehat J_T(P)=\sum_{(i,j)\in P}\widehat{\boldsymbol R}_{ij}$. The plotted objective is
\begin{equation}
\label{eq:exp-frontier-normalization}
\widetilde J_T(P)=\frac{\widehat J_T(P)-\widehat\nu_1}{T^{-1}\sum_{i=1}^{T}\widehat\nu_i-\widehat\nu_1},
\end{equation}
which maps never updating to zero and updating at every checkpoint to one. The frontier script does not clip this quantity to one: selectively retaining candidates can outperform the always-update comparator under the unprojected empirical means. The schedule diagram plots each selected interior node at its zero-based checkpoint index divided by $T$; in the paper's one-based notation, candidate $i$ is displayed at $(i-1)/T$. The first candidate is treated as available at the start of the scheduling horizon, after its initial training observations have been collected. The two vertical landmarks in each schedule panel are $\epsilon_{\mathrm{first}}$ and $\epsilon_{\mathrm{AU}}$.

\paragraph{Replication and uncertainty.} The synthetic frontier panels each use one bank of $400$ learning trajectories, and the IST panel uses one bank of $500$ bootstrap learning trajectories with the fixed evaluation split above. Schedule selection and frontier reporting use the same estimated matrices; there is no additional independent trajectory bank for evaluating the selected schedules in these plots. The homogeneous experiment differs in that it evaluates selected paths using the population matrices in \eqref{eq:exp-homogeneous-oracle}. The frontier script defaults to zero outer bootstrap replications and therefore reports curves without uncertainty bands. Its optional outer bootstrap resamples whole columns of a value-trajectory matrix with replacement, re-estimates the matrices, and re-solves every method. When enabled, the displayed band is the original normalized curve plus or minus one sample standard deviation across those bootstrap curves, not a confidence interval; the outer-bootstrap random-number seed is $20260923$.

\end{document}